\begin{document}

\title{Smooth Robust Tensor Completion for Background/Foreground Separation with Missing Pixels: Novel Algorithm with Convergence Guarantee}

\author{\name Bo Shen \email boshen@vt.edu\\
    \name Weijun Xie \email wxie@vt.edu\\
       \name Zhenyu (James) Kong \email zkong@vt.edu\\
       \addr Department of Industrial and Systems Engineering\\
       Virginia Tech\\
       Blacksburg, Virginia 24061, USA}

\editor{}

\maketitle

\begin{abstract}
 Robust PCA (RPCA) and its tensor extension, namely, Robust Tensor PCA (RTPCA), provide an effective framework for background/foreground separation by decomposing the data into low-rank and sparse components, which contain the background and the foreground (moving objects), respectively.  However, in real-world applications, the presence of missing pixels is a very common but challenging issue due to errors in the acquisition process or manufacturer defects. RPCA and RTPCA are not able to recover the background and foreground simultaneously with missing pixels.   The objective of this study is to address the problem of background/foreground separation with missing pixels by combining the  video recovery,  background/foreground separation into a  single framework.  To achieve this, a smooth robust tensor completion  (SRTC) model is proposed to recover the data and decompose it into  the static background and smooth foreground, respectively.   An efficient algorithm  based on tensor proximal alternating minimization (tenPAM) is implemented  to solve the proposed model with global convergence guarantee under very mild conditions. Extensive experiments on real data demonstrate that the proposed method significantly outperforms the state-of-the-art approaches for background/foreground separation with missing pixels.
\end{abstract}

\begin{keywords}
Robust Tensor Completion (RTC), Spatio-temporal Continuity, Low-rankness, Tensor Proximal Alternating Minimization (tenPAM), Global Convergence.
\end{keywords}

	\setlength{\nomlabelwidth}{2cm} 
	\setlength{\nomitemsep}{-0.8\parsep}
\nomenclature[01]{$H,W,T$}{The height, width, and number of an image frame}%
\nomenclature[02]{$(r_1,r_2,r_3)$}{The multi-linear rank in Tucker Decomposition}%
\nomenclature[03]{$\lambda$}{The balance coefficient in the proposed objective function}%
\nomenclature[04]{$\Omega$}{The  index set of the observed elements}%
\nomenclature[05]{$\BFcalX$}{The order three tensor in $\mathbb{R}^{H\times W \times T}$ represented by $\{\BFX_1,\cdots,\BFX_T\}$}%
\nomenclature[06]{$\BFX_t$}{$t$-th image frame in $\mathbb{R}^{H\times W}$}%
\nomenclature[07]{$\BFcalL$}{The low-rank tensor (static video background)}%
\nomenclature[08]{$\BFcalS$}{The smooth tensor (smooth moving objects)}%
\nomenclature[09]{$\BFcalX \times_n\BFV$}{The mode-$n$ multiplication  of  a  tensor $\BFcalX$ with  a  matrix $\BFV$}%
\nomenclature[10]{$\BFcalC$}{The core tensor in Tucker decomposition}%
\nomenclature[11]{$\BFU_1,\BFU_2,\BFU_3$}{The factor matrices in Tucker decomposition}%
\nomenclature[12]{$\BFU$}{The set of  factor matrices in Tucker decomposition, namely, $\{\BFU_1,\BFU_2,\BFU_3\}$}%
\nomenclature[13]{$\BFU\BFU^{\top}$}{The set of  factor matrices in Tucker decomposition, namely, $\{\BFU_1\BFU_1^{\top},\BFU_2\BFU_2^{\top},\BFU_3\BFU_3^{\top}\}$}%
\nomenclature[14]{$\bm{f}$}{The auxiliary variable}%
\nomenclature[15]{$\BFD_h,\BFD_v,\BFD_t$}{Three vectorizations of the  difference operation along with the  horizontal, vertical, and temporal directions}%
\nomenclature[16]{$\BFD$}{The concatenated difference operation, namely, $[\BFD_h^{\top},\BFD_v^{\top},\BFD_t^{\top}]^{\top}$}%
\nomenclature[17]{$\lVert\cdot\rVert_F$}{The Frobenius norm}%
\nomenclature[18]{$\lVert\cdot\rVert_1$}{The $\ell_1$ norm}%
\nomenclature[19]{$\lVert\cdot\rVert_2$}{The $\ell_2$ norm}%
\nomenclature[20]{$\lVert\cdot\rVert$}{The 2-operator norm}%
\nomenclature[21]{$\lVert\cdot\rVert_{TV1}$}{The anisotropic total variation norm}%
\nomenclature[22]{$\rho$}{The positive coefficient for proximal term}%
\nomenclature[23]{$\texttt{vec}(\cdot)$}{The vectorization operator}%
\nomenclature[24]{$\texttt{ten}(\cdot)$}{The tensorization operator}%
\nomenclature[25]{$\bm{\lambda}^{\bm{f}}$}{The Lagrange multiplier vector and tensor}%
\nomenclature[26]{$\beta^{\bm{f}}$}{The positive penalty scalars}%
\nomenclature[27]{$c_1,c_2$}{The coefficients in the adaptive updating scheme for $\beta^{\bm{f}}$}%
\nomenclature[28]{$\texttt{fftn}(\cdot)$}{The fast 3D Fourier transform}%
\nomenclature[29]{$\texttt{ifftn}(\cdot)$}{The inverse fast 3D Fourier transform}%
\nomenclature[30]{$\textnormal{soft}(\cdot,\cdot)$}{The soft-thresholding operator}%
\nomenclature[31]{$\gamma$}{The parameter associated with convergence rate in ADMM}%
\nomenclature[32]{$\textnormal{Err}(\cdot)$}{The error of the auxiliary variable}%

    \printnomenclature
\section{Introduction} \label{sec: intro}
Background/foreground separation is a fundamental step for moving object detection in many video data applications~\citep{zhou2012moving}.  It is usually performed by separating the moving objects called ``foreground'' from the static objects called ``background''~\citep{bouwmans2017decomposition}.    In many real-world applications, the presence of missing pixels is a very common but challenging issue~\citep{firtha2008methods,liu2012tensor,ren2021incremental} due to errors in the acquisition process or manufacturer defects. \begin{figure}[!ht]
\centering
\includegraphics[width=1\textwidth]{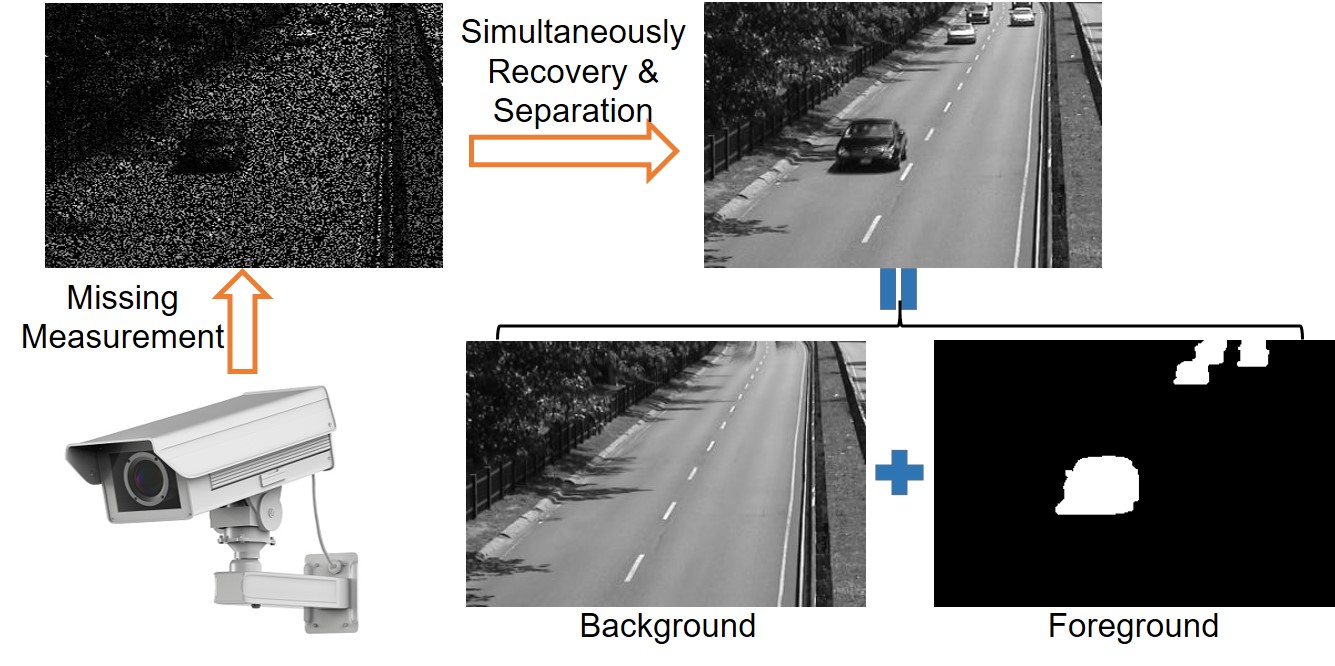} 
\caption{The framework of the imaging system with missing pixels.}
    \label{fig: general structure}
\end{figure}  In this paper, we are interested in background/foreground separation with missing pixels as shown in Figure~\ref{fig: general structure}, which aims to recover the original video with high fidelity and meanwhile accurately separate the moving objects from the video background based on partially observed pixels. The video imaging system first captures missing measurements from the scenes, and then transmits these measurements to the processing center for recovery and background/foreground separation.

In the current literature, research on background/foreground separation is based on decomposition of the video data into low-rank and sparse components. It is  an effective framework to separate the foreground  from the background,  which are modeled by the sparse   and low-rank components, respectively. Among them, the most representative problem formulation is the robust principal component analysis (RPCA)~\citep{candes2011robust}, which is a modification of the widely used statistical procedure named principal component analysis (PCA). RPCA decomposes the video data $\BFX$ into the sum of a low-rank component $\BFL$ and  a sparse component $\BFE$, where the low-rankness and sparsity are measured by the nuclear norm $\|\cdot\|_*$ and $\ell_1$ norm $\|\cdot\|_1$, respectively. One major disadvantage of RPCA is that it can only deal with 2-D matrix data since the nuclear norm $\|\cdot\|_*$ is designed for matrix. However, real-world data is usually multi-dimensional in nature, where rich information is stored in multi-way arrays known as tensors~\citep{kolda2009tensor}. For example, a greyscale video is  3-D data, which stacks multiple images along with the time  domain; a color image is also 3-D data that has three channels: red, green, and blue, where each channel is a 2-D image. To apply RPCA to these data sets, the multi-way tensor data has to be reconstructed into a matrix. Such a preprocessing usually leads to information loss and performance degradation since the structure information in the data is deteriorated. To address this issue, it is necessary to consider extending RPCA to manipulate the tensor data directly by taking advantage of its multi-dimensional structure. 

Contributed by the newly developed tensor multiplication scheme on t-SVD~\citep{kilmer2011factorization}, \citet{zhang2014novel} proposed the tensor tubal rank as well as the tensor nuclear norm for image denoising. Based on the tensor nuclear norm, \citet{lu2016tensor} developed robust tensor PCA (RTPCA) by extending RPCA from 2-D matrix to 3-D tensor data, aiming  to exactly recover a low-rank tensor contaminated by sparse errors.  More specifically,  it tries to recover the low-rank tensor $\BFcalL$ and sparse tensor $\BFcalE$ from the data tensor $\BFcalX$, which can be represented $\BFcalX=\BFcalL+\BFcalE$. 

Recent research on the missing value estimation problem in video data is focused on matrix completion (MC)~\citep{candes2009exact}. MC is able to recover the original signal $\BFcalX$ from a partially observed signal $\BFcalX_{\Omega}$ (or called the undersampled/incomplete signal), where $\Omega$ is a subset containing 2D coordinates of sampled entries. In this pioneering work of~\citep{candes2009exact}, the signal $\BFcalX$ is recovered by solving  a convex relaxation of the rank minimization problem based on the nuclear norm $\|\cdot\|_*$. Furthermore, \citet{zhou2017tensor} extended matrix completion to tensor completion (TC) based on the tensor nuclear norm.  However, none of RPCA/RTPCA and MC/TC is able to address the background/foreground separation with missing pixels because RPCA/RTPCA cannot  recover the video data and MC/TC cannot separate the background and foreground.   There are work on robust matrix completion/tensor completion~\citep{chen2015matrix,he2019robust,he2020robust,li2021robust,shang2017bilinear,huang2021robust}, which combines RPCA/RTPCA and MC/TC. Specifically, they aim to  reconstruct a signal from its noisy observations of a small, random subset of its entries. The problem with these methods is that their learned sparse component cannot represent the foreground since  the sparse component tends to set the foreground to zero for the positions of those missing pixels. As  a  result, the obtained foreground is incomplete due to the sparsity modeling.


The objective of this study is to address the problem of background/foreground separation with missing pixels by combining the  video recovery,  background/foreground separation into one single framework.  Compared to the conventional method, this new method need not fully sense all the video pixels, and thus heavily reduces the computational and storage costs and even the energy consumption of imaging sensors. To achieve this objective, a smooth robust tensor completion (SRTC) is proposed to recover the data tensor $\BFcalX$ and decompose it into a low-rank tensor (background) $\BFcalL$ and a smooth tensor (foreground) $\BFcalS$, namely, $\BFcalX = \BFcalL + \BFcalS$. In the SRTC, the background is modeled by the low-rank Tucker decomposition~\citep{kolda2009tensor}. The  spatio-temporal continuity is applied to formulate the moving objects (foreground)~\citep{cao2015total,cao2016total,shen2021robust}. That is, the moving objects in video foreground are spatially continuous in both their support regions and their intensity values in these regions. Moreover, the moving objects are also temporally continuous among succeeding frames.  To summarize, the contributions of this paper are as follows: 
\begin{itemize}
\item Propose the  smooth robust tensor completion model for background/foreground separation with missing pixels by simultaneously recovering the tensor data  and decomposing it  into a low-rank tensor and a smooth tensor, respectively;
\item Implement an efficient tensor proximal alternating minimization (tenPAM) algorithm to solve the proposed model;
\item Analyze the convergence of the iterative sequence generated by the tenPAM algorithm and prove it to be globally convergent to the stationary points under very mild conditions.
\end{itemize}

The remainder of this paper is organized as follows. A brief review of notation and related research work is provided in Section~\ref{sec: notation and background}. The proposed model and algorithm to solve this model are introduced in Section~\ref{sec: proposed method}, followed by the convergence analysis in Section~\ref{sec: convergence analysis}. Numerical studies in  Section~\ref{sec: numerical study} are provided for testing and validation of the proposed method. Finally, the conclusions are discussed in Section~\ref{sec: conclusion}. 


\section{Notation and Research Background} \label{sec: notation and background}
In Section~\ref{subsec: tensor basis},   the notation and basics in multi-linear algebra used in this paper are reviewed. Then, the robust tensor PCA, tensor completion, and robust matrix/tensor completion are reviewed briefly in Section~\ref{subsec: related work}. Afterward, the research gaps in the existing work are identified in Section~\ref{subsec: reseach gap}.
\subsection{Notation and Tensor Basis} \label{subsec: tensor basis}
Throughout this paper, scalars are denoted by lowercase letters, e.g., $x$; vectors are denoted by lowercase boldface letters, e.g., $\bm{x}$; matrices are denoted by uppercase boldface, e.g., $\BFX$; and tensors are denoted by calligraphic letters, e.g., $\BFcalX$. The order of a tensor is the number of its modes or dimensions. A real-valued tensor of order $N$ is denoted by $\BFcalX \in \mathbb{R}^{I_1\times I_2 \times \dots \times I_N}$ and its entries by $\BFcalX(i_1,i_2,\cdots,i_N)$.  The multi-linear Tucker rank of an $N$-order tensor is the tuple of the ranks of the mode-$n$ unfoldings $\BFX_{(n)}\in\mathbb{R}^{I_n\times (I_1\times\cdots\times I_{n-1}\times I_{n+1}\times \cdots \times I_N)}$. The inner product of two same-sized tensors $\BFcalX$ and $\BFcalY$ is the sum of the products of their entries, namely,  $ \left\langle \BFcalX,\BFcalY \right\rangle =\sum_{i_1 }{\cdots \sum_{i_N }{\BFcalX \left(i_1,\dots ,i_N\right) \cdot \BFcalY \left(i_1,\dots ,i_N\right)}}$. Following  the definition of inner product, the Frobenius norm of a tensor $\BFcalX$ is defined as ${\left\|\BFcalX\right\|}_F=\sqrt{\left\langle \BFcalX,\BFcalX\right\rangle }$. 
 The mode-$n$ multiplication of a tensor $\BFcalX$ with a matrix $\BFU$ amounts to the multiplication of all mode-$n$ vector fibers with $\BFU$, namely,$(\BFcalX \times_n\BFU)(i_1,\cdots,i_{n-1},j_n,i_{n+1},\cdots,i_N)=\sum_{i_n}\BFcalX(i_1,\cdots,i_N)\cdot \BFU(j_n,i_n)$.
 Unfolding $\BFcalX$ along the $n$-mode is denoted as $\BFX_{\left(n\right)} \in {\mathbb{R}}^{I_n\mathbf{\times }\left(I_1\times \cdots \times I_{n-1}\times I_{n+1}\times \cdots \times I_N\right)}$, 
in particular, if $\BFcalX=\BFcalC \times_1 \BFU^{(1)} \times_2 \dots \times_N \BFU^{(N)}$ with $\BFcalC \in \mathbb{R}^{P_1\times \cdots \times P_N}$ and $\BFU$, then $\BFX_{(n)}= \BFU^{(n)}\BFC_{(n)}(\BFU^{(N)}\otimes \cdots \otimes \BFU^{(n+1)} \otimes \BFU^{(n-1)}\otimes \cdots \otimes\BFU^{(1)})^{\top}$, where $\otimes$ is the Kronecker product.
\subsection{Related Work} \label{subsec: related work}
In this subsection, three directions of related work to motivate the research in this paper are introduced here. 
\subsubsection{Robust Tensor PCA}   \label{subsubsec: RPCA}
As the tensor extension of the popular robust PCA~\citep{candes2011robust}, the recent proposed RTPCA~\citep{lu2016tensor} aims to recover the low-rank tensor $\BFcalL_0 \in \mathbb{R}^{I_1\times I_2\times I_3}$ and sparse tensor $\BFcalE_0 \in \mathbb{R}^{I_1\times I_2\times I_3}$ from their sum. RTPCA solves the following convex optimization problem
\begin{equation*} \label{eq: RTPCA convex}
\underset{\BFcalL,\BFcalS}{\min} \ \|\BFcalL\|_{TNN}+\lambda  \|\BFcalE\|_1, \
\textnormal{s.t.} \  \BFcalX=\BFcalL+\BFcalE,
\end{equation*}
where $\|\cdot\|_{TNN}$ is their proposed tensor nuclear norm, which is a convex relaxation of the tensor tubal rank. The tensor nuclear norm and tensor tubal rank  are defined based on the t-SVD proposed in~\citep{zhang2014novel}. 
Following this direction, to further exploit the low-rank structures in tensor data, \citet{liu2018improved} extracted a low-rank component for the core matrix whose entries are from the diagonal elements of the core tensor. Based on this idea, they defined a new tensor nuclear norm and proposed a creative algorithm to deal with RTPCA problems. Other than the work based on the tensor tubal rank, \citet{yang2020low} considered a new model for RTPCA based on  tensor train rank. These methods are applied to background/foreground separation, image/video denosing, etc.
\subsubsection{Tensor Completion}  \label{subsubsec: TC}
Motivated by tensor nuclear norm, \citet{zhou2017tensor} proposed a novel low-rank tensor factorization method for efficiently solving the 3-way tensor completion problem. It aims at exactly recovering a low-rank tensor $\BFcalX \in \mathbb{R}^{I_1\times I_2\times I_3}$  from an incomplete observation $\BFcalF \in \mathbb{R}^{I_1\times I_2\times I_3}$.  Accordingly,
its mathematical model is written as
\begin{equation*} \label{eq: TC convex}
\underset{\BFcalX}{\min} \ \|\BFcalX\|_{TNN}, \
\textnormal{s.t.} \  \BFcalP_{\Omega}(\BFcalX)= \BFcalP_{\Omega}(\BFcalF),
\end{equation*}
where $\Omega$ is the index set of the observed elements, $\BFcalP$ is a linear operator that extracts entries in $\Omega$  and fills
the entries not in $\Omega$ with zeros. In the optimization process, their method only needs to update two smaller tensors, which can be more efficiently conducted than computing t-SVD. Furthermore, they prove that the proposed alternating minimization algorithm can
converge to a Karush–Kuhn–Tucker point.
\subsubsection{Robust Matrix/Tensor Completion} \label{subsubsec: RTC}
Robust matrix completion aims to recover a low-rank matrix $\BFL\in \mathbb{R}^{I_1\times I_2}$ from a subset of noisy entries perturbed by complex noises. \citet{chen2015matrix}  provided a  robust matrix completion model as follows
\begin{equation*} \label{eq: robust MC convex}
\underset{\BFL,\BFE}{\min} \ \|\BFL\|_{*}+\lambda \|\BFE\|_{2,1}, \
\textnormal{s.t.} \  \BFcalP_{\Omega}(\BFX)= \BFcalP_{\Omega}(\BFL+\BFE),
\end{equation*}
where $\|\cdot\|_{*}$ is the matrix nuclear norm, and $\|\cdot\|_{2,1}$ is  the sum of the column $\ell2$ norms of a matrix  and a convex surrogate of its column sparsity. \citet{fan2017fast} proposed novel bilinear factor matrix norm minimization models by defining the double nuclear norm and Frobenius/nuclear hybrid norm. \citet{he2019robust} proposed a novel robust and fast matrix completion method based on the maximum correntropy criterion, which is extended to the tensor version in the work of \citep{he2020robust}.  \citet{li2021matrix} considered column outliers and sparse noise. The $\ell_{2,1}$ norm based objective function makes the recovered matrix keeps a low-rank structure and lets the algorithm robust to column outliers, while the regularization term based on $\ell_1$ norm can alleviate the influence of sparse noise. \citet{huang2021robust} proposed robust tensor ring completion, where the low-rank tensor component is constrained by the weighted sum of nuclear norms of its balanced unfoldings, while the sparse component is regularized by its $\ell_1$ norm.
\subsection{Research Gap Identification} \label{subsec: reseach gap}
In real-world applications, the video data often contains missing pixels due to errors in the acquisition process or manufacturer defects. If the RPCA/RTPC~\citep{candes2011robust,lu2016tensor} in Section~\ref{subsubsec: RPCA} is applied to the video with missing pixels,  the background and foreground are incomplete since RPCA/RTPCA cannot recover missing pixels. If matrix/tensor completion in Section~\ref{subsubsec: TC} is applied, the background and foreground cannot be separated because they cannot decompose the video data. If robust matrix/tensor completion in Section~\ref{subsubsec: RTC} is applied, the foreground recovery is not guaranteed. This is due to the fact that the foreground is represented by the sparse component. Specifically, if a part of the foreground is missing in the data, there is no way to recover the missing part since the optimization problem will set this part to zero because it is modeled by the sparse component. Therefore, this work seeks to address these research gaps  by devising a new smooth robust tensor completion (SRTC) model. The proposed model can be considered as separating background/foreground  together with video recovery by providing a new decomposition methodology with missing pixels.

\section{Proposed Method} \label{sec: proposed method}
In Section \ref{subsec: proposed formulation},  the proposed smooth RTC for the background/foreground  separation with missing pixels is presented. Specifically, the low-rankness and spatio-temporal continuity are formulated by the Tucker decomposition and total variation (TV) regularization, respectively. In Section \ref{subsec: PAM algorithm}, an efficient algorithm based on proximal alternating minimization (PAM)~\citep{attouch2010proximal} is designed to solve the proposed model.

\subsection{Proposed Model} \label{subsec: proposed formulation}
Throughout this work,  it is focused on the video that can be represented as a third-order tensor $\BFcalX \coloneqq \{\BFX_1,\cdots,\BFX_T\} \in \mathbb{R}^{H\times W \times T}$, where each matrix $\BFX_t\in \mathbb{R}^{H\times W}$ represents $t$-th image frame $t=1,\dots,T$. $H$, $W$, and $T$ denote the height, width of an image frame, and the number of image frames, respectively. The three modes of  tensor $\BFcalX$ are height, width and time of the video.  In the static background, the image frames keep unchanged along with the time domain. This can be achieved by restricting  $\BFcalL$ to be a low-rank tensor in the time  domain. For the moving objects in the video foreground, they are   continuous spatially and temporally so that they can be represented as a smooth tensor $\BFcalS$.  

 \begin{figure}[!ht]
\centering
\includegraphics[width=1\textwidth]{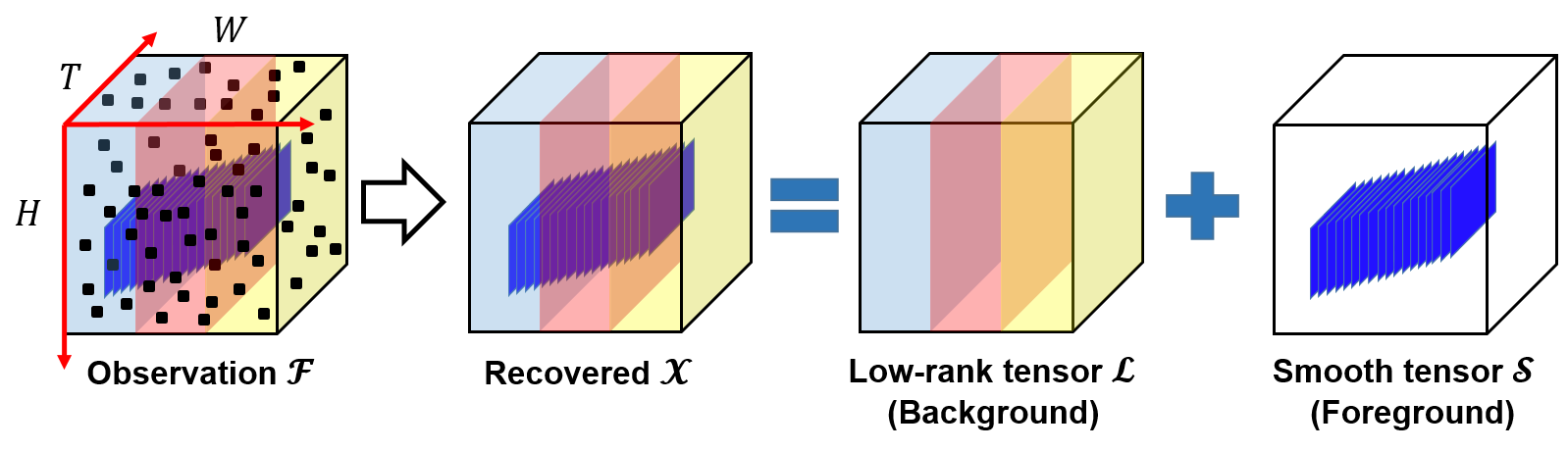}  
\caption{Illustration of the decomposition strategy of a video in the proposed method.}
    \label{fig: Illustration for the decomposition of video volume}
\end{figure} 
As discussed in Section \ref{subsec: reseach gap}, for a video with missing pixels, it is necessary to recover the video data  $\BFcalX$ and decompose it into the low-rank tensor $\BFcalL$ (the static video background), the smooth tensor $\BFcalS$ (the smooth moving objects in the foreground), respectively. In the static background, the image frames keep unchanged along with the time domain. This can be achieved by restricting  $\BFcalL$ to be a low-rank tensor in the time  domain. For the moving objects in the video foreground, they are   continuous spatially and temporally so that they can be represented as a smooth tensor $\BFcalS$.   An illustration of the video decomposition strategy for our proposed method is provided in  Figure~\ref{fig: Illustration for the decomposition of video volume}.  Specifically, it has the following form $\BFcalX = \BFcalL + \BFcalS$ as mentioned in Section~\ref{sec: intro}.

 To model the low-rankness, the static background $\BFcalL$ is approximated by the well-known Tucker decomposition~\citep{kolda2009tensor} with rank-$(r_1,r_2,r_3)$. Specifically, the Tucker decomposition has the following form
\begin{equation} \label{eq: tucker decomposition}
    \BFcalL=\BFcalC\times_1 \BFU^{(1)} \times_2 \BFU^{(2)} \times_3 \BFU^{(3)},
\end{equation} 
where  $\BFU^{(1)}\in \mathbb{R}^{H\times r_1},r_1<H$ and $\BFU^{(2)} \in \mathbb{R}^{W\times r_2},r_2<W $ are orthogonal factor matrices for two spatial domains,  $\BFU^{(3)} \in \mathbb{R}^{T\times r_3},r_3<T$ is the orthogonal factor matrix for the temporal domain, core tensor $\BFcalC \in \mathbb{R}^{r_1 \times r_2 \times r_3}$ interacts these factors.   By formulating the low-rank tensor $\BFcalL$ using Tucker decomposition, it can reconstruct a more accurate video background than the low-rank model based on matrices. Because the Tucker decomposition considers not only the spatial   but also  the temporal correlations  in the video background. 

The smooth tensor $\BFcalS$ (moving objects) is assumed to have the  spatio-temporal continuity property  such that the foreground moves smoothly and coherently in the spatial and temporal directions. In the literature, imposing the  spatio-temporal continuity constraints on moving objects in the foreground is well studied and proven to be effective~\citep{cao2015total,cao2016total}. To measure the sensitivity to change of a quantity function, the derivative is often applied in mathematics. For discrete functions, difference operators are the approximation to derivative. Given a third-order tensor $\BFcalS\in \mathbb{R}^{H\times W\times T}$,  $\BFcalS(x,y,t)$  indicates the intensity of position $(x, y)$ at time $t$, and 
\begin{equation*}
    \begin{aligned}
    \BFcalS_h(x,y,t) & = \BFcalS_h(x+1,y,t) - \BFcalS_h(x,y,t), \\
      \BFcalS_v(x,y,t) & = \BFcalS_v(x,y+1,t) - \BFcalS_v(x,y,t), \\
        \BFcalS_t(x,y,t) & = \BFcalS_t(x,y,t+1) - \BFcalS_t(x,y,t)
\end{aligned}
\end{equation*}
 denote three difference operation results of position $(x, y)$ at time $t$ with periodic boundary conditions along with the horizontal, vertical, and temporal directions, respectively. For simplicity of computation, all the entries of $\BFcalS$ can be stacked into a column vector $\bm{s}=\texttt{vec}(\BFcalS)$, in which $\texttt{vec}(\cdot)$ represents the vectorization operator.  $\BFD_h\bm{s}=\texttt{vec}(\BFcalS_h)$, $\BFD_v\bm{s}=\texttt{vec}(\BFcalS_v)$, and $\BFD_t\bm{s}=\texttt{vec}(\BFcalS_t)$ are used to denote the vectorizations of the three difference operation results, respectively, in which $\BFD_h$, $\BFD_v$, and $\BFD_t\in \mathbb{R}^{HWT  \times HWT }$. Furthermore, $\BFD\bm{s}=[\BFD_h\bm{s}^{\top},\BFD_v\bm{s}^{\top},\BFD_t\bm{s}^{\top}]^{\top}$ is used to represent the concatenated difference operation, in which $\BFD = [\BFD_h^{\top},\BFD_v^{\top},\BFD_t^{\top}]^{\top}\in \mathbb{R}^{3 HWT  \times HWT }$. Note that the $i$-th element in $\BFD_h\bm{s}$, $\BFD_v\bm{s}$, and $\BFD_t\bm{s}$ (namely, $[\BFD_h\bm{s}]_i$, $[\BFD_v\bm{s}]_i$, and $[\BFD_t\bm{s}]_i$) describes the intensity changes of $i$-th point in $\bm{s}$ along with the horizontal, vertical, and temporal directions,  respectively. To quantify the changes of intensity, any vector norm of  $\big[[\BFD_h\bm{s}]_i, [\BFD_v\bm{s}]_i, [\BFD_t\bm{s}]_i\big]$ can be applied. The commonly used vector norm is the $\ell_1$ norm. Specifically,  the anisotropic total variation norm is defined as
\begin{equation} \label{eq: definition of TV1}
\|\BFcalS\|_{TV1}=\sum_{i} (|[\BFD_h\bm{s}]_i|+|[\BFD_v\bm{s}]_i|+|[\BFD_t\bm{s}]_i|),
\end{equation}
which is the $\ell_1$ norm of $[\BFD_h\bm{s},\BFD_v\bm{s},\BFD_t\bm{s}]^{\top}$. The total variation regularization has been widely used in image and video denoising and restoration~\citep{wang2017hyperspectral, zhang2019hyperspectral} due to its superiority in detecting discontinuous changes in image processing.

By combining the advantages of Tucker decomposition for the low-rank tensor and total variation regularizations for the smooth tensor, the proposed problem has the following formulation
\begin{equation} \label{eq: proposed formulation}
\begin{aligned}
\underset{\BFcalC,\BFU,\BFcalS,\BFcalX}{\min} \quad & \|\BFcalX-\BFcalS-\BFcalC\times_1 \BFU^{(1)} \times_2 \BFU^{(2)} \times_3 \BFU^{(3)}\|_F^2+ \lambda\|\BFcalS\|_{TV1} \\ 
\textnormal{s.t.} \quad  & \BFcalP_{\Omega}(\BFcalX) = \BFcalP_{\Omega}(\BFcalF) , \\
                         &  \BFU^{(n)\top}\BFU^{(n)}=\BFI, n=1,2,3,
\end{aligned}
\end{equation}
where $\BFcalF$ is the partially observed tensor, $\Omega$ is the index set of the observed elements, the first term is the fitting error, and the second term is the regularization term to measure the spatio-temporal continuity of $\BFcalS$. For notational convenience, let the  core tensor $\BFcalC \in \mathbb{R}^{r_1\times r_2\times r_3}$. The set of multilinear subspaces, namely, $\BFU$, is defined as $\{(\BFU^{(1)},\BFU^{(2)},\BFU^{(3)}):\BFU^{(n)\top}\BFU^{(n)}=\BFI, n=1,2,3\}$.  $\lambda>0$ is the coefficient for the one regularization term in \eqref{eq: proposed formulation}. The optimization problem in \eqref{eq: proposed formulation} is based on the decision variables $\{\BFcalC,\BFU,\BFcalS,\BFcalX\}$.  The following proposition states that $\BFcalC$ can be projected out from the original formulation. 

\begin{proposition}\label{lemma: C closed-form}
Suppose $(\BFcalC^*,\BFU^*,\BFS^*,\BFcalX^*)$ is an optimal solution of the proposed formulation \eqref{eq: proposed formulation}, then
\begin{equation*}
    \BFcalC^*=(\BFcalX^* - \BFcalS^*)\times_1 \BFU^{(1)*\top} \times_2 \BFU^{(2)*\top} \times_3 \BFU^{(3)*\top}.
\end{equation*}
\end{proposition}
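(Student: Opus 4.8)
The plan is to exploit the fact that the core tensor $\BFcalC$ enters the objective of \eqref{eq: proposed formulation} only through the Frobenius fitting term; the regularizer $\lambda\|\BFcalS\|_{TV1}$ and both constraints are entirely independent of $\BFcalC$. Consequently, by a blockwise (partial) optimality argument, if $(\BFcalC^*,\BFU^*,\BFcalS^*,\BFcalX^*)$ is optimal, then freezing $\BFU^*,\BFcalS^*,\BFcalX^*$ at their optimal values forces $\BFcalC^*$ to be a minimizer of the \emph{unconstrained} quadratic problem $\min_{\BFcalC}\,\|\BFcalY-\BFcalC\times_1\BFU^{(1)*}\times_2\BFU^{(2)*}\times_3\BFU^{(3)*}\|_F^2$, where I abbreviate $\BFcalY\coloneqq\BFcalX^*-\BFcalS^*$. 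It therefore suffices to solve this least-squares problem in closed form and to verify that its minimizer equals the claimed expression.

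First I would vectorize. Using the standard identity relating mode products to Kronecker products, $\texttt{vec}(\BFcalC\times_1\BFU^{(1)*}\times_2\BFU^{(2)*}\times_3\BFU^{(3)*})=(\BFU^{(3)*}\otimes\BFU^{(2)*}\otimes\BFU^{(1)*})\,\texttt{vec}(\BFcalC)$, so with $\Phi\coloneqq\BFU^{(3)*}\otimes\BFU^{(2)*}\otimes\BFU^{(1)*}$, $\bm{c}\coloneqq\texttt{vec}(\BFcalC)$, and $\bm{y}\coloneqq\texttt{vec}(\BFcalY)$ the subproblem becomes $\min_{\bm{c}}\|\bm{y}-\Phi\bm{c}\|_2^2$. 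The orthogonality constraints $\BFU^{(n)*\top}\BFU^{(n)*}=\BFI$ are exactly what make this clean: by the mixed-product property of the Kronecker product, $\Phi^\top\Phi=(\BFU^{(3)*\top}\BFU^{(3)*})\otimes(\BFU^{(2)*\top}\BFU^{(2)*})\otimes(\BFU^{(1)*\top}\BFU^{(1)*})=\BFI$, i.e. $\Phi$ has orthonormal columns.

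Given this, the normal equations $\Phi^\top\Phi\,\bm{c}=\Phi^\top\bm{y}$ collapse to $\bm{c}=\Phi^\top\bm{y}$, which is the \emph{unique} minimizer because the (positive definite) Gram matrix is the identity. Finally I would undo the vectorization: applying the same Kronecker identity in reverse to $\Phi^\top=\BFU^{(3)*\top}\otimes\BFU^{(2)*\top}\otimes\BFU^{(1)*\top}$ yields $\BFcalC^*=\BFcalY\times_1\BFU^{(1)*\top}\times_2\BFU^{(2)*\top}\times_3\BFU^{(3)*\top}$, which is precisely the stated formula after substituting $\BFcalY=\BFcalX^*-\BFcalS^*$.

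This argument is essentially routine, so I do not anticipate a deep obstacle; the only points requiring care are (i) correctly invoking the blockwise-optimality reduction, namely that nothing but the fitting term depends on $\BFcalC$, and (ii) the vectorization/Kronecker bookkeeping, in particular the ordering of the factors in $\Phi$ (which is dictated by the vectorization convention) and the use of orthonormality via the mixed-product property to reduce the Gram matrix to $\BFI$. An equivalent route avoiding vectorization is to set the gradient of the Frobenius term with respect to $\BFcalC$ to zero and simplify using $\BFU^{(n)*\top}\BFU^{(n)*}=\BFI$; either way, the orthonormality of the factor matrices is the crucial ingredient that turns a generic pseudoinverse into the clean projection formula.
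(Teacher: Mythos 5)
Your proposal is correct and is essentially the paper's argument: the paper simply writes down the first-order optimality condition in $\BFcalC$ with the other blocks frozen, which is the tensor form of your normal equations. The only difference is that you make explicit, via the Kronecker mixed-product property, that the Gram matrix collapses to the identity because $\BFU^{(n)*\top}\BFU^{(n)*}=\BFI$ — a simplification the paper's displayed gradient uses silently — so your write-up is a slightly more careful rendering of the same one-step proof.
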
 
\begin{proof}
Suppose the optimal $\BFU^*,\BFS^*,\BFcalX^*$ are given. Then the first-order optimality condition with respect to $\BFcalC$ is
\begin{equation*}
     2\big[-(\BFcalX^* - \BFcalS^*)\times_1 \BFU^{(1)*\top} \times_2 \BFU^{(2)*\top} \times_3 \BFU^{(3)*\top}+\BFcalC\big]=\bm{0}.
\end{equation*} 
We must have
	\[\BFcalC^*=(\BFcalX^* - \BFcalS^*)\times_1 \BFU^{(1)*\top} \times_2 \BFU^{(2)*\top} \times_3 \BFU^{(3)*\top}.\] 
\end{proof}

Proposition~\ref{lemma: C closed-form} implies that there are only three types of decision variables $\BFU,\BFcalS, \BFcalX$ in the proposed formulation  \eqref{eq: proposed formulation}, namely, by projecting out variables $\BFcalC$, which can be simplified as
\begin{equation} \label{eq: proposed formulation-new}
\begin{aligned}
\underset{\BFU,\BFcalS,\BFcalX}{\min} \quad & \hat{F}(\BFU\BFU^\top,\BFcalS,\BFcalX) \\
\textnormal{s.t.} \quad  & \BFcalP_{\Omega}(\BFcalX) = \BFcalP_{\Omega}(\BFcalF) , \\
                         &  \BFU^{(n)\top}\BFU^{(n)}=\BFI, n=1,2,3,
\end{aligned}
\end{equation}
where $\hat{F}(\BFU\BFU^\top,\BFcalS,\BFcalX):= \|\BFcalX-\BFcalS-(\BFcalX-\BFcalS)\times_1 \BFU^{(1)}\BFU^{(1)\top} \times_2 \BFU^{(2)}\BFU^{(2)\top} \times_3 \BFU^{(3)}\BFU^{(3)\top}\|_F^2 + \lambda\|\BFcalS\|_{TV1}$ and $\BFU\BFU^\top$ is defined as $\{(\BFU^{(1)}\BFU^{(1)\top},\BFU^{(2)}\BFU^{(2)\top},\BFU^{(3)}\BFU^{(3)\top}):\BFU^{(n)\top}\BFU^{(n)}=\BFI, n=1,2,3\}$. 

The following proposition shows the representation of $\BFU\BFU^\top$ is unique but not for $\BFU$. 
	\begin{proposition} \label{lemma: projection matrices}
	Given $\BFW^{\top}\BFW=\BFV^{\top}\BFV=\BFI, \BFW,\BFV\in \mathbb{R}^{I\times r}, r< I $, $\|\BFW\BFW^{\top} - \BFV\BFV^{\top}\|_F^2=0$ if and only if there exists an orthogonal matrix $\BFR\in \mathbb{R}^{r\times r}$ such that $\BFW=\BFV\BFR$.
	\end{proposition}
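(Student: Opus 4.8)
The plan is to first observe that since the Frobenius norm is a genuine norm, the condition $\|\BFW\BFW^{\top}-\BFV\BFV^{\top}\|_F^2=0$ holds if and only if $\BFW\BFW^{\top}=\BFV\BFV^{\top}$. Both sides are the orthogonal projectors onto the column spaces of $\BFW$ and $\BFV$, so this equality says precisely that the two $r$-dimensional subspaces coincide; the task is then to exhibit the orthogonal change of basis $\BFR$ that carries one orthonormal frame onto the other. I would state the two implications separately and treat the "if'' direction as the warm-up.

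The reverse implication is immediate. If $\BFW=\BFV\BFR$ with $\BFR^{\top}\BFR=\BFI$, then $\BFW\BFW^{\top}=\BFV\BFR\BFR^{\top}\BFV^{\top}=\BFV\BFV^{\top}$, so the Frobenius distance vanishes. Here I only use $\BFR\BFR^{\top}=\BFI$, which for a square $r\times r$ matrix follows from $\BFR^{\top}\BFR=\BFI$.

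For the forward implication the main move is to guess the correct candidate, namely $\BFR:=\BFV^{\top}\BFW\in\mathbb{R}^{r\times r}$. Using the hypothesis $\BFV\BFV^{\top}=\BFW\BFW^{\top}$ together with the column-orthonormality $\BFW^{\top}\BFW=\BFI$, the factorization is recovered by $\BFV\BFR=\BFV\BFV^{\top}\BFW=\BFW\BFW^{\top}\BFW=\BFW$. It then remains to check that $\BFR$ is orthogonal, which is where I would invoke the projector identity twice: $\BFR^{\top}\BFR=\BFW^{\top}\BFV\BFV^{\top}\BFW=\BFW^{\top}\BFW\BFW^{\top}\BFW=\BFI$, and symmetrically $\BFR\BFR^{\top}=\BFV^{\top}\BFW\BFW^{\top}\BFV=\BFV^{\top}\BFV\BFV^{\top}\BFV=\BFI$. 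Since $\BFR$ is square, either one of these relations already forces orthogonality, but I would record both for completeness.

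The argument reduces entirely to the single substitution $\BFV\BFV^{\top}=\BFW\BFW^{\top}$ plus the two normalization relations $\BFW^{\top}\BFW=\BFV^{\top}\BFV=\BFI$, so there is no genuine analytic obstacle. The only step that requires insight rather than computation is recognizing that $\BFR=\BFV^{\top}\BFW$ is the right object; once that choice is made, every verification collapses to a one-line manipulation. I would also remark, as motivation for the non-uniqueness claim in the surrounding text, that any such $\BFR$ is determined by $\BFW$ and $\BFV$ but is free to be any orthogonal matrix, so $\BFW$ itself is only determined up to right multiplication by $O(r)$, whereas $\BFW\BFW^{\top}$ is canonical.
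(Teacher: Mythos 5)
Your proof is correct, and it takes a more constructive route than the paper's. The paper handles the "only if" direction by observing that $\BFW\BFW^{\top}\BFV=\BFV\BFV^{\top}\BFV=\BFV$, concluding that the columns of $\BFV$ are eigenvectors of $\BFW\BFW^{\top}$ with eigenvalue $1$ and hence that $\BFW$ and $\BFV$ span the same column space, with a citation to a linear algebra text; it never actually exhibits the orthogonal matrix $\BFR$. You instead write down the candidate $\BFR=\BFV^{\top}\BFW$ and verify $\BFV\BFR=\BFW$ and $\BFR^{\top}\BFR=\BFR\BFR^{\top}=\BFI$ by direct substitution of the projector identity $\BFV\BFV^{\top}=\BFW\BFW^{\top}$. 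Your version is self-contained and proves the statement exactly as claimed, whereas the paper's argument stops one step short of the asserted conclusion and outsources it to a reference; the paper's phrasing does, however, make the geometric content (equality of column spaces) more visible. One small caveat on your closing remark: for fixed $\BFW$ and $\BFV$ the matrix $\BFR$ with $\BFW=\BFV\BFR$ is in fact unique (namely $\BFV^{\top}\BFW$, since $\BFV$ has full column rank); the $O(r)$ freedom lives in the choice of $\BFW$ representing a given projector $\BFW\BFW^{\top}$, not in $\BFR$ once both frames are fixed. This does not affect the validity of the proof.
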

\begin{proof}
    	if: This follows by the straightforward calculation.
		
	only if: $\|\BFW\BFW^{\top} - \BFV\BFV^{\top}\|_F^2=0$ implies that $\BFW\BFW^{\top} =\BFV{\BFV}^{\top}$.   Since $\BFW^{\top}\BFW =\BFV^{\top}\BFV=\BFI$, we further have
\[	\BFW\BFW^{\top} \BFV  =\BFV\BFV^{\top} \BFV= \BFV, \]
namely, columns of $\BFV$ are distinct eigenvectors of $\BFW\BFW^{\top}$, where their corresponding eigenvalues are equal to 1. Therefore, $\BFW$ and $\BFV$ have the same column spaces~\citep{strang2016introduction}.
\end{proof}

\subsection{Tensor Proximal Alternating Minimization Algorithm} \label{subsec: PAM algorithm}
Note that \eqref{eq: proposed formulation-new} is a multivariate optimization problem. Alternating minimization  algorithm~\citep{attouch2013convergence} is commonly used to solve multivariate optimization problems due to its simplicity and efficiency. To enhance the theoretical convergence and numerical stability of the alternating minimization algorithm, proximal terms are suggested to add in sub-problems arising from the alternating minimization algorithm, which is called  proximal alternating minimization (PAM) algorithm. In this section, a tensor PAM (tenPAM) algorithm is developed for solving  \eqref{eq: proposed formulation-new}.

Given the solution from $k$-th iterations $(\BFU_k\BFU_k^{\top},\BFcalS^k,\BFcalX^k)$ for the problem \eqref{eq: proposed formulation-new}, then the PAM iterates the following three parts: (1) fix $(\BFcalS^k,\BFcalX^k)$, we solve the optimization problem over $\BFU$ in Section~\ref{subsec: optimization U} to obtain $\BFU_{k+1}$; (2) fix  $(\BFU_{k+1},\BFcalX^k)$, we solve the optimization problem over $\BFcalS$ in Section~\ref{subsec: optimization S} to obtain $\BFcalS^{k+1}$; (3) fix $(\BFU_{k+1},\BFcalS^{k+1})$, we solve the optimization problem over $\BFcalX$ in Section~\ref{subsec: optimization X} to obtain $\BFcalX^{k+1}$.

\subsubsection[alternative title goes here]{Optimization  over $\BFU$} \label{subsec: optimization U}
At iteration $k+1$, assuming that $\BFcalS^k,\BFcalX^k$ is fixed, we solve the below problem to obtain $\BFU^{(n)}_{k+1}$ in a sequential way ($n=1,2,3$):
 \begin{equation} \label{eq: Update U^(n)}
 \begin{aligned}
 & \underset{\BFU^{(n)}}{\text{min}}
 & &\hat{F}(\{\BFU^{(i)}_{k+1}\BFU_{k+1}^{(i)\top}\}_{i<n},\BFU^{(n)}\BFU^{(n)\top},\{\BFU^{(i)}_{k}\BFU_{k}^{(i)\top}\}_{i>n},\BFcalS^k,\BFcalX^k) \\
& & &+  \rho\|\BFU^{(n)}\BFU^{(n)\top}-\BFU^{(n)}_{k}\BFU^{(n)\top}_{k}\|_F^2 \\
& \text{s.t.}
& & \BFU^{(n)\top}\BFU^{(n)}=\BFI,
 \end{aligned}
 \end{equation}
where $\rho\|\BFU^{(n)}\BFU^{(n)\top}-\BFU^{(n)}_{k}\BFU^{(n)\top}_{k}\|_F^2$ is the proximal term, $\rho>0$ is the positive coefficient. Problem \eqref{eq: Update U^(n)} can be equivalently formulated as a standard trace optimization problem as
\begin{equation} \label{eq: Update U^(n) 1}
    \BFU^{(n)}_{k+1} \in \arg\max_{\BFU^{(n)}}\left\{ \mathrm{Tr}(\BFU^{(n)\top} \BFPsi_k^{(n)} \BFU^{(n)}) - \rho\|\BFU^{(n)}\BFU^{(n)\top}-\BFU^{(n)}_{k}\BFU^{(n)\top}_{k}\|_F^2:\BFU^{(n)\top}\BFU^{(n)}=\BFI\right\},
\end{equation}
where  $\BFPsi_{k}^{(n)}=(\BFX_{(n)}^k -\BFS_{(n)}^k) \cdot \BFU_{\BFPsi^{(n)}_k}\cdot {\BFU_{\BFPsi^{(n)}_k}}^{\top}   \cdot  (\BFX_{(n)}^k -\BFS_{(n)}^k)^{\top}$ and $\BFU_{\BFPsi^{(n)}_k}=\BFU^{(3)}_k\otimes \cdots \otimes \BFU^{(n+1)}_k \otimes \BFU^{(n-1)}_{k+1}\otimes \cdots \otimes\BFU^{(1)}_{k+1}$.
The following lemma shows that we can absorb the penalty term so that problem \eqref{eq: Update U^(n) 1} has a closed-form optimal solution by redefining matrices $\{\BFPsi_k^{(n)}\}_{n\in [3]}$ for all $k$.
\begin{lemma}\label{lemma: Equivalent Trace Optimization}
Problem \eqref{eq: Update U^(n) 1} is equivalent to
\begin{equation} \label{eq: Update U^(n) final}
    \BFU^{(n)}_{k+1}\in \arg\max_{\BFU^{(n)}} \left\{\mathrm{Tr}(\BFU^{(n)\top} \BFPhi_k^{(n)} \BFU^{(n)}):\BFU^{(n)\top}\BFU^{(n)}=\BFI\right\},
\end{equation}
where $\BFPhi^{(n)}_{k}= \BFPsi_k^{(n)} -2\rho (\BFI - \BFU^{(n)}_{k} {\BFU^{(n)\top}_{k}})$. In addition, problem~\eqref{eq: Update U^(n) final} is a standard eigen-decomposition problem, which has a closed-form solution. 
\end{lemma}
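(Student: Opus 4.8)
The plan is to establish the stated equivalence by showing that the two objective functions in \eqref{eq: Update U^(n) 1} and \eqref{eq: Update U^(n) final} coincide pointwise on the common feasible set $\{\BFU^{(n)}:\BFU^{(n)\top}\BFU^{(n)}=\BFI\}$; since the constraints are identical, the two problems then share the same set of maximizers. Writing $\BFU:=\BFU^{(n)}$, $\BFU_k:=\BFU^{(n)}_k$ and letting $r_n$ be the number of columns of $\BFU^{(n)}$, the only term that needs processing is the proximal penalty $\rho\|\BFU\BFU^{\top}-\BFU_k\BFU_k^{\top}\|_F^2$, and I would expand it via $\|\BFU\BFU^{\top}-\BFU_k\BFU_k^{\top}\|_F^2=\|\BFU\BFU^{\top}\|_F^2-2\langle \BFU\BFU^{\top},\BFU_k\BFU_k^{\top}\rangle+\|\BFU_k\BFU_k^{\top}\|_F^2$. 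The key observation is that two of these three pieces are \emph{constant} on the feasible set.

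The algebraic heart of the argument is to exploit orthonormality to collapse the seemingly quartic terms. Because $\BFU^{\top}\BFU=\BFI$, one has $\|\BFU\BFU^{\top}\|_F^2=\mathrm{Tr}(\BFU\BFU^{\top}\BFU\BFU^{\top})=\mathrm{Tr}(\BFU\BFU^{\top})=r_n$, and identically $\|\BFU_k\BFU_k^{\top}\|_F^2=r_n$ since $\BFU_k$ is a previous iterate obeying the same constraint. Hence the penalty reduces to $2\rho r_n-2\rho\,\mathrm{Tr}(\BFU\BFU^{\top}\BFU_k\BFU_k^{\top})$, and the cyclic property of the trace recasts the cross term as $\mathrm{Tr}(\BFU\BFU^{\top}\BFU_k\BFU_k^{\top})=\mathrm{Tr}(\BFU^{\top}\BFU_k\BFU_k^{\top}\BFU)$. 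The final bookkeeping step is to reintroduce the constant $2\rho r_n$ as $2\rho\,\mathrm{Tr}(\BFU^{\top}\BFU)$, again using $\BFU^{\top}\BFU=\BFI$, so that it can be folded into the quadratic form. After this substitution the objective in \eqref{eq: Update U^(n) 1} becomes $\mathrm{Tr}\big(\BFU^{\top}(\BFPsi_k^{(n)}-2\rho\BFI+2\rho\BFU_k\BFU_k^{\top})\BFU\big)=\mathrm{Tr}(\BFU^{\top}\BFPhi_k^{(n)}\BFU)$, which is exactly \eqref{eq: Update U^(n) final}; note the two objectives coincide \emph{exactly} (not merely up to a constant), since the $-2\rho r_n$ produced by the expansion is precisely what $-2\rho\,\mathrm{Tr}(\BFU^{\top}\BFU)$ contributes.

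For the closed-form claim I would observe that $\BFPhi_k^{(n)}$ is symmetric: $\BFPsi_k^{(n)}$ is of the form $(\BFX_{(n)}^k-\BFS_{(n)}^k)\BFU_{\BFPsi_k^{(n)}}\BFU_{\BFPsi_k^{(n)}}^{\top}(\BFX_{(n)}^k-\BFS_{(n)}^k)^{\top}$, i.e.\ a product of a matrix with its transpose, hence symmetric positive semidefinite, while $\BFI$ and $\BFU_k\BFU_k^{\top}$ are symmetric. Thus $\max\{\mathrm{Tr}(\BFU^{\top}\BFPhi_k^{(n)}\BFU):\BFU^{\top}\BFU=\BFI\}$ is a standard trace maximization over the Stiefel manifold, and by the Ky Fan / Rayleigh--Ritz theorem an optimizer is obtained by taking as columns of $\BFU$ the $r_n$ orthonormal eigenvectors of $\BFPhi_k^{(n)}$ associated with its $r_n$ largest eigenvalues, which is precisely an eigen-decomposition.

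The manipulations are essentially mechanical linear-algebra bookkeeping, so I do not anticipate a genuine obstacle; the one point requiring care is recognizing that the proximal penalty, which superficially looks quartic in $\BFU$, is in fact quadratic on the feasible manifold because $\|\BFU\BFU^{\top}\|_F^2$ is constant there. The crux is then to track the two additive $\rho r_n$ constants so that the $-2\rho\BFI$ contribution lands exactly as written in $\BFPhi_k^{(n)}$, and to notice that the same orthonormality which makes these terms constant is what permits the constant to be reabsorbed as $2\rho\,\mathrm{Tr}(\BFU^{\top}\BFU)$.
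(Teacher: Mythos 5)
Your proposal is correct and follows essentially the same route as the paper's proof: both expand the proximal penalty, use $\BFU^{(n)\top}\BFU^{(n)}=\BFI$ to reduce it to $2\rho\bigl(r_n-\mathrm{Tr}(\BFU^{(n)}_k\BFU^{(n)\top}_k\BFU^{(n)}\BFU^{(n)\top})\bigr)=2\rho\,\mathrm{Tr}\bigl(\BFU^{(n)\top}(\BFI-\BFU^{(n)}_k\BFU^{(n)\top}_k)\BFU^{(n)}\bigr)$, and absorb it into the quadratic form to obtain $\BFPhi_k^{(n)}$. Your additional remarks on the symmetry of $\BFPhi_k^{(n)}$ and the Ky Fan argument make the closed-form eigen-decomposition claim more explicit than the paper does, but the core argument is identical.
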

\begin{proof}
On the one hand, we have
\begin{equation} \label{eq: f-norm to trace}
    \begin{aligned}
& \|\BFU^{(n)} \BFU^{(n)\top} - \BFU^{(n)}_{k} \BFU^{(n)\top}_{k}\|_F^2\\
 = & \mathrm{Tr}\Big((\BFU^{(n)} \BFU^{(n)\top} - \BFU^{(n)}_{k} \BFU^{(n)\top}_{k})(\BFU^{(n)} \BFU^{(n)\top} - \BFU^{(n)}_{k} \BFU^{(n)\top}_{k})^{\top}\Big)  \\
 = & 2\Big(r_n - \mathrm{Tr}(\BFU^{(n)}_{k}\BFU^{(n)\top}_{k}\BFU^{(n)}\BFU^{(n)\top})\Big).
\end{aligned}
\end{equation} 
On the other hand, we have
\begin{equation} \label{eq: trace to trace}
    	\begin{aligned}
& \mathrm{Tr}\Big({\BFU^{(n)\top}}(\BFI - \BFU^{(n)}_{k} {\BFU^{(n)\top}_{k}})\BFU^{(n)}\Big) \\
 =& \mathrm{Tr}\Big(\BFU^{(n)}\BFU^{(n)\top} - \BFU^{(n)}_{k} \BFU^{(n)\top}_{k}\BFU^{(n)}\BFU^{(n)\top}\Big) \\
 =& r_n - \mathrm{Tr}(\BFU^{(n)}_{k} \BFU^{(n)\top}_{k}\BFU^{(n)}\BFU^{(n)\top}).
\end{aligned}
\end{equation} According to the above two equations~\eqref{eq: f-norm to trace} and~\eqref{eq: trace to trace}, the following can be derived
\begin{equation}\label{eq: f-norm and trace}
    \|\BFU^{(n)} \BFU^{(n)\top} - \BFU^{(n)}_{k} \BFU^{(n)\top}_{k}\|_F^2=
2\mathrm{Tr}\Big({\BFU^{(n)\top}}(\BFI - \BFU^{(n)}_{k} {\BFU^{(n)\top}_{k}})\BFU^{(n)}\Big).
\end{equation}
Thus,
\begin{align*}
   & \mathrm{Tr}(\BFU^{(n)\top} \BFPsi_k^{(n)} \BFU^{(n)}) - \rho\|\BFU^{(n)}\BFU^{(n)\top}-\BFU^{(n)}_{k}\BFU^{(n)\top}_{k}\|_F^2 \\
    = &  \mathrm{Tr}(\BFU^{(n)\top} \BFPsi_k^{(n)} \BFU^{(n)}) - 2\rho \mathrm{Tr}\Big({\BFU^{(n)\top}}(\BFI- \BFU^{(n)}_{k} {\BFU^{(n)\top}_{k}})\BFU^{(n)}\Big) \\
    = & \mathrm{Tr}(\BFU^{(n)\top}\BFPhi^{(n)}_{k}\BFU^{(n)}),
\end{align*}
where the first equality is due to~\eqref{eq: f-norm and trace} and  $\BFPhi^{(n)}_{k}=\BFPsi_k^{(n)} -2\rho(\BFI - \BFU^{(n)}_{k} {\BFU^{(n)\top}_{k}})$.
\end{proof}
\begin{remark}
There are two reasons to use $\rho\|\BFU^{(n)}\BFU^{(n)\top}-\BFU^{(n)}_{k}\BFU^{(n)\top}_{k}\|_F^2$ as the proximal term instead of $\rho\|\BFU^{(n)}-\BFU^{(n)}_{k}\|_F^2$. 
\begin{enumerate}
  \item  Tucker decomposition of a tensor is not unique due to the possible orthogonal transformations of basis matrices. This is known as the  rotation indeterminacy of tensors. For example, the solutions $\BFU^{(n)}$  and $\BFU^{(n)}\BFA$ can achieve the same performance, where $\BFA$ is an orthogonal matrix. But the $\BFU^{(n)}\BFU^{(n)\top}$ is unique for Tucker decomposition as shown in Proposition~\ref{lemma: projection matrices}.
  \item If one uses $\rho\|\BFU^{(n)}-\BFU^{(n)}_{k}\|_F^2$ in \eqref{eq: Update U^(n)}, then it becomes a very hard optimization problem to solve. All current algorithms~\citep{chen2020proximal,wang2020riemannian} can guarantee the convergence to a critical point. However, we need to solve the optimization problem~\eqref{eq: Update U^(n)} to optimal so that all further theoretical results will hold.
\end{enumerate}
\end{remark}

\subsubsection[alternative title goes here]{Optimization  over $\BFcalS$}  \label{subsec: optimization S}
In this case, we  fix all $(\BFU_{k+1},\BFcalX^k)$. Here, we will show the step to update $\BFcalS^{k+1}$
\begin{equation} \label{eq: Update S}
\begin{aligned}
     \BFcalS^{k+1}\in \underset{\BFcalS}{\arg\min} \Big\{ & \|\BFcalX^k - \BFcalS - (\BFcalX^k - \BFcalS^k) \times_1 \BFU^{(1)}_{k+1}\BFU_{k+1}^{(1)\top} \times_2 \BFU^{(2)}_{k+1}\BFU_{k+1}^{(2)\top}  \times_3 \BFU^{(3)}_{k+1}\BFU_{k+1}^{(3)\top}\|_F^2\\
     & + \lambda\|\BFcalS\|_{TV1}+\rho\|\BFcalS -\BFcalS^k\|_F^2\Big\},
\end{aligned}
\end{equation}
where $\rho\|\BFcalS -\BFcalS^k\|_F^2$ is the proximal term. This problem \eqref{eq: Update S} is a  strictly convex
optimization problem with the strongly convex objective function, which possesses global and unique minimizers. There are many efficient solvers for finding global minimizers of~\eqref{eq: Update S}; see, e.g., the Bregman methods~\citep{goldstein2009split}, proximal splitting methods~\citep{combettes2011proximal}, and alternating direction of multiplier methods (ADMM)~\citep{hong2017linear}. We choose ADMM as a solver for the minimization problems~\eqref{eq: Update S}, since the convergence of ADMM for~\eqref{eq: Update S} is theoretically guaranteed~\citep{hong2017linear}.

The optimization problem~\eqref{eq: Update S} is equivalent to the following equality constrained  one:
 \begin{equation} \label{eq: Update S New1}
 \begin{aligned}
 & \underset{\BFcalS,\bm{f}}{\text{min}}
 & & \|\BFcalS - \BFcalE_{k+1}\|_F^2 + \frac{\lambda}{1+\rho}\|\bm{f}\|_1 \\
& \text{s.t.}
& &  \bm{f} = \BFD\texttt{vec}(\BFcalS),
 \end{aligned}
 \end{equation}
where $\BFcalE_{k+1} = \frac{\BFcalX^k - (\BFcalX^k - \BFcalS^k) \times_1 \BFU^{(1)}_{k+1}\BFU_{k+1}^{(1)\top} \times_2 \BFU^{(2)}_{k+1}\BFU_{k+1}^{(2)\top}  \times_3 \BFU^{(3)}_{k+1}\BFU_{k+1}^{(3)\top}+\rho\BFcalS^k}{1+\rho}$. Note that the objective function in \eqref{eq: Update S New1} is the summation of two single variable functions with $\BFcalS$ and $\bm{f}$ as their individual variables. Thus, ADMM is applicable. The augmented Lagrangian function of problem~\eqref{eq: Update S New1} can be written as:
\begin{equation}\label{eq: Lagrangian of Update S}
         L_A(\BFcalS,\bm{f}) =  \|\BFcalS - \BFcalE_{k+1}\|_F^2 + \frac{\lambda}{1+\rho}\|\bm{f}\|_1 -\langle \BFlambda^{\bm{f}}, \bm{f} - \BFD\texttt{vec}(\BFcalS)\rangle +\frac{\beta^{\bm{f}}}{2}\|\bm{f} - \BFD\texttt{vec}(\BFcalS)\|_2^2,
\end{equation}
where  $\BFlambda^{\bm{f}}$ is the Lagrange multiplier vector, $\beta^{\bm{f}}$ is positive scalars. The optimization problem of $L_A$ in~\eqref{eq: Lagrangian of Update S} with respect to each variable can be solved by the following sub-problems:

1) $\bm{f}$ sub-problem: the sub-problem of $L_A$ with respect to $\bm{f}$ can be rewritten as 
\begin{equation*}
    \underset{\bm{f}}{\min} \ \frac{\lambda}{1+\rho}\|\bm{f}\|_1 +\frac{\beta^{\bm{f}}}{2}\|\bm{f} - (\BFD\texttt{vec}(\BFcalS)+\frac{\BFlambda^{\bm{f}}}{\beta^{\bm{f}}})\|_2^2,
\end{equation*}
 where the well-known soft-thresholding operator~\citep{donoho1995noising} can be applied to solve this sub-problem as follows
\begin{equation} \label{eq: ADMM f step}    
\bm{f}=\textnormal{soft}(\BFD\texttt{vec}(\BFcalS)+\frac{\BFlambda^{\bm{f}}}{\beta^{\bm{f}}},\frac{\lambda}{(1+\rho)\beta^{\bm{f}}}),
\end{equation}
where the soft-thresholding operator $\textnormal{soft}(\BFcalA,\tau) = \textnormal{sign}(\BFcalA)\cdot \max(|\BFcalA|-\tau,0)$ is performed element-wisely.

2) $\BFcalS$ sub-problem:  the sub-problem of $L_A$ with respect to $\BFcalS$ can be solved by the following linear system:
\begin{equation*}
    (2\BFI+\beta^{\bm{f}}\BFD^*\BFD)\texttt{vec}(\BFcalS) = 2\texttt{vec}(\BFcalE_{k+1})+\BFD^*(\beta^{\bm{f}}\bm{f}-\BFlambda^{\bm{f}}),
\end{equation*}
 where $\BFD^*$ indicates the adjoint of $\BFD$. Let $\BFcalC=\texttt{ten}\big(2\texttt{vec}(\BFcalE_{k+1})+\BFD^*(\beta^{\bm{f}}\bm{f}-\BFlambda^{\bm{f}})\big)$. Thanks to the block-circulant
structure of the matrix corresponding to the operator $\BFD^*\BFD$, it can be diagonalized by the 3D FFT matrix. Therefore, $\BFcalS$ can
be fast computed by
\begin{equation} \label{eq: ADMM S step}
     \texttt{ifftn}\Big(\frac{\texttt{fftn}(\BFcalC)}{2\cdot\bm{\mathrm{1}}+\beta^{\bm{f}}( |\texttt{fftn}(\BFD_h)|^2+|\texttt{fftn}(\BFD_v)|^2+|\texttt{fftn}(\BFD_t)|^2)}\Big),
\end{equation}
where $\texttt{fftn}(\cdot)$ and $\texttt{ifftn}(\cdot)$  indicate fast 3D Fourier transform and its inverse transform, respectively.  Note that the denominator in the equation can be pre-calculated outside the main loop, avoiding the extra computational cost.

3) Updating Multipliers: According to the ADMM, the multipliers associated with $L_A$ are updated by the following formulas:
\begin{equation} \label{eq: ADMM lambda f update}
   \BFlambda^{\bm{f}}    \leftarrow  \BFlambda^{\bm{f}} - \gamma\beta^{\bm{f}}\big(\bm{f} - \BFD\texttt{vec}(\BFcalS)\big)
\end{equation}
where $\gamma>0$ is a parameter associated with convergence rate, and the penalty parameters $\beta^{\bm{f}}$  follow an adaptive updating scheme  as suggested in~\citep{chan2011augmented}. Take 
$\beta^{\bm{f}}$ as an example,
\begin{equation} \label{eq: ADMM beta f update}
  \beta^{\bm{f}} = \left\{
  \begin{aligned}
     c_1 \beta^{\bm{f}}, &\textnormal{  if  } \textnormal{Err}(\bm{f}^{\textnormal{iter}+1}) \geq c_2\textnormal{Err}(\bm{f}^\textnormal{iter})\\
     \beta^{\bm{f}}, &\textnormal{  otherwise.}
  \end{aligned}
  \right.
\end{equation}
where $\textnormal{Err}(\bm{f}^\textnormal{iter}) =\|\bm{f}^\textnormal{iter} - \BFD\texttt{vec}(\BFcalS)\|_2$. As suggested in~\citep{cao2016total}, $\gamma=1.1$, and $c_1$, $c_2$ can be taken as 1.15 and 0.95, respectively.

Repeating the iteration process~\eqref{eq: ADMM f step},~\eqref{eq: ADMM S step},  and~\eqref{eq: ADMM beta f update}  sufficiently many
times, the so-obtained $\BFcalS$ is taken as an iterative solution to $\BFcalS^{k+1}$. Note that among these many rounds of iteratively repeating~\eqref{eq: ADMM f step},~\eqref{eq: ADMM S step}, and~\eqref{eq: ADMM beta f update}, the first round requires initial guesses of $\BFcalS$ and $\BFlambda^{\bm{f}}$, for which we set $\BFcalS^k$ as the initial guess of $\BFcalS$ and
$\bm{0}$ (zero vector) as the initial guess of $\BFlambda^{\bm{f}}$.
\begin{algorithm}[!ht]
		\caption{ADMM algorithm for solving~\eqref{eq: Update S New1}} \label{alg: ADMM algorithm for updating S}
		 \textbf{Input:}  $(\BFU_{k+1},\BFcalX^k)$, $\rho=0.001$; The algorithm parameter:  $\lambda$. \\
		 \textbf{Initialization:}   $\BFcalS=\BFcalS^{k}$; $\beta^{\bm{f}}$ is initialized by $\frac{1e^{+1}}{\textnormal{mean}(\BFcalX)}$; other variables are initialized by $\bm{0}$.                      
		\begin{algorithmic}[1]
		\While {$\frac{\|\BFcalS_{t}-\BFcalS_{t-1}\|_F}{\max\{1,\|\BFcalS_{t-1}\|_F\}}>10^{-6}$ \& $\textnormal{iter}\leq 100$}
        \State Updating $\bm{f}$ via \eqref{eq: ADMM f step};
        \State Updating $\BFcalS$ via \eqref{eq: ADMM S step};
        \State Updating multipliers and the related parameters via \eqref{eq: ADMM lambda f update} and \eqref{eq: ADMM beta f update};  
        \State $\textnormal{iter}= \textnormal{iter}+1$;
       \EndWhile\\
       \textbf{Output:} $\BFcalS^{k+1} = \BFcalS$
		\end{algorithmic} 
	\end{algorithm} 
\subsubsection[alternative title goes here]{Optimization  over $\BFcalX$}  \label{subsec: optimization X}
In this case, we  fix all $(\BFU_{k+1},\BFcalS^{k+1})$. Here, we will show the step to update $\BFcalX^{k+1}$
\begin{equation} \label{eq: Update X}
\begin{aligned}
     \BFcalX^{k+1}\in \underset{\BFcalX}{\arg\min} \Big\{ & \|\BFcalX - \BFcalS^{k+1} - (\BFcalX^k - \BFcalS^{k+1}) \times_1 \BFU^{(1)}_{k+1}\BFU_{k+1}^{(1)\top} \times_2 \BFU^{(2)}_{k+1}\BFU_{k+1}^{(2)\top}  \times_3 \BFU^{(3)}_{k+1}\BFU_{k+1}^{(3)\top}\|_F^2\\
     & +\rho\| \BFcalX- \BFcalX^{k}\|_F^2:\BFcalP_{\Omega}(\BFcalX) = \BFcalP_{\Omega}(\BFcalF)\Big\},
\end{aligned}
\end{equation}
 Now let $\BFcalF_{k+1} \coloneqq \BFcalS^{k+1} + (\BFcalX^k - \BFcalS^{k+1}) \times_1 \BFU^{(1)}_{k+1}\BFU_{k+1}^{(1)\top} \times_2 \BFU^{(2)}_{k+1}\BFU_{k+1}^{(2)\top}  \times_3 \BFU^{(3)}_{k+1}\BFU_{k+1}^{(3)\top}$. This problem has closed-form solution as follows
 
\begin{equation}
    \BFcalX^{k+1} = \BFcalP_{\Bar{\Omega}}(\frac{\BFcalF_{k+1}+\rho\BFcalX^{k}}{1+\rho})+ \BFcalP_{\Omega}(\BFcalF),
\end{equation}
where $\Bar{\Omega}$ is the complement set of $\Omega$. Now the detailed implementation of tenPAM algorithm is summarized in Algorithm~\ref{alg: PAM}. 
\begin{algorithm}[!ht]
		\caption{tenPAM Algorithm}\label{alg: PAM}
		\textbf{Input:} Partial observed tensor $\BFcalF_{\Omega}$, $\lambda>0$ \\
		\textbf{Initialization:} $r_1=\textnormal{ceil}(0.8\times H)$, $r_2=\textnormal{ceil}(0.8\times W)$, $r_3=1$; $\gamma = 1.1$; $c_1=1.15,c_2=0.95$; $\BFcalX^0=\BFcalP(\BFcalF)$; $\{\BFU^{(n)}_0\}_{n\in [3]}$ is initialized by $(r_1,r_2,r_3)$-Tucker decomposition of  $\BFcalX^0$; $\BFcalS=\BFcalX^0-\BFcalL^0$; $\rho=0.001$; Other variables are initialized by $\bm{0}$.                     
		\begin{algorithmic}[1]
				\While {$\frac{\|\BFcalP_{\Omega}(\BFcalX^{k}-\BFcalF)\|_F}{\max\{1,\|\BFcalF\|_F\}}>10^{-6}$ \& $\textnormal{iter}\leq 50$}
			\For{$n= 1$ to $3$}
			\State Get $\BFU^{(n)}_{k+1}$ by solving problem \eqref{eq: Update U^(n) final}
			\EndFor
            \State Get $\BFcalS^{k+1}$ by solving problem \eqref{eq: Update S}  using Algorithm~\ref{alg: ADMM algorithm for updating S}
         	\State Get $\BFcalX^{k+1}$ by solving problem \eqref{eq: Update X}
			\State $k=k+1$ 
		\EndWhile\\
			\textbf{Output:} $\{\BFU_k\BFU_k^{\top},\BFcalS^k,\BFcalX^k\}_{k\geq 0}$
		\end{algorithmic}
	\end{algorithm}
\subsection{Implementation Details} \label{subsec: implementation details} 
In the SRTC model~\eqref{eq: proposed formulation}, there exist four parameters, namely, $r_1$, $r_2$, $r_3$, and $\lambda$, respectively, where $r_1$ and $r_2$ control the complexity of spatial redundancy, $r_3$ controls the complexity of temporal redundancy, and $\lambda$  handles a trade-off between noise and foreground modeling. In all experiments,  $r_1$ and $r_2$ are set to values of $\textnormal{ceil}(0.80\times H)$ and $\textnormal{ceil}(0.80\times W)$, respectively,  where $\textnormal{ceil}(\cdot)$ is the operator to round the element to the nearest integer greater than or equal to that element. By doing so, the accumulation energy ratio of top normalized singular values (AccEgyR) ) attains a ratio over 0.9 for various natural images,  as reported in~\citep{cao2016total}. For $r_3$, it takes the value 1 for all experiments so that each image frame in $\BFcalL$ is the same~\citep{sobral2016lrslibrary}. In terms of $\lambda$, it needs to be carefully tuned based on the data. Specifically, $\lambda$ is taken in the range $[0.2,1]$. 

\section{Convergence Analysis of tenPAM Algorithm} \label{sec: convergence analysis}
In this section, we will prove the global convergence of tenPAM algorithm~\ref{alg: PAM}. For notional convenience in our analysis, set $I_1=H$, $I_2=W$, $I_3=T$, and $N=3$. We first note that the proposed formulation~\eqref{eq: proposed formulation-new} can be reformulated as an equivalent unconstrained optimization problem as follows:
\begin{equation}\label{eq: proposed formulation unconstrained}
\min_{\BFU\BFU^\top,\BFcalS,\BFcalX} G(\BFU\BFU^\top,\BFcalS,\BFcalX)=\hat{F}(\BFU\BFU^\top,\BFcalS,\BFcalX)+\delta_{S}(\BFcalX)+ \sum_{n \in [3]} \delta_{S_n}(\BFU^{(n)}\BFU^{(n)\top}),
\end{equation}
where set $S_n \coloneqq \{\BFX \in \mathbb{R}^{I_n \times I_n}:\mathrm{rank}(\BFX)=r_n, \BFX=\BFX^{\top},\mathrm{eigenvalue} \ \mathrm{of} \ \BFX \mathrm{\ is \ either \ 1 \ or \ 0 } \}$, and $S\coloneqq \{\BFcalX \in \mathbb{R}^{H\times W \times T}:\BFcalP_{\Omega}(\BFcalX) = \BFcalP_{\Omega}(\BFcalF)\}$. For a given set $A$, its characteristic function is defined as
\begin{equation*}
\delta_{A}(\bm x)=\left\{
\begin{array}{lcr}
0 & &  \bm x \in A\\
+\infty & & \textrm{otherwise }
\end{array} \right.,
\end{equation*}
which is a proper and lower semicontinuous (PLSC) function. Therefore, $G(\cdot)$ is PLSC function. $\partial G(\BFU\BFU^\top,\BFcalS,\BFcalX)$ is called Subdifferential of $G$ at $\{\BFU\BFU^\top,\BFcalS,\BFcalX\}$, which has the following definition. 
\begin{definition}[\normalfont\textit{Subdifferentials}~\citep{attouch2009convergence, attouch2010proximal}]\label{def: Subdifferentials}~Assume that $f:~\mathbb{R}^d \to (-\infty,+\infty)$ is a proper and lower semicontinuous function.
	\begin{enumerate}
	\item The domain of $f$ is defined and denoted by $\textnormal{dom}f\coloneqq \{\bm{x}\in\mathbb{R}^n:f(\bm{x})<+\infty\}$
	\item For a given $\bm{x}\in \textnormal{dom}f$, the Fréchet subdifferential of $f$ at $\bm{x}$, written $\hat{\partial}f(\bm{x})$, is the set of all vectors $\bm{u}\in  \mathbb{R}^d $ that satisfy
\[\lim\limits_{\bm{y} \neq \bm{x}} \inf_{\bm{y} \to \bm{x}} \frac{f(\bm{y})-f(\bm{x}) - \langle \bm{u} , \bm{y}-\bm{x}\rangle}{\|\bm{y}-\bm{x}\|}\geq 0.\]
\item The limiting-subdifferential, or simply the subdifferential, of $f$ at $\bm{x}$, written $\partial f(\bm{x})$ is defined through the following closure process
\[\partial f(\bm{x}):=\{\bm{u}\in  \mathbb{R}^d: \exists \bm{x}^k \to \bm{x},f(\bm{x}^k) \to f(\bm{x}) \textnormal{ and } \bm{u}^k \in \hat{\partial}f(\bm{x}^k) \to \bm{u} \textnormal{ as } k\to \infty \}.\]
	\end{enumerate}
\end{definition}

Before introducing our key results, the following proposition is needed to build our main results.
\begin{proposition} \label{proposition: S}
The following optimization problem has a closed-form solution
\begin{equation}  \label{prop: optimal}
       \BFcalS \in \underset{\BFcalY}{\arg\min}\ \|\BFcalX - \BFcalS - (\BFcalX - \textcolor{red}{\BFcalY}) \times_1 \BFU^{(1)}\BFU^{(1)\top} \times_2 \dots \times_N \BFU^{(N)}\BFU^{(N)\top}\|_F^2. 
\end{equation}
\end{proposition}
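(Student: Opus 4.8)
The plan is to recognise the inner tensor operator as an orthogonal projection on the space of order-$N$ tensors and then to minimise a squared Frobenius distance over the range of that projection. First I would introduce the shorthand $\Pi$ for the linear operator that sends a tensor $\mathcal{Z}$ to $\mathcal{Z}\times_1\BFU^{(1)}\BFU^{(1)\top}\times_2\cdots\times_N\BFU^{(N)}\BFU^{(N)\top}$. Because each factor obeys $\BFU^{(n)\top}\BFU^{(n)}=\BFI$, every matrix $\BFU^{(n)}\BFU^{(n)\top}$ is symmetric and idempotent, hence an orthogonal projection. Using that mode multiplications along distinct modes commute and that mode-$n$ multiplication by a symmetric matrix is self-adjoint for $\langle\cdot,\cdot\rangle$, I would conclude that $\Pi$ is linear, self-adjoint ($\Pi^\ast=\Pi$), and idempotent ($\Pi^2=\Pi$); i.e.\ $\Pi$ is the orthogonal projection onto the Tucker subspace fixed by $\BFU$.

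Next, by linearity of $\Pi$ I would rewrite the objective as
\[
g(\BFcalY)=\big\|\BFcalX-\BFcalS-\Pi(\BFcalX-\BFcalY)\big\|_F^2=\big\|\,\mathcal{A}+\Pi(\BFcalY)\,\big\|_F^2,\qquad \mathcal{A}:=\BFcalX-\BFcalS-\Pi(\BFcalX).
\]
This displays $g$ as the squared norm of an affine function of $\BFcalY$, so $g$ is convex and every stationary point is a global minimiser. Computing $\nabla g(\BFcalY)=2\,\Pi^\ast\!\big(\mathcal{A}+\Pi(\BFcalY)\big)$ and invoking $\Pi^\ast=\Pi$ together with $\Pi^2=\Pi$, the term $\Pi(\mathcal{A})$ simplifies via $\Pi(\Pi(\BFcalX))=\Pi(\BFcalX)$ to $-\Pi(\BFcalS)$, so the stationarity condition collapses to $\Pi(\BFcalY)=\Pi(\BFcalS)$.

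The conclusion is then immediate: the choice $\BFcalY=\BFcalS$ satisfies $\Pi(\BFcalY)=\Pi(\BFcalS)$, so $\BFcalS$ is a global minimiser, which is the asserted closed form; more generally the whole solution set is $\{\BFcalY:\Pi(\BFcalY)=\Pi(\BFcalS)\}$, reflecting that $g$ sees $\BFcalY$ only through $\Pi(\BFcalY)$. As an alternative to the stationarity argument, I could decompose $\mathcal{A}=\Pi(\mathcal{A})+(\mathrm{Id}-\Pi)(\mathcal{A})$ into orthogonal parts; since $\Pi(\BFcalY)$ ranges over $\mathrm{range}(\Pi)$, the Pythagorean identity yields a minimum exactly when $\Pi(\BFcalY)=-\Pi(\mathcal{A})=\Pi(\BFcalS)$, again attained at $\BFcalY=\BFcalS$.

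The only nontrivial point, and thus the main obstacle, is establishing that $\Pi$ is a bona fide orthogonal projection on the tensor space, which rests on two elementary facts about mode products: multiplications in different modes commute, and multiplication by a symmetric matrix along a single mode is self-adjoint with respect to the tensor inner product. Once $\Pi^\ast=\Pi$ and $\Pi^2=\Pi$ are in hand, the identity $\Pi(\Pi(\BFcalX))=\Pi(\BFcalX)$ and the resulting collapse of the optimality condition are purely mechanical.
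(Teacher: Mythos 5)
Your proposal is correct and follows essentially the same route as the paper: both arguments rely on convexity in $\BFcalY$ and the first-order optimality condition, which (after using that each $\BFU^{(n)}\BFU^{(n)\top}$ is symmetric and idempotent) collapses to $2(\BFcalY-\BFcalS)\times_1 \BFU^{(1)}\BFU^{(1)\top}\times_2\cdots\times_N\BFU^{(N)}\BFU^{(N)\top}=\bm{0}$, satisfied by $\BFcalY=\BFcalS$. You simply make explicit the self-adjointness and idempotence of the multilinear projection (and note the full solution set), details the paper's terser computation uses implicitly.
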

\begin{proof}
   The objective function in \eqref{prop: optimal} is a convex function of $\BFcalY$. Therefore, based on the first-order optimality condition, we have
\begin{equation*}
    \begin{aligned}
    \bm{0}=&   2(\BFcalY-\BFcalX) \times_1 \BFU^{(1)}\BFU^{(1)\top} \times_2 \dots \times_N \BFU^{(N)}\BFU^{(N)\top} \\ 
    -&  2(\BFcalS-\BFcalX) \times_1 \BFU^{(1)}\BFU^{(1)\top} \times_2 \dots \times_N \BFU^{(N)}\BFU^{(N)\top}\\
    =&2(\BFcalY - \BFcalS)\times_1 \BFU^{(1)}\BFU^{(1)\top} \times_2 \dots \times_N \BFU^{(N)}\BFU^{(N)\top}.
    \end{aligned}
\end{equation*}
Thus, the statement in~\eqref{prop: optimal} is valid.
\end{proof}

Now our first main lemma about sufficient decrease property of the iterative sequence  $\{\BFU_k\BFU_k^{\top},\BFcalS^k,\BFcalX^k\}_{k\geq 0}$ from Algorithm~\ref{alg: PAM} is ready to be introduced.
\begin{lemma}[Sufficient decrease property]\label{lemma: Sufficient decrease property}~Given that $0<\rho<\infty$, $\{\BFU_k\BFU_k^{\top},\BFcalS^k,\BFcalX^k\}_{k\geq 0}$ is the sequence generated from the proposed Algorithm~\ref{alg: PAM}, then the sequence satisfies
\begin{equation} \label{eq: sufficient decrease}
\begin{aligned}
    & \rho(\|\BFU_{k+1}\BFU_{k+1}^{\top}-\BFU_k\BFU_k^{\top}\|_F^2+\|\BFcalS^{k+1} -\BFcalS^k\|_F^2+\|\BFcalX^{k+1} -\BFcalX^k\|_F^2)\\
    \leq & G(\BFU_k\BFU_k^{\top},\BFcalS^k,\BFcalX^k) -  G(\BFU_{k+1}\BFU_{k+1}^{\top},\BFcalS^{k+1},\BFcalX^{k+1}).  
\end{aligned}
\end{equation}
\end{lemma}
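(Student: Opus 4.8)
The plan is to establish \eqref{eq: sufficient decrease} block by block, exploiting that each of the three subproblems in Algorithm~\ref{alg: PAM} is solved to global optimality and carries a proximal penalty of weight $\rho$. First I would note that every iterate stays feasible: each $\BFU^{(n)}_{k+1}\BFU^{(n)\top}_{k+1}$ is a rank-$r_n$ orthogonal projector (symmetric, idempotent, eigenvalues in $\{0,1\}$) and so lies in $S_n$, while $\BFcalX^{k+1}$ satisfies $\BFcalP_{\Omega}(\BFcalX^{k+1})=\BFcalP_{\Omega}(\BFcalF)$ by construction and so lies in $S$. Hence all characteristic functions in \eqref{eq: proposed formulation unconstrained} vanish along the sequence, $G$ coincides with $\hat F$ at every iterate, and it suffices to track the decrease of $\hat F$.

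For the $\BFU$-step I would chain the three sequential updates $n=1,2,3$. Subproblem \eqref{eq: Update U^(n)} minimizes exactly $\hat F$ in the active block (with $\BFcalS^k,\BFcalX^k$ and the remaining factors frozen) plus the proximal term $\rho\|\BFU^{(n)}\BFU^{(n)\top}-\BFU^{(n)}_{k}\BFU^{(n)\top}_{k}\|_F^2$; since $\BFU^{(n)}_{k}$ is feasible with zero proximal value, global optimality of $\BFU^{(n)}_{k+1}$ --- guaranteed by the closed-form eigendecomposition of Lemma~\ref{lemma: Equivalent Trace Optimization} --- yields $\hat F(\cdots\BFU^{(n)}_{k+1}\cdots)+\rho\|\BFU^{(n)}_{k+1}\BFU^{(n)\top}_{k+1}-\BFU^{(n)}_{k}\BFU^{(n)\top}_{k}\|_F^2\le \hat F(\cdots\BFU^{(n)}_{k}\cdots)$. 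Telescoping over $n=1,2,3$ and reading $\|\BFU_{k+1}\BFU_{k+1}^\top-\BFU_k\BFU_k^\top\|_F^2$ as $\sum_{n}\|\BFU^{(n)}_{k+1}\BFU^{(n)\top}_{k+1}-\BFU^{(n)}_{k}\BFU^{(n)\top}_{k}\|_F^2$ produces the $\BFU$-block of \eqref{eq: sufficient decrease}.

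The delicate part is the $\BFcalS$- and $\BFcalX$-steps, since subproblems \eqref{eq: Update S} and \eqref{eq: Update X} do \emph{not} minimize $\hat F$ directly: inside the multilinear projection they freeze the \emph{previous} iterate ($\BFcalS^k$, resp.\ $\BFcalX^k$) rather than the optimization variable. Here Proposition~\ref{proposition: S} is the bridge. For the $\BFcalS$-step, optimality of $\BFcalS^{k+1}$ (the strongly convex problem has a unique global minimizer) against the feasible point $\BFcalS^k$ decreases the \emph{subproblem} objective by at least $\rho\|\BFcalS^{k+1}-\BFcalS^{k}\|_F^2$; then Proposition~\ref{proposition: S}, applied with the inner tensor as the free variable, certifies that swapping $\BFcalS^k$ for $\BFcalS^{k+1}$ inside the projection can only lower the fitting term, so the subproblem value at $\BFcalS^{k+1}$ dominates the true $\hat F(\BFU_{k+1}\BFU_{k+1}^\top,\BFcalS^{k+1},\BFcalX^k)$. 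Combining gives $\hat F(\BFU_{k+1}\BFU_{k+1}^\top,\BFcalS^{k+1},\BFcalX^k)+\rho\|\BFcalS^{k+1}-\BFcalS^{k}\|_F^2\le \hat F(\BFU_{k+1}\BFU_{k+1}^\top,\BFcalS^{k},\BFcalX^k)$. An identical argument --- now invoking Proposition~\ref{proposition: S} with the $\BFcalX$-slot as the inner free variable, and using optimality of $\BFcalX^{k+1}$ over the affine feasible set --- handles the $\BFcalX$-step. Summing the three block inequalities and replacing $\hat F$ by $G$ via feasibility yields \eqref{eq: sufficient decrease}.

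I expect the main obstacle to be making the Proposition~\ref{proposition: S} bridge airtight: one must use that the operator $P(\cdot)=(\cdot)\times_1\BFU^{(1)}\BFU^{(1)\top}\times_2\cdots\times_3\BFU^{(3)}\BFU^{(3)\top}$ is an orthogonal projector (idempotent and self-adjoint, because the mode-wise projectors commute) so that the inner-variable minimization is attained \emph{exactly} at the updated outer variable, leaving no residual cross term. Once that is settled, the remainder is bookkeeping of telescoping inequalities.
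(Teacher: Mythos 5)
Your proposal is correct and follows essentially the same route as the paper: optimality of each block subproblem against the previous iterate gives the proximal decrease, and Proposition~\ref{proposition: S} is used exactly as you describe to certify that swapping the stale inner tensor for the updated one only lowers the fitting term, so the subproblem value dominates $\hat F$ (equivalently $G$) at the new point. The paper telescopes the three block inequalities in the same way, so no further comparison is needed.
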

 \begin{proof}
    At $(k+1)$-th iteration, due to that we can get the optimal solution for \eqref{eq: Update U^(n) final} for all $n=1,2,3$, thus 
\begin{equation} \label{ineq: sufficient 0}
\begin{aligned}
       &  G(\{\BFU^{(i)}_{k+1}\BFU_{k+1}^{(i)\top}\}_{i\leq n},\{\BFU^{(i)}_{k}\BFU_{k}^{(i)\top}\}_{i>n},\BFcalS^k,\BFcalX^k) \\
        \leq   &   G(\{\BFU^{(i)}_{k+1}\BFU_{k+1}^{(i)\top}\}_{i< n},\{\BFU^{(i)}_{k}\BFU_{k}^{(i)\top}\}_{i\geq n},\BFcalS^k,\BFcalX^k) - \rho \|\BFU_k^{(n)}\BFU_k^{(n)\top}-\BFU_{k+1}^{(n)}\BFU_{k+1}^{(n)\top}\|_F^2.
\end{aligned}
\end{equation}
 The above inequality~\eqref{ineq: sufficient 0} implies the following
\begin{equation} \label{ineq: sufficient 1}
    \begin{aligned}
           & G(\BFU_{k+1}\BFU_{k+1}^{\top},\BFcalS^k,\BFcalX^k) -     G(\BFU_k\BFU_k^{\top},\BFcalS^k,\BFcalX^k) \\
           = & \sum_{n=1}^3  \Big(G(\{\BFU^{(i)}_{k+1}\BFU_{k+1}^{(i)\top}\}_{i\leq n},\{\BFU^{(i)}_{k}\BFU_{k}^{(i)\top}\}_{i>n},\BFcalS^k,\BFcalX^k)\\
           & - G(\{\BFU^{(i)}_{k+1}\BFU_{k+1}^{(i)\top}\}_{i< n},\{\BFU^{(i)}_{k}\BFU_{k}^{(i)\top}\}_{i\geq n},\BFcalS^k,\BFcalX^k)\Big) \\
           \leq & -\sum_{n=1}^3\rho \|\BFU_k^{(n)}\BFU_k^{(n)\top}-\BFU_{k+1}^{(n)}\BFU_{k+1}^{(n)\top}\|_F^2. 
    \end{aligned}
\end{equation}
The fact that $\BFcalS^{k+1}$ is the optimal solution for problem \eqref{eq: Update S} shows,
\begin{equation} \label{ineq: sufficient 2}
    \begin{aligned}
          & \|\BFcalX^k - \BFcalS^{k+1} - (\BFcalX^k - \BFcalS^k) \times_1 \BFU^{(1)}_{k+1}\BFU_{k+1}^{(1)\top} \times_2 \BFU^{(2)}_{k+1}\BFU_{k+1}^{(2)\top}  \times_3 \BFU^{(3)}_{k+1}\BFU_{k+1}^{(3)\top}\|_F^2\\
     & + \lambda\|\BFcalS^{k+1}\|_{TV1}+\rho\|\BFcalS^{k+1} -\BFcalS^k\|_F^2  \leq G(\BFU_{k+1}\BFU_{k+1}^{\top},\BFcalS^k,\BFcalX^k).
    \end{aligned}
\end{equation}
 Based on  Proposition~\ref{proposition: S}, the following holds
\begin{equation}  \label{ineq: sufficient 3}
    \begin{aligned}
    & G(\BFU_{k+1}\BFU_{k+1}^{\top},\BFcalS^{k+1},\BFcalX^k)\\
    =  & \underset{\BFcalS}{\min} \ \Big(\|\BFcalX^k - \BFcalS^{k+1} - (\BFcalX^k - \textcolor{red}{\BFcalS}) \times_1 \BFU^{(1)}_{k+1}\BFU_{k+1}^{(1)\top} \times_2 \BFU^{(2)}_{k+1}\BFU_{k+1}^{(2)\top}  \times_3 \BFU^{(3)}_{k+1}\BFU_{k+1}^{(3)\top}\|_F^2 \\
     & + \lambda\|\BFcalS^{k+1}\|_{TV1} \Big).
    \end{aligned}
\end{equation}
Combine \eqref{ineq: sufficient 2} and \eqref{ineq: sufficient 3}, we have
\begin{equation}\label{ineq: sufficient 4}
    G(\BFU_{k+1}\BFU_{k+1}^{\top},\BFcalS^{k+1},\BFcalX^k) \leq G(\BFU_{k+1}\BFU_{k+1}^{\top},\BFcalS^k,\BFcalX^k) - \rho\|\BFcalS^{k+1} -\BFcalS^k\|_F^2.
\end{equation}
 	   Since $\BFcalX^{k+1}$ is the optimal solution for \eqref{eq: Update X}, the following holds
 	   \begin{equation} \label{ineq: sufficient 5}
    \begin{aligned}
          & \|\BFcalX^{k+1} - \BFcalS^{k+1} - (\BFcalX^k - \BFcalS^{k+1}) \times_1 \BFU^{(1)}_{k+1}\BFU_{k+1}^{(1)\top} \times_2 \BFU^{(2)}_{k+1}\BFU_{k+1}^{(2)\top}  \times_3 \BFU^{(3)}_{k+1}\BFU_{k+1}^{(3)\top}\|_F^2\\
     & + \lambda\|\BFcalS^{k+1}\|_{TV1}+\rho\|\BFcalX^{k+1} -\BFcalX^k\|_F^2  \leq G(\BFU_{k+1}\BFU_{k+1}^{\top},\BFcalS^{k+1},\BFcalX^k).
    \end{aligned}
\end{equation}
Again, the following can be obtained through the Proposition~\ref{proposition: S},
\begin{equation}  \label{ineq: sufficient 6}
    \begin{aligned}
    & G(\BFU_{k+1}\BFU_{k+1}^{\top},\BFcalS^{k+1},\BFcalX^{k+1})\\
    =  & \underset{\BFcalX}{\min} \ \Big(\|\BFcalX^{k+1} - \BFcalS^{k+1} - ( \textcolor{red}{\BFcalX} -\BFcalS^{k+1}) \times_1 \BFU^{(1)}_{k+1}\BFU_{k+1}^{(1)\top} \times_2 \BFU^{(2)}_{k+1}\BFU_{k+1}^{(2)\top}  \times_3 \BFU^{(3)}_{k+1}\BFU_{k+1}^{(3)\top}\|_F^2 \\
     & + \lambda\|\BFcalS^{k+1}\|_{TV1} \Big).
    \end{aligned}
\end{equation}
Combine \eqref{ineq: sufficient 5} and \eqref{ineq: sufficient 6}, we have
\begin{equation}\label{ineq: sufficient 7}
    G(\BFU_{k+1}\BFU_{k+1}^{\top},\BFcalS^{k+1},\BFcalX^{k+1}) \leq G(\BFU_{k+1}\BFU_{k+1}^{\top},\BFcalS^{k+1},\BFcalX^k) - \rho\|\BFcalX^{k+1} -\BFcalX^k\|_F^2.
\end{equation}
Sum \eqref{ineq: sufficient 1}, \eqref{ineq: sufficient 4}, and \eqref{ineq: sufficient 7} together, we can obtain our result
\begin{equation*}
\begin{aligned}
    & \rho(\|\BFU_{k+1}\BFU_{k+1}^{\top}-\BFU_k\BFU_k^{\top}\|_F^2+\|\BFcalS^{k+1} -\BFcalS^k\|_F^2+\|\BFcalX^{k+1} -\BFcalX^k\|_F^2)\\
    \leq & G(\BFU_k\BFU_k^{\top},\BFcalS^k,\BFcalX^k) -  G(\BFU_{k+1}\BFU_{k+1}^{\top},\BFcalS^{k+1},\BFcalX^{k+1}),     
\end{aligned}
\end{equation*}
which is the sufficient decrease property.
 \end{proof}
Our second main lemma is about the subgradient lower bound. To build the lemma, the following twos propositions are needed: (1)  Proposition~\ref{proposition: X,S bound} shows the iterative sequence $\{\BFU_k\BFU_k^{\top},\BFcalS^k,\BFcalX^k\}_{k\geq 0}$ from our Algorithm~\ref{alg: PAM} is bounded; (2)  Proposition~\ref{proposition: Lipschitz continuous} shows the Lipschitz continuity of the gradient of $\hat{F}(\cdot)$.
\begin{proposition}\label{proposition: X,S bound}
$\{\BFcalS^k,\BFcalX^k\}_{k\geq 0}$ are bounded sequence, where the bounds are determined by the initial values of $(\BFU_0\BFU_0^{\top},\BFcalS^0,\BFcalX^0)$. Specifically, it follows
\begin{equation} \label{ineq: X,S bound}
    \begin{aligned}
           \|\BFcalS^k\|_F & \leq \|\BFcalS^0\|_F +G(\BFU_0\BFU_0^{\top},\BFcalS^0,\BFcalX^0)/\rho \\
           \|\BFcalX^k\|_F & \leq \|\BFcalX^0\|_F+G(\BFU_0\BFU_0^{\top},\BFcalS^0,\BFcalX^0)/\rho.
    \end{aligned}
\end{equation}
\end{proposition}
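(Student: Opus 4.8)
The goal is to bound $\|\BFcalS^k\|_F$ and $\|\BFcalX^k\|_F$ uniformly in $k$ by initial data, and the natural handle is the sufficient decrease property (Lemma~\ref{lemma: Sufficient decrease property}) combined with nonnegativity of $G$. First I would record that $G$ stays nonnegative along the iterates. Every iterate is feasible by construction: the $\BFU$-update enforces $\BFU^{(n)\top}\BFU^{(n)}=\BFI$, so each $\BFU^{(n)}\BFU^{(n)\top}\in S_n$, and the $\BFcalX$-update enforces $\BFcalP_{\Omega}(\BFcalX^k)=\BFcalP_{\Omega}(\BFcalF)$, so $\BFcalX^k\in S$; hence the indicator terms vanish and $G_k:=G(\BFU_k\BFU_k^{\top},\BFcalS^k,\BFcalX^k)=\hat{F}(\BFU_k\BFU_k^{\top},\BFcalS^k,\BFcalX^k)\ge 0$, the latter being a sum of a squared Frobenius norm and $\lambda\|\BFcalS^k\|_{TV1}\ge 0$. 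Lemma~\ref{lemma: Sufficient decrease property} then forces $\{G_k\}$ to be non-increasing, so $0\le G_k\le G_0$ for all $k$.

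Next I would convert the decrease of $G$ into control of the successive displacements and close via the triangle inequality $\|\BFcalS^k\|_F\le \|\BFcalS^0\|_F+\sum_{j=0}^{k-1}\|\BFcalS^{j+1}-\BFcalS^j\|_F$ (and likewise for $\BFcalX$). Thus it suffices to bound the total displacement $\sum_{j=0}^{k-1}\|\BFcalS^{j+1}-\BFcalS^j\|_F$ by $G_0/\rho$. The cleanest route that lands exactly on the stated constant is the per-step inequality $\|\BFcalS^{j+1}-\BFcalS^j\|_F\le \rho^{-1}(G_j-G_{j+1})$, which telescopes to $\sum_{j=0}^{k-1}\|\BFcalS^{j+1}-\BFcalS^j\|_F\le \rho^{-1}(G_0-G_k)\le G_0/\rho$; feeding this into the triangle inequality yields exactly the asserted bound, and the argument for $\BFcalX$ is identical, using in addition that the $\Omega$-block of $\BFcalX^k$ is frozen at $\BFcalP_{\Omega}(\BFcalF)$ for every $k$ so that only the $\bar{\Omega}$-block needs to be tracked.

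The main obstacle is establishing precisely that first-power per-step inequality. Lemma~\ref{lemma: Sufficient decrease property} only supplies the \emph{squared} estimate $\rho\|\BFcalS^{j+1}-\BFcalS^j\|_F^2\le G_j-G_{j+1}$, whose summation gives square-summability of the displacements but not, on its own, a uniform bound on the sum of their norms. I would therefore try to extract the linear bound directly from the proximal structure of the updates: from the first-order optimality condition $2\rho(\BFcalS^k-\BFcalS^{k+1})\in \partial h(\BFcalS^{k+1})$ of the strongly convex subproblem~\eqref{eq: Update S} (with $h$ its smooth-plus-$TV1$ part), and from the closed form $\BFcalX^{k+1}=\BFcalP_{\bar{\Omega}}\big((\BFcalF_{k+1}+\rho\BFcalX^k)/(1+\rho)\big)+\BFcalP_{\Omega}(\BFcalF)$ of~\eqref{eq: Update X}, in which each factor $\BFU^{(n)}\BFU^{(n)\top}$ acts as a non-expansive projection.

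Reconciling the quadratic descent that strong convexity naturally furnishes with the linear per-step control needed to reach the constant $G_0/\rho$ is the delicate point; once that single inequality is secured, the remainder is routine telescoping and triangle-inequality bookkeeping, and the two bounds in~\eqref{ineq: X,S bound} follow in parallel.
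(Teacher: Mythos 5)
Your overall route --- triangle inequality $\|\BFcalS^k\|_F\le\|\BFcalS^0\|_F+\sum_{i=1}^{k}\|\BFcalS^{i}-\BFcalS^{i-1}\|_F$ followed by telescoping the decrease of $G$ --- is exactly the paper's, but your proposal is incomplete at precisely the point you yourself flag: you never establish the first-power per-step inequality $\|\BFcalS^{i}-\BFcalS^{i-1}\|_F\le(G_{i-1}-G_i)/\rho$. Lemma~\ref{lemma: Sufficient decrease property} only gives $\rho\|\BFcalS^{i}-\BFcalS^{i-1}\|_F^2\le G_{i-1}-G_i$, and square-summability of the displacements does not imply summability (take $\|\Delta_i\|=1/i$), so the sum $\sum_i\|\BFcalS^i-\BFcalS^{i-1}\|_F$ cannot be bounded by $G_0/\rho$ from the sufficient decrease property alone. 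Your sketched repair via the proximal first-order conditions and non-expansiveness of the projections is left entirely unexecuted, and it is not clear it can deliver a \emph{linear} per-step bound: strong convexity of the subproblems again naturally yields quadratic, not linear, control of the displacement by the function decrease. In the Kurdyka--\L{}ojasiewicz framework the finite-length property $\sum_k\|\Delta_k\|_F<\infty$ is a downstream consequence of the full analysis (sufficient decrease \emph{plus} the subgradient bound \emph{plus} the K\L{} property, as in Theorem~\ref{thm: convergence}(iii)), and it cannot be invoked here without circularity, since Proposition~\ref{proposition: X,S bound} is needed to set up the Lipschitz constant in Proposition~\ref{proposition: Lipschitz continuous} and hence Lemma~\ref{lemma: subgradient lower bound}.

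It is worth noting that you have in fact diagnosed a gap that is present in the paper's own argument: inequality~\eqref{ineq: X,S bound 2} is justified there by citing \eqref{eq: sufficient decrease}, which silently replaces the squared displacement by the displacement itself. So your proposal reproduces the paper's proof up to and including its weakest step, is commendably honest about where the difficulty lies, but does not constitute a complete proof of the stated bounds.
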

\begin{proof} To start with,
\begin{subequations}
 \begin{align}
          \|\BFcalS^k\|_F & = \|\sum_{i=1}^k(\BFcalS^i-\BFcalS^{i-1}) + \BFcalS^0\|_F  \nonumber \\ 
          & \leq \sum_{i=1}^k \|\BFcalS^i-\BFcalS^{i-1}\| + \|\BFcalS^0\|_F  \label{ineq: X,S bound 1}\\
          & \leq \sum_{i=1}^k \Big(G(\BFU_{i-1}\BFU_{i-1}^{\top},\BFcalS^{i-1},\BFcalX^{i-1}) - G(\BFU_i\BFU_i^{\top},\BFcalS^i,\BFcalX^i)\Big)/\rho  + \|\BFcalS^0\|_F  \label{ineq: X,S bound 2}\\
          & = \Big(G(\BFU_0\BFU_0^{\top},\BFcalS^0,\BFcalX^0) - G(\BFU_k\BFU_k^{\top},\BFcalS^k,\BFcalX^k)\Big)/\rho +\|\BFcalS^0\|_F \nonumber \\
          & \leq G(\BFU_0\BFU_0^{\top},\BFcalS^0,\BFcalX^0)/\rho + \|\BFcalS^0\|_F \label{ineq: X,S bound 3},
 \end{align}
 \end{subequations}
 where the inequality~\eqref{ineq: X,S bound 1} comes from triangle inequality, the inequality~\eqref{ineq: X,S bound 2} comes from \eqref{eq: sufficient decrease} in Lemma~\ref{lemma: Sufficient decrease property}, and the last inequality~\eqref{ineq: X,S bound 3}  is due to the fact that  $G(\BFU_k\BFU_k^{\top},\BFcalS^k,\BFcalX^k)\geq 0$. The same proof can also be applied to $\BFcalX^k$. In practice, we can set $\BFcalS^0=0$, $\BFcalX^0=\BFcalP(\BFcalF)$, $\BFU_0\BFU_0^{\top}$ is the HOSVD of  $\BFcalX^0$. By doing so, $G(\BFU_0\BFU_0^{\top},\BFcalS^0,\BFcalX^0)\leq \|\BFcalX^0\|_F^2$, which is bounded by the input data.
\end{proof}

\begin{proposition}\label{proposition: Lipschitz continuous}
For  bounded $\|\BFcalS\|_F^2$ and $\|\BFcalX\|_F^2$, there exists a constant $L \coloneqq (\|\BFcalX\|_F^2 + \|\BFcalS\|_F^2)\frac{2^{N}(\prod_{i=1}^Nr_i)}{\sqrt{N-1}}$  such that for any pair of 
	$\hat{\BFU}\hat{\BFU}{}^{\top} , \tilde{\BFU}\tilde{\BFU}{}^{\top},\forall \ n \in [N] $
	\begin{equation}\label{eq: L-smooth}
	    \|- \nabla_{\BFU^{(n)}\BFU^{(n)\top}} \hat{F}(\hat{\BFU}\hat{\BFU}{}^{\top},\BFcalS,\BFcalX)  + \nabla_{\BFU^{(n)}\BFU^{(n)\top}} \hat{F}(\tilde{\BFU}\tilde{\BFU}{}^{\top},\BFcalS,\BFcalX) \|_F\leq L\|\hat{\BFU}\hat{\BFU}{}^{\top} - \tilde{\BFU}\tilde{\BFU}{}^{\top}\|_F.
	\end{equation}
\end{proposition}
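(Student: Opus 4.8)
The plan is to discard every part of $\hat F$ that is independent of $\BFU$, compute $\nabla_{\BFU^{(n)}\BFU^{(n)\top}}\hat F$ in closed form, and then exploit that this gradient is \emph{multilinear} in the three orthogonal projectors $\{\BFU^{(i)}\BFU^{(i)\top}\}$, each of which is uniformly bounded in both spectral and Frobenius norm. Since $\lambda\|\BFcalS\|_{TV1}$ does not contain $\BFU$, only the quadratic fidelity term contributes. Using that every $\BFU^{(i)}\BFU^{(i)\top}$ is symmetric and idempotent, and reusing the computation behind $\BFPsi^{(n)}$ in Lemma~\ref{lemma: Equivalent Trace Optimization}, I would obtain
\begin{equation*}
\nabla_{\BFU^{(n)}\BFU^{(n)\top}}\hat F(\BFU\BFU^\top,\BFcalS,\BFcalX) = -2\big(\BFI - \BFU^{(n)}\BFU^{(n)\top}\big)\BFPsi^{(n)}, \quad \BFPsi^{(n)} = (\BFX_{(n)}-\BFS_{(n)})\Big(\bigotimes_{i\neq n}\BFU^{(i)}\BFU^{(i)\top}\Big)(\BFX_{(n)}-\BFS_{(n)})^\top .
\end{equation*}
The structural point is that this is affine in $\BFU^{(n)}\BFU^{(n)\top}$ and multilinear in the remaining projectors, while $\BFcalX,\BFcalS$ are frozen so that $\BFX_{(n)}-\BFS_{(n)}$ is a constant matrix with $\|\BFX_{(n)}-\BFS_{(n)}\|_F = \|\BFcalX-\BFcalS\|_F$.

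Second, to bound the difference of two gradients I would telescope in the block that changes. Writing $\hat{\BFPsi}^{(n)},\tilde{\BFPsi}^{(n)}$ for the two values of $\BFPsi^{(n)}$, the difference splits as
\begin{equation*}
\big(\BFI-\hat{\BFU}^{(n)}\hat{\BFU}^{(n)\top}\big)\big(\hat{\BFPsi}^{(n)}-\tilde{\BFPsi}^{(n)}\big) + \big(\tilde{\BFU}^{(n)}\tilde{\BFU}^{(n)\top}-\hat{\BFU}^{(n)}\hat{\BFU}^{(n)\top}\big)\tilde{\BFPsi}^{(n)}.
\end{equation*}
I would then invoke the uniform bounds available for orthogonal projectors: $\|\BFI-\BFU^{(n)}\BFU^{(n)\top}\|\le 1$, and $\|\BFPsi^{(n)}\|\le\|\BFX_{(n)}-\BFS_{(n)}\|^2\le\|\BFcalX-\BFcalS\|_F^2$ (the inner Kronecker factor being a projector of spectral norm at most $1$). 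Submultiplicativity of the operator norm against the Frobenius norm then bounds the second summand by $\|\BFcalX-\BFcalS\|_F^2\,\|\hat{\BFU}^{(n)}\hat{\BFU}^{(n)\top}-\tilde{\BFU}^{(n)}\tilde{\BFU}^{(n)\top}\|_F$ and the first by $\|\BFcalX-\BFcalS\|_F^2$ times the Frobenius distance between the two Kronecker products over the remaining modes.

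Third, I would reduce that Kronecker-product difference to individual projector differences by a second telescoping, inserting and removing one factor at a time:
\begin{equation*}
\Big\|\bigotimes_{i\neq n}\hat{\BFU}^{(i)}\hat{\BFU}^{(i)\top}-\bigotimes_{i\neq n}\tilde{\BFU}^{(i)}\tilde{\BFU}^{(i)\top}\Big\|_F \le \sum_{i\neq n}\Big(\prod_{j\neq n,\,j\neq i}\sqrt{r_j}\Big)\big\|\hat{\BFU}^{(i)}\hat{\BFU}^{(i)\top}-\tilde{\BFU}^{(i)}\tilde{\BFU}^{(i)\top}\big\|_F ,
\end{equation*}
using $\|\BFU^{(j)}\BFU^{(j)\top}\|_F=\sqrt{r_j}$ and $\|\BFU^{(j)}\BFU^{(j)\top}\|\le 1$ for the surviving factors. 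Collecting the two summands, majorizing the surviving products and the $\|\BFcalX-\BFcalS\|_F^2$ factor by $\|\BFcalX-\BFcalS\|_F^2\le 2(\|\BFcalX\|_F^2+\|\BFcalS\|_F^2)$, and applying Cauchy--Schwarz to pass from the per-block sum to the tuple norm $\|\hat{\BFU}\hat{\BFU}^\top-\tilde{\BFU}\tilde{\BFU}^\top\|_F=\big(\sum_n\|\hat{\BFU}^{(n)}\hat{\BFU}^{(n)\top}-\tilde{\BFU}^{(n)}\tilde{\BFU}^{(n)\top}\|_F^2\big)^{1/2}$, produces a constant of the advertised form $L=(\|\BFcalX\|_F^2+\|\BFcalS\|_F^2)\frac{2^N(\prod_{i=1}^N r_i)}{\sqrt{N-1}}$.

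The main obstacle is not the analysis but the bookkeeping of constants: one must carry the product $\prod_{j}\sqrt{r_j}$ surviving each Kronecker term and choose the crude majorizations (replacing spectral by Frobenius bounds on the remaining factors, and counting the $N-1$ terms through Cauchy--Schwarz) so that the estimate collapses to the clean closed form, in particular so that the stated $2^N$ and the denominator $\sqrt{N-1}$ emerge rather than a tighter per-mode exponent. A secondary care point is conceptual: because the gradient is taken with respect to the matrix variable $\BFU^{(n)}\BFU^{(n)\top}$, one must differentiate treating it as a free symmetric matrix and impose idempotency only afterwards, exactly as in the derivation of $\BFPsi^{(n)}$; and the hypotheses of the proposition (boundedness of $\|\BFcalS\|_F^2,\|\BFcalX\|_F^2$) are precisely what make the multilinear map globally, not merely locally, Lipschitz.
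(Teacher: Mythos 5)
Your proposal follows the same overall strategy as the paper: express $\nabla_{\BFU^{(n)}\BFU^{(n)\top}}\hat F$ through the matrix $\BFPsi^{(n)}=(\BFX_{(n)}-\BFS_{(n)})\big(\bigotimes_{i\neq n}\BFU^{(i)}\BFU^{(i)\top}\big)(\BFX_{(n)}-\BFS_{(n)})^{\top}$, bound the gradient difference by $\|(\BFX_{(n)}-\BFS_{(n)})(\BFX_{(n)}-\BFS_{(n)})^{\top}\|\le 2(\|\BFcalX\|_F^2+\|\BFcalS\|_F^2)$ times the Frobenius distance between the two Kronecker products, reduce that distance to per-mode projector differences, and finish with Cauchy--Schwarz over the modes. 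Two points of divergence are worth recording. First, you carry the factor $\BFI-\BFU^{(n)}\BFU^{(n)\top}$ in the gradient; the paper instead uses the on-manifold simplification $\mathrm{Tr}(P\BFPsi^{(n)}P)=\mathrm{Tr}(P\BFPsi^{(n)})$ for idempotent $P$ and takes the gradient to be $\pm\BFPsi^{(n)}$ alone (this is the convention it also relies on in the subgradient-lower-bound lemma). Your convention generates the additional summand $(\tilde{\BFU}^{(n)}\tilde{\BFU}^{(n)\top}-\hat{\BFU}^{(n)}\hat{\BFU}^{(n)\top})\tilde{\BFPsi}^{(n)}$, which contributes an extra $2(\|\BFcalX\|_F^2+\|\BFcalS\|_F^2)\|\hat{\BFU}^{(n)}\hat{\BFU}^{(n)\top}-\tilde{\BFU}^{(n)}\tilde{\BFU}^{(n)\top}\|_F$; this can still be absorbed into the stated $L$ because your Kronecker estimate is tighter than the paper's, but it is not free, and you should verify the absorption explicitly rather than asserting the advertised constant "emerges." Second, and in your favor: your treatment of the Kronecker-product difference by telescoping one factor at a time, with coefficients $\prod_{j\neq n,i}\sqrt{r_j}$ from $\|\BFU^{(j)}\BFU^{(j)\top}\|_F=\sqrt{r_j}$, is a genuine inequality, whereas the paper's proof starts from the claimed identity $\|\hat{\BFU}_{\BFPsi^{(n)}}\hat{\BFU}_{\BFPsi^{(n)}}^{\top}-\tilde{\BFU}_{\BFPsi^{(n)}}\tilde{\BFU}_{\BFPsi^{(n)}}^{\top}\|_F=\prod_{i\neq n}\|\hat{\BFU}^{(i)}\hat{\BFU}^{(i)\top}-\tilde{\BFU}^{(i)}\tilde{\BFU}^{(i)\top}\|_F$, which is false in general (take all but one factor equal: the left side is nonzero while the right side vanishes). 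So on this step your route is not merely different but actually repairs a flaw in the paper's argument, at the cost of slightly messier constants that you must check still fit under $L=(\|\BFcalX\|_F^2+\|\BFcalS\|_F^2)\frac{2^{N}(\prod_{i=1}^{N}r_i)}{\sqrt{N-1}}$ (they do, with room to spare, for $N=3$ and $r_i\ge 1$).
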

\begin{proof} Define $\|\cdot\|$ is the 2-operator norm.
 \begin{subequations}
	\begin{align}
	  &\|- \nabla_{\BFU^{(n)}\BFU^{(n)\top}} \hat{F}(\hat{\BFU}\hat{\BFU}{}^{\top},\BFcalS,\BFcalX)  + \nabla_{\BFU^{(n)}\BFU^{(n)\top}} \hat{F}(\tilde{\BFU}\tilde{\BFU}{}^{\top},\BFcalS,\BFcalX) \|_F   \nonumber\\
	       =& \| (\BFX_{(n)} -\BFS_{(n)}) \cdot \hat{\BFU}_{\BFPsi^{(n)}}\cdot {\hat{\BFU}_{\BFPsi^{(n)}}}^{\top}   \cdot  (\BFX_{(n)} -\BFS_{(n)}){}^{\top}    \nonumber \\
           &- (\BFX_{(n)}-\BFS_{(n)}) \cdot \tilde{\BFU}_{\BFPsi^{(n)}}\cdot {\tilde{\BFU}_{\BFPsi^{(n)}}}^{\top}   \cdot  (\BFX_{(n)} -\BFS_{(n)}){}^{\top}\|_F    \nonumber \\
           =& \|(\BFX_{(n)} -\BFS_{(n)}) \cdot (\hat{\BFU}_{\BFPsi^{(n)}}\cdot {\hat{\BFU}_{\BFPsi^{(n)}}}^{\top} -\tilde{\BFU}_{\BFPsi^{(n)}}\cdot {\tilde{\BFU}_{\BFPsi^{(n)}}}^{\top})   \cdot  (\BFX_{(n)} -\BFS_{(n)}){}^{\top} \|_F   \nonumber  \\
               \leq & \|(\BFX_{(n)} -\BFS_{(n)})(\BFX_{(n)} -\BFS_{(n)})^{\top}\| \cdot\|\hat{\BFU}_{\BFPsi^{(n)}}\cdot {\hat{\BFU}_{\BFPsi^{(n)}}}^{\top} -\tilde{\BFU}_{\BFPsi^{(n)}}\cdot {\tilde{\BFU}_{\BFPsi^{(n)}}}^{\top}\|_F  \label{eq: L-smooth 1}\\
            \leq & L\|\hat{\BFU}\hat{\BFU}{}^{\top} - \tilde{\BFU}\tilde{\BFU}{}^{\top}\|_F. \label{eq: L-smooth 2}
	\end{align}
	\end{subequations}
	where the inequality~\eqref{eq: L-smooth 1}  is due to the Frobenius norm and operator norm inequality, the  inequality~\eqref{eq: L-smooth 2} is based on the definition of $\hat{\BFU}_{\BFPsi^{(n)}}$,  $\tilde{\BFU}_{\BFPsi^{(n)}}$, and $L \coloneqq (\|\BFcalX\|_F^2 + \|\BFcalS\|_F^2)\frac{2^{N}(\prod_{i=1}^Nr_i)}{\sqrt{N-1}}$. 
	
	Next, we will show how we get the value for $L$.
	\begin{subequations} \label{ineq: UU bound}
	    \begin{align}
	         &\|\hat{\BFU}_{\BFPsi^{(n)}}\cdot {\hat{\BFU}_{\BFPsi^{(n)}}}^{\top} -\tilde{\BFU}_{\BFPsi^{(n)}}\cdot {\tilde{\BFU}_{\BFPsi^{(n)}}}^{\top}\|_F   \nonumber \\
	         = &\prod_{i\neq n} \|\hat{\BFU}^{(i)}\hat{\BFU}^{(i)\top} - \tilde{\BFU}^{(i)}\tilde{\BFU}^{(i)\top}\|_F  \label{ineq: UU bound 0} \\
	         = &\frac{1}{N-1}\sum_{j\neq n}\prod_{i\neq n} \|\hat{\BFU}^{(i)}\hat{\BFU}^{(i)\top} - \tilde{\BFU}^{(i)}\tilde{\BFU}^{(i)\top}\|_F \nonumber\\
	         = &\frac{2^{N-1}}{N-1}\sum_{j\neq n}(\prod_{i\neq n,j}r_i) \|\hat{\BFU}^{(j)}\hat{\BFU}^{(j)\top} - \tilde{\BFU}^{(j)}\tilde{\BFU}^{(j)\top}\|_F \nonumber \\
	         \leq  &\frac{2^{N-1}(\prod_{i=1}^Nr_i)}{N-1}\sum_{j\neq n} \|\hat{\BFU}^{(j)}\hat{\BFU}^{(j)\top} - \tilde{\BFU}^{(j)}\tilde{\BFU}^{(j)\top}\|_F  \label{ineq: UU bound 1}\\
	         \leq & \frac{2^{N-1}(\prod_{i=1}^Nr_i)}{N-1}\sqrt{N-1}\sqrt{\sum_{j\neq n}\|\hat{\BFU}^{(j)}\hat{\BFU}^{(j)\top} - \tilde{\BFU}^{(j)}\tilde{\BFU}^{(j)\top}\|_F^2} \label{ineq: UU bound 2}\\
	         \leq & \frac{2^{N-1}(\prod_{i=1}^Nr_i)}{\sqrt{N-1}}\|\hat{\BFU}\hat{\BFU}{}^{\top} - \tilde{\BFU}\tilde{\BFU}{}^{\top}\|_F, \label{ineq: UU bound 3}
	    \end{align}
	\end{subequations}
	where the equality~\eqref{ineq: UU bound 0} is based on the definition of $\hat{\BFU}_{\BFPsi^{(n)}}$, the inequality~\eqref{ineq: UU bound 1} is due to the fact that $r_i\geq 1$, the inequality~\eqref{ineq: UU bound 2} is from the Cauchy–Schwarz inequality, and the inequality~\eqref{ineq: UU bound 3} comes from $j\neq n$.
There is another inequality that needs to prove
	\begin{subequations} \label{ineq: X-S bound}
	    \begin{align}
	          & \|(\BFX_{(n)} -\BFS_{(n)})(\BFX_{(n)} -\BFS_{(n)})^{\top}\|   \nonumber \\
	         \leq & \|(\BFX_{(n)} -\BFS_{(n)})(\BFX_{(n)} -\BFS_{(n)})^{\top}\|_F \label{ineq: X-S bound 1} \\
	     =    & \|\BFX_{(n)} -\BFS_{(n)}\|_F^2  \nonumber\\
	        \leq   &2(\|\BFcalX\|_F^2 + \|\BFcalS\|_F^2),  \label{ineq: X-S bound 2}
	    \end{align}
	\end{subequations}
	where the inequality~\eqref{ineq: X-S bound 1} is based on the definition of  the 2-operator norm and the inequality~\eqref{ineq: X-S bound 2} is due to the Cauchy–Schwarz inequality. In addition, $\|\BFcalX\|_F^2 + \|\BFcalS\|_F^2$ is bounded in our algorithm based on~\eqref{ineq: X,S bound}. 
	  Based on the above two inequalities~\eqref{ineq: UU bound} and~\eqref{ineq: X-S bound}, $L \coloneqq (\|\BFcalX\|_F^2 + \|\BFcalS\|_F^2)\frac{2^{N}(\prod_{i=1}^Nr_i)}{\sqrt{N-1}}$ can be derived.
\end{proof}
Now our second main lemma is ready to be presented.
 \begin{lemma}[Subgradient lower bound]\label{lemma: subgradient lower bound}
Supposed that $0<\rho<\infty$,~$\{\BFU_k\BFU_k^{\top},\BFcalS^k,\BFcalX^k\}_{k\geq 0}$ is the sequence generated from the proposed Algorithm~\ref{alg: PAM}, then the sequence satisfies
\begin{equation*}\label{eq: subgradient lower bound}
\|\omega_{k+1}\|\leq\rho_1 \sqrt{\|\BFU_{k+1}\BFU_{k+1}^{\top}-\BFU_{k}\BFU_{k}^{\top}\|_F^2+\|\BFcalX^{k+1}-\BFcalX^k\|_F^2+\|\BFcalS^{k+1}-\BFcalS^k\|_F^2},
\end{equation*}
where $\rho_1 = \max \{2\sqrt{3}\rho+3L,4+2\rho+\kappa,  2+2\rho+\kappa\}$ with $L=\frac{2^{N}(\prod_{i=1}^Nr_i)}{\sqrt{N-1}}\Big(2G(\BFU_0\BFU_0^{\top},\BFcalS^0,\BFcalX^0)/\rho + \|\BFcalS^0\|_F+\|\BFcalX^0\|_F\Big)^2$ and $\kappa = 4G(\BFU_0\BFU_0^{\top},\BFcalS^0,\BFcalX^0)/\rho + 2\|\BFcalS^0\|_F+2\|\BFcalX^0\|_F$.
\end{lemma}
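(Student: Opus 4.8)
The plan is to exhibit one explicit element $\omega_{k+1}\in\partial G(\BFU_{k+1}\BFU_{k+1}^{\top},\BFcalS^{k+1},\BFcalX^{k+1})$, assembled blockwise, and to bound its norm by the consecutive iterate gaps. Because the nonsmooth part of $G$ in \eqref{eq: proposed formulation unconstrained} is separable across the blocks $\{\BFU^{(n)}\BFU^{(n)\top}\}_{n\in[3]}$, $\BFcalS$ and $\BFcalX$, the limiting subdifferential $\partial G$ splits into the gradient of the smooth quadratic part of $\hat F$ plus, respectively, the normal cone $\partial\delta_{S_n}$ in each factor block, the term $\lambda\,\partial\|\cdot\|_{TV1}$ in the $\BFcalS$ block, and the normal cone $\partial\delta_S$ in the $\BFcalX$ block. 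For each block I will read a subgradient off the first-order optimality of its subproblem. Throughout I use that the Tucker projection $\Pi_{k+1}(\cdot)\coloneqq(\cdot)\times_1\BFU^{(1)}_{k+1}\BFU^{(1)\top}_{k+1}\times_2\BFU^{(2)}_{k+1}\BFU^{(2)\top}_{k+1}\times_3\BFU^{(3)}_{k+1}\BFU^{(3)\top}_{k+1}$ is an orthogonal projection, so that the smooth part of $\hat F$ equals $\|(\mathrm{Id}-\Pi_{k+1})(\BFcalX-\BFcalS)\|_F^2$ with gradients $\pm 2(\mathrm{Id}-\Pi_{k+1})(\BFcalX-\BFcalS)$ in $\BFcalX$ and $\BFcalS$; Proposition~\ref{proposition: S} is what guarantees these genuine partial gradients coincide with the frozen-slot gradients used inside the subproblems when the frozen slot sits on the diagonal.

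First I would treat the easy blocks $\BFcalX$ and $\BFcalS$. Optimality of \eqref{eq: Update X} supplies $\xi_{\BFcalX}\in\partial\delta_S(\BFcalX^{k+1})$ with $2\big(\BFcalX^{k+1}-\BFcalS^{k+1}-\Pi_{k+1}(\BFcalX^k-\BFcalS^{k+1})\big)+2\rho(\BFcalX^{k+1}-\BFcalX^k)+\xi_{\BFcalX}=\bm 0$; adding the true gradient $2(\mathrm{Id}-\Pi_{k+1})(\BFcalX^{k+1}-\BFcalS^{k+1})$ to $\xi_{\BFcalX}$, the only discrepancy is the frozen $\BFcalX^k$ inside $\Pi_{k+1}$, which telescopes so that the $\BFcalX$-block subgradient becomes an operator of the form $\Pi_{k+1}+\rho\,\mathrm{Id}$ applied to $\BFcalX^{k+1}-\BFcalX^k$. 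The identical manipulation on \eqref{eq: Update S}, now with $\xi_{\BFcalS}\in\lambda\partial\|\BFcalS^{k+1}\|_{TV1}$, produces an $\BFcalS$-block subgradient that is a combination of $\mathrm{Id}-\Pi_{k+1}$ acting on $\BFcalX^{k+1}-\BFcalX^k$ and $\Pi_{k+1}+\rho\,\mathrm{Id}$ acting on $\BFcalS^{k+1}-\BFcalS^k$. Since $\|\Pi_{k+1}\|\le 1$, each such block norm is bounded by a constant of the form $c+2\rho+\kappa$ times the relevant gaps, where the uniform bound $\kappa=4G(\BFU_0\BFU_0^{\top},\BFcalS^0,\BFcalX^0)/\rho+2\|\BFcalS^0\|_F+2\|\BFcalX^0\|_F$ on $\|\BFcalX^k\|_F+\|\BFcalS^k\|_F$ is supplied by Proposition~\ref{proposition: X,S bound}.

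The main work, and the chief obstacle, is the $\BFU^{(n)}\BFU^{(n)\top}$ blocks. By Lemma~\ref{lemma: Equivalent Trace Optimization} the update \eqref{eq: Update U^(n) final} is the minimization over $S_n$ of $-\mathrm{Tr}\big(\BFU^{(n)}\BFU^{(n)\top}\BFPsi_k^{(n)}\big)+\rho\|\BFU^{(n)}\BFU^{(n)\top}-\BFU^{(n)}_k\BFU^{(n)\top}_k\|_F^2$, and a short computation exploiting the symmetric idempotence of every mode factor identifies $-\BFPsi_k^{(n)}$ with $\nabla_{\BFU^{(n)}\BFU^{(n)\top}}\hat F$ evaluated at the \emph{partially updated} Gauss--Seidel point $(\{\BFU^{(i)}_{k+1}\BFU^{(i)\top}_{k+1}\}_{i<n},\{\BFU^{(i)}_{k}\BFU^{(i)\top}_{k}\}_{i>n},\BFcalS^k,\BFcalX^k)$. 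Optimality then gives $\zeta_n\in\partial\delta_{S_n}(\BFU^{(n)}_{k+1}\BFU^{(n)\top}_{k+1})$ with $\zeta_n=\BFPsi_k^{(n)}-2\rho\big(\BFU^{(n)}_{k+1}\BFU^{(n)\top}_{k+1}-\BFU^{(n)}_{k}\BFU^{(n)\top}_{k}\big)$, so the natural block subgradient at the final iterate is
\[
\omega^{(n)}_{k+1}=-\widetilde\BFPsi^{(n)}_{k+1}+\zeta_n=\big(\BFPsi_k^{(n)}-\widetilde\BFPsi^{(n)}_{k+1}\big)-2\rho\big(\BFU^{(n)}_{k+1}\BFU^{(n)\top}_{k+1}-\BFU^{(n)}_{k}\BFU^{(n)\top}_{k}\big),
\]
where $\widetilde\BFPsi^{(n)}_{k+1}=-\nabla_{\BFU^{(n)}\BFU^{(n)\top}}\hat F$ at the fully synchronized point $(\BFU_{k+1}\BFU_{k+1}^{\top},\BFcalS^{k+1},\BFcalX^{k+1})$. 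The crux is to bound $\|\BFPsi_k^{(n)}-\widetilde\BFPsi^{(n)}_{k+1}\|_F$, the difference of the gradient between the asynchronous and synchronous points: I would split it into the change in the higher-mode factors $\{\BFU^{(i)}\BFU^{(i)\top}\}_{i>n}$, controlled by the Frobenius/operator Lipschitz estimate of Proposition~\ref{proposition: Lipschitz continuous} (hence by $L\|\BFU_{k+1}\BFU_{k+1}^{\top}-\BFU_k\BFU_k^{\top}\|_F$), and the change from $(\BFcalS^k,\BFcalX^k)$ to $(\BFcalS^{k+1},\BFcalX^{k+1})$, controlled by the local Lipschitzness of the quadratic-in-$(\BFcalX-\BFcalS)$ map $\BFPsi^{(n)}$ together with the uniform bounds of Proposition~\ref{proposition: X,S bound}. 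Summing over the three modes and using $r_i\ge1$ yields the constant $2\sqrt3\,\rho+3L$ for this block.

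Finally I would stack the five block subgradients into $\omega_{k+1}$ and apply the triangle and Cauchy--Schwarz inequalities across blocks, so that $\|\omega_{k+1}\|$ is dominated by the root-sum-of-squares of the three gap types $\|\BFU_{k+1}\BFU_{k+1}^{\top}-\BFU_k\BFU_k^{\top}\|_F$, $\|\BFcalS^{k+1}-\BFcalS^k\|_F$, $\|\BFcalX^{k+1}-\BFcalX^k\|_F$; collecting the per-block constants into their maximum produces $\rho_1=\max\{2\sqrt3\,\rho+3L,\,4+2\rho+\kappa,\,2+2\rho+\kappa\}$, with $L$ obtained by substituting the uniform bound of Proposition~\ref{proposition: X,S bound} into the modulus of Proposition~\ref{proposition: Lipschitz continuous}. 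I expect the delicate step to be the $\BFPsi^{(n)}$ mismatch created by the Gauss--Seidel ordering, since that is precisely where the boundedness of the iterates and the Lipschitz modulus $L$ must be combined to absorb the simultaneous variation of the higher-mode factors and of $\BFcalS,\BFcalX$.
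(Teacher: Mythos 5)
Your proposal is correct and follows essentially the same route as the paper: for each block you read an explicit subgradient off the first-order optimality condition of the corresponding subproblem, express the gap between the Gauss--Seidel (asynchronous) gradient and the gradient at the fully updated iterate via inserted intermediate points, bound that gap using the Lipschitz modulus of Proposition~\ref{proposition: Lipschitz continuous} together with the iterate bounds of Proposition~\ref{proposition: X,S bound}, and assemble the blocks with the triangle and Cauchy--Schwarz inequalities. The only cosmetic difference is that you phrase the $\BFcalS$- and $\BFcalX$-block bounds through the orthogonal-projection operator $\Pi_{k+1}$ rather than writing out the mode products, which is equivalent to the paper's \eqref{eq: subgradient S final} and \eqref{eq: subgradient X final}.
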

 \begin{proof}
     	According to the first-order optimality condition for each sub-problem~\eqref{eq: Update U^(n)} in $k$-th iteration of the proposed algorithm, we have
 	\begin{align*}
 	    	\textbf{0} \in & \nabla_{\BFU^{(n)}\BFU^{(n)\top}}  \hat{F}(\{\BFU^{(i)}_{k+1}\BFU_{k+1}^{(i)\top}\}_{i\leq n},\{\BFU^{(i)}_{k}\BFU_{k}^{(i)\top}\}_{i>n},\BFcalS^k, \BFcalX^k) \\ 
 	   + &	 2\rho(\BFU^{(n)}_{k+1}\BFU^{(n)\top}_{k+1} - \BFU^{(n)}_k\BFU^{(n)\top}_k) + \partial_{\BFU^{(n)}\BFU^{(n)\top}} \delta_{S_n}(\BFU^{(n)}_{k+1}\BFU^{(n)\top}_{k+1}),
 	\end{align*}
 which can be rewritten as
 \begin{equation} \label{eq: subgradient Un}
     \begin{aligned}
            & -2\rho(\BFU^{(n)}_{k+1}\BFU^{(n)\top}_{k+1} - \BFU^{(n)}_k\BFU^{(n)\top}_k) - \nabla_{\BFU^{(n)}\BFU^{(n)\top}}  \hat{F}(\{\BFU^{(i)}_{k+1}\BFU_{k+1}^{(i)\top}\}_{i\leq n},\{\BFU^{(i)}_{k}\BFU_{k}^{(i)\top}\}_{i>n},\BFcalS^k, \BFcalX^k)\\
  &+ \nabla_{\BFU^{(n)}\BFU^{(n)\top}} \hat{F}(\BFU_{k+1}\BFU_{k+1}^{\top},\BFcalS^k,\BFcalX^k)- \nabla_{\BFU^{(n)}\BFU^{(n)\top}} \hat{F}(\BFU_{k+1}\BFU_{k+1}^{\top},\BFcalS^k,\BFcalX^k) \\
  & + \nabla_{\BFU^{(n)}\BFU^{(n)\top}} \hat{F}(\BFU_{k+1}\BFU_{k+1}^{\top},\BFcalS^{k+1},\BFcalX^k)- \nabla_{\BFU^{(n)}\BFU^{(n)\top}} \hat{F}(\BFU_{k+1}\BFU_{k+1}^{\top},\BFcalS^{k+1},\BFcalX^k) \\
  &+ \nabla_{\BFU^{(n)}\BFU^{(n)\top}} \hat{F}(\BFU_{k+1}\BFU_{k+1}^{\top},\BFcalS^{k+1},\BFcalX^{k+1})\\
  &\in \nabla_{\BFU^{(n)}\BFU^{(n)\top}} \hat{F}(\BFU_{k+1}\BFU_{k+1}^{\top},\BFcalS^{k+1},\BFcalX^{k+1}) + \partial_{\BFU^{(n)}\BFU^{(n)\top}} \delta_{S_n}(\BFU^{(n)}_{k+1}\BFU^{(n)\top}_{k+1})\\
 &:=\partial_{\BFU^{(n)}\BFU^{(n)\top}} G(\BFU_{k+1}\BFU_{k+1}^{\top},\BFcalS^{k+1},\BFcalX^{k+1}).  
     \end{aligned}
 \end{equation}
	Based on the Definition~\ref{def: Subdifferentials}, we have the following proposition.
\begin{proposition}[\normalfont\textit{Subdifferentiability property}~\citep{bolte2014proximal}]~Given that $\Psi(\bm{x},\bm{y})=H(\bm{x},\bm{y}) +f(\bm{x}) +g(\bm{y})$, if $H$ is continuously differentiable, then for all $(\bm{x},\bm{y})$ we have
\begin{equation*} 
           \partial \Psi(\bm{x},\bm{y}) = (\nabla_{\bm{x}} H(\bm{x},\bm{y})+\partial f(\bm{x}),\nabla_{\bm{y}} H(\bm{x},\bm{y})+\partial g(\bm{y})).
\end{equation*}
\end{proposition}
Accordingly, based on the first-order optimality condition (well-known Fermat’s rule) of \eqref{eq: Update S},
\begin{equation}  \label{eq: subgradient S 1}
    \begin{aligned}
         \textbf{0} & \in 2\Big(\BFcalS^{k+1}-\BFcalX^k+(\BFcalX^k - \BFcalS^k) \times_1 \BFU^{(1)}_{k+1}\BFU_{k+1}^{(1)\top} \times_2 \BFU^{(2)}_{k+1}\BFU_{k+1}^{(2)\top}  \times_3 \BFU^{(3)}_{k+1}\BFU_{k+1}^{(3)\top}\Big)\\
     & + \lambda\partial\|\BFcalS^{k+1}\|_{TV1}+2\rho(\BFcalS^{k+1} -\BFcalS^k).
    \end{aligned}
\end{equation}
In addition,
\begin{equation} \label{eq: subgradient S 2}
\begin{aligned}
     &  \partial_{\BFcalS} G(\BFU_{k+1}\BFU_{k+1}^{\top},\BFcalS^{k+1},\BFcalX^{k+1}) \\
        = &  2(\BFcalS^{k+1}-\BFcalX^{k+1})+2(\BFcalX^{k+1} - \BFcalS^{k+1}) \times_1 \BFU^{(1)}_{k+1}\BFU_{k+1}^{(1)\top} \times_2 \BFU^{(2)}_{k+1}\BFU_{k+1}^{(2)\top}  \times_3 \BFU^{(3)}_{k+1}\BFU_{k+1}^{(3)\top}\\
     & + \lambda\partial\|\BFcalS^{k+1}\|_{TV1}.
\end{aligned}
\end{equation}
Combine with \eqref{eq: subgradient S 1} and \eqref{eq: subgradient S 2},
\begin{equation} \label{eq: subgradient S final}
    \begin{aligned}
    & -2(\BFcalX^{k+1}-\BFcalX^k)+2(\BFcalX^{k+1}-\BFcalX^k)\times_1 \BFU^{(1)}_{k+1}\BFU_{k+1}^{(1)\top} \times_2 \BFU^{(2)}_{k+1}\BFU_{k+1}^{(2)\top}  \times_3 \BFU^{(3)}_{k+1}\BFU_{k+1}^{(3)\top}\\
          & + 2(\BFcalS^k- \BFcalS^{k+1}) \times_1 \BFU^{(1)}_{k+1}\BFU_{k+1}^{(1)\top} \times_2 \BFU^{(2)}_{k+1}\BFU_{k+1}^{(2)\top}  \times_3 \BFU^{(3)}_{k+1}\BFU_{k+1}^{(3)\top} + 2\rho(\BFcalS^k- \BFcalS^{k+1})\\
           &  \in \partial_{\BFcalS} G(\BFU_{k+1}\BFU_{k+1}^{\top},\BFcalS^{k+1},\BFcalX^{k+1}).
    \end{aligned}
\end{equation}
Accordingly, based on the first order optimality condition (well-known Fermat’s rule) of \eqref{eq: Update X},
\begin{equation}  \label{eq: subgradient X 1}
    \begin{aligned}
         \textbf{0} & \in 2(\BFcalX^{k+1} - \BFcalS^{k+1}) - 2(\BFcalX^k - \BFcalS^{k+1}) \times_1 \BFU^{(1)}_{k+1}\BFU_{k+1}^{(1)\top} \times_2 \BFU^{(2)}_{k+1}\BFU_{k+1}^{(2)\top}  \times_3 \BFU^{(3)}_{k+1}\BFU_{k+1}^{(3)\top}\\
     & +2\rho(\BFcalX^{k+1}- \BFcalX^{k})+\partial\delta_{S}(\BFcalX^{k+1}).
    \end{aligned}
\end{equation}
In addition,
\begin{equation} \label{eq: subgradient X 2}
\begin{aligned}
     &  \partial_{\BFcalX} G(\BFU_{k+1}\BFU_{k+1}^{\top},\BFcalS^{k+1},\BFcalX^{k+1}) \\
        = &  2(\BFcalX^{k+1}-\BFcalS^{k+1})-2(\BFcalX^{k+1} - \BFcalS^{k+1}) \times_1 \BFU^{(1)}_{k+1}\BFU_{k+1}^{(1)\top} \times_2 \BFU^{(2)}_{k+1}\BFU_{k+1}^{(2)\top}  \times_3 \BFU^{(3)}_{k+1}\BFU_{k+1}^{(3)\top}\\
     & +\partial\delta_{S}(\BFcalX^{k+1}).
\end{aligned}
\end{equation}
Combine with \eqref{eq: subgradient X 1} and \eqref{eq: subgradient X 2}, we have
\begin{equation}  \label{eq: subgradient X final}
    \begin{aligned}
    &-2(\BFcalX^{k+1}-\BFcalX^k)\times_1 \BFU^{(1)}_{k+1}\BFU_{k+1}^{(1)\top} \times_2 \BFU^{(2)}_{k+1}\BFU_{k+1}^{(2)\top}  \times_3 \BFU^{(3)}_{k+1}\BFU_{k+1}^{(3)\top}+ 2\rho(\BFcalX^k- \BFcalX^{k+1})\\
             \in & \partial_{\BFcalX} G(\BFU_{k+1}\BFU_{k+1}^{\top},\BFcalS^{k+1},\BFcalX^{k+1}).
    \end{aligned}
\end{equation}
Thus 
\begin{subequations} \label{ineq: outcome 2}
    \begin{align}
           & \|\nabla_{\BFU^{(n)}\BFU^{(n)\top}} \hat{F}(\BFU_{k+1}\BFU_{k+1}^{\top},\BFcalS^k,\BFcalX^k)- \nabla_{\BFU^{(n)}\BFU^{(n)\top}} \hat{F}(\BFU_{k+1}\BFU_{k+1}^{\top},\BFcalS^{k+1},\BFcalX^k)\|_F \nonumber \\
           = & \|(\BFX_{(n)}^k -\BFS_{(n)}^k) \cdot \BFU_{\BFPsi^{(n)}_{k+1}}\cdot {\BFU_{\BFPsi^{(n)}_{k+1}}}^{\top}   \cdot  (\BFX_{(n)}^k -\BFS_{(n)}^k)^{\top}  \nonumber \\
           &- (\BFX_{(n)}^k -\BFS_{(n)}^{k+1}) \cdot \BFU_{\BFPsi^{(n)}_{k+1}}\cdot {\BFU_{\BFPsi^{(n)}_{k+1}}}^{\top}   \cdot  (\BFX_{(n)}^k -\BFS_{(n)}^{k+1})^{\top}\|_F \nonumber \\
           \leq &  \|(\BFX_{(n)}^k -\BFS_{(n)}^k) \cdot \BFU_{\BFPsi^{(n)}_{k+1}}\cdot {\BFU_{\BFPsi^{(n)}_{k+1}}}^{\top}   \cdot  (\BFX_{(n)}^k -\BFS_{(n)}^k)^{\top}  \label{ineq: outcome 2-1} \\
           &- (\BFX_{(n)}^k -\BFS_{(n)}^{k}) \cdot \BFU_{\BFPsi^{(n)}_{k+1}}\cdot {\BFU_{\BFPsi^{(n)}_{k+1}}}^{\top}   \cdot  (\BFX_{(n)}^k -\BFS_{(n)}^{k+1})^{\top}\|_F \nonumber\\
           &+\|(\BFX_{(n)}^k -\BFS_{(n)}^k) \cdot \BFU_{\BFPsi^{(n)}_{k+1}}\cdot {\BFU_{\BFPsi^{(n)}_{k+1}}}^{\top}   \cdot  (\BFX_{(n)}^k -\BFS_{(n)}^{k+1})^{\top}   \nonumber \\
           &- (\BFX_{(n)}^k -\BFS_{(n)}^{k+1}) \cdot \BFU_{\BFPsi^{(n)}_{k+1}}\cdot {\BFU_{\BFPsi^{(n)}_{k+1}}}^{\top}   \cdot  (\BFX_{(n)}^k -\BFS_{(n)}^{k+1})^{\top}\|_F \nonumber \\
           = &  \|(\BFX_{(n)}^k -\BFS_{(n)}^k) \cdot \BFU_{\BFPsi^{(n)}_{k+1}}\cdot {\BFU_{\BFPsi^{(n)}_{k+1}}}^{\top}   \cdot  (\BFS_{(n)}^{k+1} -\BFS_{(n)}^k)^{\top}\|_F  \nonumber \\
           &+ \|(\BFX_{(n)}^k -\BFS_{(n)}^{k+1
           }) \cdot \BFU_{\BFPsi^{(n)}_{k+1}}\cdot {\BFU_{\BFPsi^{(n)}_{k+1}}}^{\top}   \cdot  (\BFS_{(n)}^{k+1} -\BFS_{(n)}^k)^{\top}\|_F  \nonumber \\
         \leq  &  \Big(\|(\BFX_{(n)}^k -\BFS_{(n)}^k) \cdot \BFU_{\BFPsi^{(n)}_{k+1}}\cdot {\BFU_{\BFPsi^{(n)}_{k+1}}}^{\top}\|+\|(\BFX_{(n)}^k -\BFS_{(n)}^{k+1}) \cdot \BFU_{\BFPsi^{(n)}_{k+1}}\cdot {\BFU_{\BFPsi^{(n)}_{k+1}}}^{\top}\|\Big) \|\BFcalS^{k+1}-\BFcalS^k\|_F \label{ineq: outcome 2-2} \\
           \leq  &  \Big(\|(\BFX_{(n)}^k -\BFS_{(n)}^k) \cdot \BFU_{\BFPsi^{(n)}_{k+1}}\cdot {\BFU_{\BFPsi^{(n)}_{k+1}}}^{\top}\|_F+\|(\BFX_{(n)}^k -\BFS_{(n)}^{k+1}) \cdot \BFU_{\BFPsi^{(n)}_{k+1}}\cdot {\BFU_{\BFPsi^{(n)}_{k+1}}}^{\top}\|_F\Big) \|\BFcalS^{k+1}-\BFcalS^k\|_F \label{ineq: outcome 2-3} \\
          \leq & \Big( \|\BFX_{(n)}^k -\BFS_{(n)}^k\|_F +\|\BFX_{(n)}^k -\BFS_{(n)}^{k+1}\|_F \Big)\|\BFcalS^{k+1}-\BFcalS^k\|_F \label{ineq: outcome 2-4}\\
          \leq & (2\|\BFcalX^k\|_F+\|\BFcalS^k\|_F+\|\BFcalS^{k+1}\|_F)\|\BFcalS^{k+1}-\BFcalS^k\|_F  \label{ineq: outcome 2-5} \\
          \leq & \kappa\|\BFcalS^{k+1}-\BFcalS^k\|_F, \label{ineq: outcome 2-6} 
    \end{align}
\end{subequations}
where the inequalities~\eqref{ineq: outcome 2-1} and~\eqref{ineq: outcome 2-5}   are because of the triangle inequality, the inequality~\eqref{ineq: outcome 2-2} comes from the Frobenius norm and operator norm inequality, the  inequality~\eqref{ineq: outcome 2-3} is based on the definition of  the 2-operator norm, and the  inequality~\eqref{ineq: outcome 2-4} is because $\BFU_{\BFPsi^{(n)}_{k+1}}$ is an semi-orthogonal matrix. In~\eqref{ineq: outcome 2-6}, $\kappa = 4G(\BFU_0\BFU_0^{\top},\BFcalS^0,\BFcalX^0)/\rho + 2\|\BFcalS^0\|_F+2\|\BFcalX^0\|_F \geq 2\|\BFcalX^k\|_F+\|\BFcalS^k\|_F+\|\BFcalS^{k+1}\|_F$ is dependent on the upper bound for $\BFcalX,\BFcalS$. 

In the end, combining \eqref{eq: subgradient Un}, \eqref{eq: subgradient S final}, and \eqref{eq: subgradient X final}, we have 
\begin{subequations} \label{ineq: outcome 3} 
     \begin{align} 
           & \textrm{d}(\textbf{0},\partial G(\BFU_{k+1}\BFU_{k+1}^{\top},\BFcalS^{k+1},\BFcalX^{k+1})  \nonumber\\
            \leq &\sum_{n=1}^3 \|-2\rho(\BFU^{(n)}_{k+1}\BFU^{(n)\top}_{k+1} - \BFU^{(n)}_k\BFU^{(n)\top}_k)  \label{ineq: outcome 3-1}\\
            & - \nabla_{\BFU^{(n)}\BFU^{(n)\top}}  \hat{F}(\{\BFU^{(i)}_{k+1}\BFU_{k+1}^{(i)\top}\}_{i\leq n},\{\BFU^{(i)}_{k}\BFU_{k}^{(i)\top}\}_{i>n},\BFcalS^k, \BFcalX^k)  \nonumber\\
  &+ \nabla_{\BFU^{(n)}\BFU^{(n)\top}} \hat{F}(\BFU_{k+1}\BFU_{k+1}^{\top},\BFcalS^k,\BFcalX^k)- \nabla_{\BFU^{(n)}\BFU^{(n)\top}} \hat{F}(\BFU_{k+1}\BFU_{k+1}^{\top},\BFcalS^k,\BFcalX^k)   \nonumber\\
  & + \nabla_{\BFU^{(n)}\BFU^{(n)\top}} \hat{F}(\BFU_{k+1}\BFU_{k+1}^{\top},\BFcalS^{k+1},\BFcalX^k)- \nabla_{\BFU^{(n)}\BFU^{(n)\top}} \hat{F}(\BFU_{k+1}\BFU_{k+1}^{\top},\BFcalS^{k+1},\BFcalX^k)   \nonumber \\
  &+ \nabla_{\BFU^{(n)}\BFU^{(n)\top}} \hat{F}(\BFU_{k+1}\BFU_{k+1}^{\top},\BFcalS^{k+1},\BFcalX^{k+1})\|_F  \nonumber\\
    & +\|-2(\BFcalX^{k+1}-\BFcalX^k)+2(\BFcalX^{k+1}-\BFcalX^k)\times_1 \BFU^{(1)}_{k+1}\BFU_{k+1}^{(1)\top} \times_2 \BFU^{(2)}_{k+1}\BFU_{k+1}^{(2)\top}  \times_3 \BFU^{(3)}_{k+1}\BFU_{k+1}^{(3)\top}  \nonumber\\
          & + 2(\BFcalS^k- \BFcalS^{k+1}) \times_1 \BFU^{(1)}_{k+1}\BFU_{k+1}^{(1)\top} \times_2 \BFU^{(2)}_{k+1}\BFU_{k+1}^{(2)\top}  \times_3 \BFU^{(3)}_{k+1}\BFU_{k+1}^{(3)\top} + 2\rho(\BFcalS^k- \BFcalS^{k+1})\|_F  \nonumber\\
          & +\|-2(\BFcalX^{k+1}-\BFcalX^k)\times_1 \BFU^{(1)}_{k+1}\BFU_{k+1}^{(1)\top} \times_2 \BFU^{(2)}_{k+1}\BFU_{k+1}^{(2)\top}  \times_3 \BFU^{(3)}_{k+1}\BFU_{k+1}^{(3)\top}+ 2\rho(\BFcalX^k- \BFcalX^{k+1})\|_F  \nonumber\\
          \leq & \sum_{n=1}^3 (2\rho\|\BFU^{(n)}_{k+1}\BFU^{(n)\top}_{k+1} - \BFU^{(n)}_k\BFU^{(n)\top}_k\|_F+L\|\BFU_{k+1}\BFU_{k+1}^{\top}-\BFU_{k}\BFU_{k}^{\top}\|_F) \label{ineq: outcome 3-2} \\
          & + \|\nabla_{\BFU^{(n)}\BFU^{(n)\top}} \hat{F}(\BFU_{k+1}\BFU_{k+1}^{\top},\BFcalS^k,\BFcalX^k)- \nabla_{\BFU^{(n)}\BFU^{(n)\top}} \hat{F}(\BFU_{k+1}\BFU_{k+1}^{\top},\BFcalS^{k+1},\BFcalX^k)\|_F  \nonumber\\
          & +\|\nabla_{\BFU^{(n)}\BFU^{(n)\top}} \hat{F}(\BFU_{k+1}\BFU_{k+1}^{\top},\BFcalS^{k+1},\BFcalX^k)-\nabla_{\BFU^{(n)}\BFU^{(n)\top}} \hat{F}(\BFU_{k+1}\BFU_{k+1}^{\top},\BFcalS^{k+1},\BFcalX^{k+1})\|_F  \nonumber\\
          & + (4+2\rho)\|\BFcalX^{k+1}-\BFcalX^k\|_F+(2+2\rho)\|\BFcalS^{k+1}-\BFcalS^k\|_F  \nonumber\\
          \leq & (2\sqrt{3}\rho+3L)\|\BFU_{k+1}\BFU_{k+1}^{\top}-\BFU_{k}\BFU_{k}^{\top}\|_F  \label{ineq: outcome 3-3}\\
          & + (4+2\rho+\kappa)\|\BFcalX^{k+1}-\BFcalX^k\|_F+(2+2\rho+\kappa)\|\BFcalS^{k+1}-\BFcalS^k\|_F,  \nonumber
    \end{align}
\end{subequations}
where the inequality~\eqref{ineq: outcome 3-1} and the inequality~\eqref{ineq: outcome 3-2} are due to the triangle inequality, the inequality~\eqref{ineq: outcome 3-3} is from~\eqref{eq: L-smooth} in Proposition~\ref{proposition: Lipschitz continuous} with $L=\frac{2^{N}(\prod_{i=1}^Nr_i)}{\sqrt{N-1}}\Big(2G(\BFU_0\BFU_0^{\top},\BFcalS^0,\BFcalX^0)/\rho + \|\BFcalS^0\|_F+\|\BFcalX^0\|_F\Big)^2$ and~\eqref{ineq: outcome 2}. 

Therefore,  the following holds
\begin{equation} \label{ineq: outcome3}
    \begin{aligned}
           &  \textrm{d}(\textbf{0},\partial G(\BFU_{k+1}\BFU_{k+1}^{\top},\BFcalS^{k+1},\BFcalX^{k+1}))\\
              \leq & (2\sqrt{3}\rho+3L)\|\BFU_{k+1}\BFU_{k+1}^{\top}-\BFU_{k}\BFU_{k}^{\top}\|_F\\
          & + (4+2\rho+\kappa)\|\BFcalX^{k+1}-\BFcalX^k\|_F+(2+2\rho+\kappa)\|\BFcalS^{k+1}-\BFcalS^k\|_F\\
   \leq &\sqrt{3} \rho_1 \sqrt{\|\BFU_{k+1}\BFU_{k+1}^{\top}-\BFU_{k}\BFU_{k}^{\top}\|_F^2+\|\BFcalX^{k+1}-\BFcalX^k\|_F^2+\|\BFcalS^{k+1}-\BFcalS^k\|_F^2}, \\
    \end{aligned}
\end{equation}
where the first inequality is~\eqref{ineq: outcome 3} and the second inequality is because of  the Cauchy–Schwarz inequality with $\rho_1 = \max \{2\sqrt{3}\rho+3L,4+2\rho+\kappa,  2+2\rho+\kappa\}$. Thus, there exist~$\omega_{k+1}\in \partial G(\BFU_{k+1}\BFU_{k+1}^{\top},\BFcalS^{k+1},\BFcalX^{k+1})$,
\[ \|\omega_{k+1}\|\leq\rho_1 \sqrt{\|\BFU_{k+1}\BFU_{k+1}^{\top}-\BFU_{k}\BFU_{k}^{\top}\|_F^2+\|\BFcalX^{k+1}-\BFcalX^k\|_F^2+\|\BFcalS^{k+1}-\BFcalS^k\|_F^2}.\]
which is the property of subgradient lower bound.
 \end{proof}
 
 \begin{definition}[\normalfont \textit{Critical point}~\citep{attouch2009convergence, attouch2010proximal}]~A necessary condition for $\bm{x}$ to be a minimizer of a proper and lower semicontinuous (PLSC) function  $f$ is that
 \begin{equation} \label{eq: critical point}
     \bm{0} \in \partial f(\bm{x}).
 \end{equation}
 A point that satisfies \eqref{eq: critical point} is called limiting-critical or simply critical.
\end{definition}
Based on Lemmas~\ref{lemma: Sufficient decrease property} and~\ref{lemma: subgradient lower bound}, the following theorem summarizes the theoretical property of iterative sequence $\{\BFU_k\BFU_k^{\top},\BFcalS^k,\BFcalX^k\}_{k\geq 0}$ from our Algorithm~\ref{alg: PAM}.
 	\begin{theorem} \label{thm: key1}
 Let $\{\BFU_k\BFU_k^{\top},\BFcalS^k,\BFcalX^k\}_{k\geq 0} $ denote the sequence generated from Algorithm~\ref{alg: PAM} with $w(\BFU_0\BFU_0^{\top},\BFcalS^0,\BFcalX^0)$ denoting the set of all its limit points, and let set $\textrm{crit} G=\{(\BFU\BFU^{\top},\BFcalS,\BFcalX): (\BFU\BFU^{\top},\BFcalS,\BFcalX)\text{ is a critical point of \eqref{eq: proposed formulation unconstrained}}\}$. Then 
	\begin{enumerate}[label=(\roman*)]
	\item The sequence  $\{G(\BFU_k\BFU_k^{\top},\BFcalS^k,\BFcalX^k)\}_{k \geq 0 }$ is nonincreasing;
	\item  $\sum_{k\geq 0}  \| (\BFU_{k+1}\BFU_{k+1}^{\top},\BFcalS^{k+1},\BFcalX^{k+1}) - (\BFU_k\BFU_k^{\top},\BFcalS^k,\BFcalX^k)\|_F^2 < +\infty $;
	\item $w(\BFU_0\BFU_0^{\top},\BFcalS^0,\BFcalX^0) \subseteq \textrm{crit}G$; 
	\item $w(\BFU_0\BFU_0^{\top},\BFcalS^0,\BFcalX^0)$ is a nonempty compact connected set, and 
	\begin{equation*}
	    \begin{aligned}
	         &  \mathrm{d}\Big((\BFU_k\BFU_k^{\top},\BFcalS^k,\BFcalX^k),w(\BFU_0\BFU_0^{\top},\BFcalS^0,\BFcalX^0)\Big) \\
	       \coloneqq   &  \inf_{(\BFU\BFU^{\top},\BFcalS,\BFcalX) \in w(\BFU_0\BFU_0^{\top},\BFcalS^0,\BFcalX^0)} \|(\BFU\BFU^{\top},\BFcalS,\BFcalX) - (\BFU_k\BFU_k^{\top},\BFcalS^k,\BFcalX^k)\|_F\to 0, \textnormal{as } k \to +\infty;
	    \end{aligned}
	\end{equation*}
	\item $G(\cdot)$ is finite and constant on $w(\BFU_0\BFU_0^{\top},\BFcalS^0,\BFcalX^0)$.
\end{enumerate}
	\end{theorem}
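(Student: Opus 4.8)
The plan is to invoke the abstract convergence machinery for descent methods of \citep{attouch2013convergence, bolte2014proximal}, for which the two preceding lemmas furnish exactly the required hypotheses. Parts (i) and (ii) are direct consequences of the sufficient decrease property. Since $0<\rho<\infty$ and the three Frobenius-norm terms on the left of \eqref{eq: sufficient decrease} are nonnegative, Lemma~\ref{lemma: Sufficient decrease property} gives $G(\BFU_{k+1}\BFU_{k+1}^{\top},\BFcalS^{k+1},\BFcalX^{k+1})\le G(\BFU_k\BFU_k^{\top},\BFcalS^k,\BFcalX^k)$, which is (i). As $G$ is the sum of squared Frobenius norms, a TV seminorm, and two indicator functions, it is bounded below by $0$; combined with monotonicity this forces the values $\{G(\BFU_k\BFU_k^{\top},\BFcalS^k,\BFcalX^k)\}_{k\ge0}$ to converge to some finite $G^\ast$. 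Summing \eqref{eq: sufficient decrease} telescopically over $k$ and letting the upper limit tend to infinity bounds $\rho\sum_{k\ge0}\big(\|\BFU_{k+1}\BFU_{k+1}^{\top}-\BFU_k\BFU_k^{\top}\|_F^2+\|\BFcalS^{k+1}-\BFcalS^k\|_F^2+\|\BFcalX^{k+1}-\BFcalX^k\|_F^2\big)$ by $G(\BFU_0\BFU_0^{\top},\BFcalS^0,\BFcalX^0)-G^\ast<+\infty$, yielding (ii); in particular all successive differences tend to $\bm{0}$.

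For part (iii), I would first observe that the iterate sequence is bounded: $\{\BFcalS^k,\BFcalX^k\}$ are bounded by Proposition~\ref{proposition: X,S bound}, and each $\BFU^{(n)}_k\BFU^{(n)\top}_k$ is a rank-$r_n$ orthogonal projection with $\|\BFU^{(n)}_k\BFU^{(n)\top}_k\|_F=\sqrt{r_n}$. Hence some subsequence $(\BFU_{k_j}\BFU_{k_j}^{\top},\BFcalS^{k_j},\BFcalX^{k_j})$ converges to a limit point $(\BFU^\ast\BFU^{\ast\top},\BFcalS^\ast,\BFcalX^\ast)\in w(\BFU_0\BFU_0^{\top},\BFcalS^0,\BFcalX^0)$. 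Lemma~\ref{lemma: subgradient lower bound} supplies $\omega_{k+1}\in\partial G(\BFU_{k+1}\BFU_{k+1}^{\top},\BFcalS^{k+1},\BFcalX^{k+1})$ whose norm is controlled by the successive differences, so $\|\omega_{k+1}\|\to0$ by (ii). Passing to the limit along the subsequence and invoking the closedness of the limiting subdifferential $\partial G$ (Definition~\ref{def: Subdifferentials}) then gives $\bm{0}\in\partial G(\BFU^\ast\BFU^{\ast\top},\BFcalS^\ast,\BFcalX^\ast)$, i.e.\ the limit point is critical. Closedness requires the value convergence $G(\BFU_{k_j}\BFU_{k_j}^{\top},\BFcalS^{k_j},\BFcalX^{k_j})\to G(\BFU^\ast\BFU^{\ast\top},\BFcalS^\ast,\BFcalX^\ast)$; this holds because every iterate is feasible, so the indicators $\delta_S$ and $\delta_{S_n}$ vanish along the sequence, the limit stays feasible by closedness of $S$ and of each $S_n$, and the remaining part $\hat{F}+\lambda\|\cdot\|_{TV1}$ is continuous.

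Parts (iv) and (v) are the standard topological corollaries of (i)--(iii). Nonemptiness and compactness of $w(\BFU_0\BFU_0^{\top},\BFcalS^0,\BFcalX^0)$ follow from boundedness of the iterates, and connectedness follows from (ii), which yields $\|(\BFU_{k+1}\BFU_{k+1}^{\top},\BFcalS^{k+1},\BFcalX^{k+1})-(\BFU_k\BFU_k^{\top},\BFcalS^k,\BFcalX^k)\|_F\to0$; the vanishing-distance statement is then immediate from the definition of the set of limit points. For (v), the full value sequence converges to $G^\ast$ and, by the value-continuity argument of the previous paragraph, $G$ equals this limit at every accumulation point, so $G$ is finite and constant, equal to $G^\ast$, on $w(\BFU_0\BFU_0^{\top},\BFcalS^0,\BFcalX^0)$.

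I expect the main obstacle to be the value-convergence condition used in (iii) and (v); its difficulty lies entirely in the nonsmooth indicator terms, since $\hat{F}$ and the TV seminorm are continuous. The only real risk is that the rank-constrained projection set $S_n$ fails to be closed, so I would verify closedness by a continuity-of-eigenvalues argument: a convergent sequence of symmetric rank-$r_n$ projections has a symmetric limit whose eigenvalues are limits of values in $\{0,1\}$, with exactly $r_n$ of them equal to $1$, hence the limit is again a rank-$r_n$ projection in $S_n$. This makes feasibility, and therefore the value convergence, pass to the limit.
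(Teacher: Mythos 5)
Your proposal is correct and follows essentially the same route as the paper: parts (i)--(ii) via telescoping the sufficient decrease inequality of Lemma~\ref{lemma: Sufficient decrease property}, part (iii) via the vanishing subgradient bound of Lemma~\ref{lemma: subgradient lower bound}, and (iv)--(v) as standard topological corollaries in the \citep{bolte2014proximal} framework. If anything, your argument for (iii) is more complete than the paper's one-line version, since you explicitly supply the subdifferential-closedness and value-convergence steps (feasibility of limits, closedness of $S_n$, continuity of $\hat F+\lambda\|\cdot\|_{TV1}$) that are needed to pass from $\mathrm{d}(\mathbf{0},\partial G(\BFU_{k}\BFU_{k}^{\top},\BFcalS^{k},\BFcalX^{k}))\to 0$ to criticality of the limit points.
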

\begin{proof}
The proof is split into five parts.

\textit{(i)} It comes from Lemma~\ref{lemma: Sufficient decrease property};

\textit{(ii)} Let $f_k \coloneqq G(\BFU_{k+1}\BFU_{k+1}^{\top},\BFcalS^{k+1},\BFcalX^{k+1})$. According to the \textit{Monotone convergence theorem},  $\lim\limits_{k \to \infty }{f_k}$ exist since $f_k$ is bounded from below, namely, $f_k \geq 0$. Let $\varepsilon_k \coloneqq f_{k-1} - f_k$
\begin{equation*}
    f_k=f_0-\sum_{i=1}^{k} \varepsilon_i.
\end{equation*}
Based on \eqref{eq: sufficient decrease},
\begin{equation*}
   \rho\|(\BFU_{k+1}\BFU_{k+1}^{\top},\BFcalS^{k+1},\BFcalX^{k+1}) - (\BFU_k\BFU_k^{\top},\BFcalS^k,\BFcalX^k)\|_F^2 \leq \varepsilon_{k+1},
\end{equation*} which implies that
\begin{equation*}
\sum_{k\geq 0}\|(\BFU_{k+1}\BFU_{k+1}^{\top},\BFcalS^{k+1},\BFcalX^{k+1}) - (\BFU_k\BFU_k^{\top},\BFcalS^k,\BFcalX^k)\|_F^2 \leq (\sum_{k\geq 0}\varepsilon_{k+1})/\rho<+\infty
\end{equation*}
\begin{equation*} 
    \lim\limits_{k \to \infty }{ \|(\BFU_{k+1}\BFU_{k+1}^{\top},\BFcalS^{k+1},\BFcalX^{k+1}) - (\BFU_k\BFU_k^{\top},\BFcalS^k,\BFcalX^k)\|_F}=0;
\end{equation*}

\textit{(iii)} Based on \eqref{ineq: outcome3} in Lemma~\ref{lemma: subgradient lower bound},
\begin{equation*}
  \lim\limits_{k \to \infty}{ \textrm{d}(\textbf{0},\partial G(\BFU_{k+1}\BFU_{k+1}^{\top},\BFcalS^{k+1},\BFcalX^{k+1})} = 0;
\end{equation*}

\textit{(iv)} This can be derived from part \textit{(ii)};

\textit{(v)} This result is based on part \textit{(i)} and function $G(\cdot)$ is nonnegative. 
\end{proof}

However, Theorem~\ref{thm: key1} cannot guarantee the global convergence of the iterative sequence $\{\BFU_k\BFU_k^{\top},\BFcalS^k,\BFcalX^k\}_{k\geq 0}$, which has the following definition.
\begin{definition}[\normalfont \textit{Global convergence}~\citep{petrovai2017global,xu2018convergence}] 
Any iterative algorithm for solving an optimization problem over a set $X$, is said to be \textbf{globally convergent} if for any starting point $\bm x_0 \in X$, the sequence generated by the algorithm always has an accumulation critical point.
\end{definition}
To build the global convergence of our iterative sequence $\{\BFU_k\BFU_k^{\top},\BFcalS^k,\BFcalX^k\}_{k\geq 0}$ based on Theorem~\ref{thm: key1},   the function $G(\cdot)$ needs to have  K\L~property as follows
\begin{definition}[\normalfont \textit{K\L~property}~\citep{attouch2010proximal,attouch2013convergence,bolte2014proximal, xu2018convergence}]\label{df: kl}~A real function $f: \mathbb{R}^d \to (-\infty,+\infty]$ has the  Kurdyka \L{}ojasiewicz (K\L) property, namely, for any point $\bar{\bm{u}}\in \mathbb{R}^d$, in a neighborhood $N(\bar{\bm{u}},\sigma)$, there exists a desingularizing function $\phi(s)=cs^{1-\theta}$ for some $c>0$ and $\theta \in [0,1)$ such that
\begin{equation*}
    \phi'(|f(\bm{u})-f(\bar{\bm{u}})|)\mathrm{d}(0,\partial f(\bm{u}))\geq 1 \text{ for any } \bm{u} \in N(\bar{\bm{u}},\sigma) \text{ and } f(\bm{u})\neq f(\bar{\bm{u}}).
\end{equation*}
\end{definition}
The semi-algebraic set and semi-algebraic function are related to Kurdyka \L{}ojasiewicz (K\L) property, which are introduced below.
\begin{definition}[\normalfont\textit{Semi-algebraic}~\citep{attouch2009convergence,bolte2014proximal}] \label{df: semi-algerba} 
	A subset $S$ of $\mathbb{R}^d$ is a real \textbf{semi-algebraic set} if there exist  a finite number of real
	polynomial functions $g_{ij},h_{ij}$:  $\mathbb{R}^d \to \mathbb{R}$ such that
	\[S=\cup_{j=1}^q\cap_{i=1}^p\{\bm{u}\in \mathbb{R}^d:g_{ij}(\bm{u})=0 \textrm{ and }h_{ij}(\bm{u})<0 \}.\]
In addition, a function $h:\mathbb{R}^{d+1} \to \mathbb{R}\cup{+\infty}$ is called 	\textbf{semi-algebraic} if its graph
\[\{(\bm{u}, t)\in \mathbb{R}^{d+1}: h(\bm{u})=t \} \]
is a real semi-algebraic set.
\end{definition}


After introducing these two definitions, the following lemma shows that the objective function $G(\BFU\BFU^\top,\BFcalS,\BFcalX)$ of~\eqref{eq: proposed formulation unconstrained} has the so-called K\L~property.
\begin{lemma} \label{lemma: G KL property}
Function $G(\cdot)$ has the K\L~property.
\end{lemma}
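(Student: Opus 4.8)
The plan is to establish the K\L~property by the standard route of \emph{semi-algebraicity}: it is a well-known result that every proper lower semicontinuous semi-algebraic function satisfies the K\L~property~\citep{bolte2014proximal,attouch2010proximal}. Since $G$ is already proper and lower semicontinuous (as noted after~\eqref{eq: proposed formulation unconstrained}), it suffices to prove that $G$ is semi-algebraic. Because the class of semi-algebraic functions is closed under finite sums, I would reduce the claim to verifying that each of the four summands defining $G$ in~\eqref{eq: proposed formulation unconstrained}---the smooth fitting term, the total variation term $\lambda\|\BFcalS\|_{TV1}$, the indicator $\delta_{S}$, and each indicator $\delta_{S_n}$---is individually semi-algebraic.

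First I would treat the differentiable part of $\hat{F}$. Viewing the projection matrices $\BFU^{(n)}\BFU^{(n)\top}$ together with $\BFcalS$ and $\BFcalX$ as the decision variables, the fitting term $\|\BFcalX-\BFcalS-(\BFcalX-\BFcalS)\times_1 \BFU^{(1)}\BFU^{(1)\top}\times_2\BFU^{(2)}\BFU^{(2)\top}\times_3\BFU^{(3)}\BFU^{(3)\top}\|_F^2$ is a polynomial in the entries of these variables, and polynomials are semi-algebraic. For the regularizer, recall from~\eqref{eq: definition of TV1} that $\|\BFcalS\|_{TV1}$ is a finite sum of terms $|[\BFD_h\bm{s}]_i|$, $|[\BFD_v\bm{s}]_i|$, $|[\BFD_t\bm{s}]_i|$, each the absolute value of a linear function of $\bm{s}=\texttt{vec}(\BFcalS)$. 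The absolute value has the semi-algebraic graph $\{(x,t):t\ge 0,\ t^2=x^2\}$, and composition with an affine map together with finite summation preserves semi-algebraicity; hence $\lambda\|\BFcalS\|_{TV1}$ is semi-algebraic.

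Next I would handle the two indicator functions, using that $\delta_A$ is semi-algebraic whenever the set $A$ is, since the graph of $\delta_A$ is $A\times\{0\}$. The set $S=\{\BFcalX:\BFcalP_{\Omega}(\BFcalX)=\BFcalP_{\Omega}(\BFcalF)\}$ is an affine subspace cut out by linear equations, hence semi-algebraic, so $\delta_{S}$ is semi-algebraic. The step I expect to require the most care is the rank-constrained set $S_n$. The key observation is that a symmetric matrix whose eigenvalues lie in $\{0,1\}$ is exactly an orthogonal projection, so the defining conditions of $S_n$ can be rewritten purely as the polynomial (in)equalities $\BFX=\BFX^{\top}$, $\BFX^2=\BFX$, and---since the rank of a projection equals its trace---$\mathrm{Tr}(\BFX)=r_n$. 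These are finitely many polynomial equations, so $S_n$ is semi-algebraic and therefore so is $\delta_{S_n}$.

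Summing the four semi-algebraic pieces then shows that $G$ is semi-algebraic; combined with the already-established proper lower semicontinuity, the cited characterization yields the K\L~property and completes the proof. The only genuinely nontrivial point is the reformulation of the rank/eigenvalue constraint in $S_n$ as polynomial equations; every other step is a routine application of the closure properties of semi-algebraic sets and functions.
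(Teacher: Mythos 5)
Your proof is correct and follows the same overall strategy as the paper: show that $G$ is a proper lower semicontinuous semi-algebraic function and then invoke the standard result that such functions have the K\L~property. Two of your sub-steps differ from the paper's in ways worth noting. First, for the indicator of $S_n$ the paper writes $S_n=T_n\cap K_n$, where $T_n$ is the fixed-rank set (whose semi-algebraicity it cites from~\citep{bolte2014proximal,lewis2008alternating}) and $K_n=\{\BFX:\BFX\BFX^{\top}=\BFX,\ \BFX^{\top}\BFX=\BFX\}$; you instead describe $S_n$ directly by the polynomial equations $\BFX=\BFX^{\top}$, $\BFX^2=\BFX$, $\mathrm{Tr}(\BFX)=r_n$, using that the rank of an orthogonal projection equals its trace. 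This is a more elementary and self-contained route that replaces an appeal to the literature on fixed-rank varieties with one line of linear algebra, and it is valid. Second, you handle $\lambda\|\BFcalS\|_{TV1}$ explicitly via the semi-algebraic graph of the absolute value composed with affine maps; the paper instead asserts that $\hat{F}$ is a ``summation of polynomial functions,'' which is not literally true because of the $\ell_1$-type total variation term, so your version actually closes a small imprecision in the published argument. Everything else --- the polynomial fitting term, the affine constraint set $S$, semi-algebraicity of indicators of semi-algebraic sets, and closure of semi-algebraic functions under finite sums --- matches the paper's proof.
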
  
\begin{proof} 
    We will first prove that sets $\{S_n\}_{n\in [3]}$ are semi-algebraic sets.
In~\citep{bolte2014proximal,lewis2008alternating}, the authors showed that the set of all matrices with the same rank is semi-algebraic.
Therefore, for each $n\in [N]$, set
\[T_n \coloneqq \{\BFX \in \mathbb{R}^{I_n \times I_n}: \textrm{rank}(\BFX)=r_n\}\]  
is semi-algebraic.
In addition, we observe that
\begin{align*}
    K_n&:=\{\BFX \in \mathbb{R}^{I_n \times I_n} : \mathrm{eigenvalue} \ \mathrm{of} \ \BFX \mathrm{\ is \ either \ 1 \ or \ 0 } ,  \BFX=\BFX^{\top} \}\\
    &=\{\BFX \in \mathbb{R}^{I_n \times I_n} : \BFX\BFX^{\top}=\BFX,\BFX^{\top}\BFX=\BFX\},
\end{align*}
where all the equalities are quadratic. Clearly, set $K_n$ is semi-algebraic.
Since $S_n=T_n\cap K_n$ and intersection of two semi-algebraic sets is still semi-algebraic~\citep{bolte2014proximal}, thus $S_n$ is a semi-algebraic set.
Recall that
\[G(\BFU\BFU^\top,\BFcalS,\BFcalX)=\hat{F}(\BFU\BFU^\top,\BFcalS,\BFcalX)+\delta_{S}(\BFcalX)+ \sum_{n \in [3]} \delta_{S_n}(\BFU^{(n)}\BFU^{(n)\top}).\]
 $\hat{F}(\BFU\BFU^\top,\BFcalS,\BFcalX)$  is a function of summation of polynomial functions of all the elements in $\{\BFU\BFU^\top,\BFcalS,\BFcalX\}$. 
 In addition, the characteristic function $\delta_{A}(\cdot)$ is a semi-algebraic functions if set $A$ is semi-algebraic~\citep{attouch2010proximal, bolte2014proximal}. Any finite sum of semi-algebraic functions is semi-algebraic, thus function $G(\BFU\BFU^\top,\BFcalS,\BFcalX)$ is semi-algebraic.  A semi-algebraic real-valued function 
is a K\L~function based on the work of~\citep{bolte2007lojasiewicz,bolte2014proximal}. 
\end{proof}


Based on Lemmas~\ref{lemma: Sufficient decrease property},~\ref{lemma: subgradient lower bound}, and~\ref{lemma: G KL property} and conclusions in~\citep[Section 3.2]{bolte2014proximal}, the following main Theorem can be obtained.
\begin{theorem}[Global Convergence] \label{thm: convergence}
$\{\BFU_k\BFU_k^{\top},\BFcalS^k,\BFcalX^k\}_{k\geq 0}$ is the sequence generated from the proposed Algorithm~\ref{alg: PAM} with any  initial point so that $\BFcalS^0,\BFcalX^0$, and $G(\BFU_0\BFU_0^{\top},\BFcalS^0,\BFcalX^0)$ are bounded. Then, there exists $(\BFU_*\BFU_*^{\top},\BFcalS^*,\BFcalX^*)$  such that
	\begin{enumerate}[label=(\roman*)]
	\item  $(\BFU_k\BFU_k^{\top},\BFcalS^k,\BFcalX^k) \to (\BFU_*\BFU_*^{\top},\BFcalS^*,\BFcalX^*)$;
	\item $\bm{0}\in\partial G(\BFU_*\BFU_*^{\top},\BFcalS^*,\BFcalX^*)$;
	\item  $\{\BFU_k\BFU_k^{\top},\BFcalS^k,\BFcalX^k\}_{k\geq 0}$ has a finite length, namely,
	\[\sum_{k=0}^{+\infty}\|(\BFU_{k+1}\BFU_{k+1}^{\top},\BFcalS^{k+1},\BFcalX^{k+1})- (\BFU_k\BFU_k^{\top},\BFcalS^k,\BFcalX^k)\|_F < + \infty.\]
	\end{enumerate}
\end{theorem}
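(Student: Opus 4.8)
The plan is to invoke the abstract convergence framework for descent methods satisfying the Kurdyka--\L{}ojasiewicz inequality, as developed in \citet[Section 3.2]{bolte2014proximal}. That framework guarantees single-point convergence and finite length for any sequence that simultaneously enjoys (H1) a sufficient-decrease estimate, (H2) a relative-error (subgradient) bound, and (H3) a function-value continuity condition, provided the objective is a K\L~function. The three preceding lemmas already supply most of this: Lemma~\ref{lemma: Sufficient decrease property} gives (H1) with constant $\rho$, Lemma~\ref{lemma: subgradient lower bound} gives (H2) with constant $\rho_1$, and Lemma~\ref{lemma: G KL property} establishes that $G$ is a K\L~function. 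Thus the real work is to assemble these pieces and check the remaining hypotheses, not to re-derive the telescoping K\L~estimate.

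First I would record that the full iterate sequence is bounded. Proposition~\ref{proposition: X,S bound} bounds $\{\BFcalS^k\}$ and $\{\BFcalX^k\}$, while each $\BFU^{(n)}_k\BFU^{(n)\top}_k$ lies in the compact set $S_n$ of rank-$r_n$ orthogonal projections; hence the combined sequence $\{(\BFU_k\BFU_k^\top,\BFcalS^k,\BFcalX^k)\}$ lives in a compact set and the limit-point set $w(\BFU_0\BFU_0^\top,\BFcalS^0,\BFcalX^0)$ is nonempty. Combined with the summability in Theorem~\ref{thm: key1}(ii), this lets me extract a subsequence converging to some $(\BFU_*\BFU_*^\top,\BFcalS^*,\BFcalX^*)$.

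Next I would verify the continuity condition (H3), namely that $G$ evaluated along a convergent subsequence tends to $G$ at the limit. This is where the indicator terms $\delta_{S}$ and $\delta_{S_n}$ require care. The key observation is that every iterate is feasible --- $\BFcalP_{\Omega}(\BFcalX^k)=\BFcalP_{\Omega}(\BFcalF)$ and $\BFU^{(n)}_k\BFU^{(n)\top}_k\in S_n$ hold by construction --- so $G$ coincides with the smooth, continuous $\hat{F}$ along the sequence. Because $S$ and each $S_n$ are closed, the limit point is again feasible, whence $G(\BFU_*\BFU_*^\top,\BFcalS^*,\BFcalX^*)=\hat{F}(\BFU_*\BFU_*^\top,\BFcalS^*,\BFcalX^*)$, and continuity of $\hat{F}$ yields the required convergence of function values. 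Theorem~\ref{thm: key1}(iii) then already certifies that every such limit point is critical.

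With (H1), (H2), (H3), and the K\L~property in hand, the meta-theorem of \citet[Section 3.2]{bolte2014proximal} applies in the reparametrized variable $(\BFU\BFU^\top,\BFcalS,\BFcalX)$ and delivers all three conclusions at once: the finite-length estimate $\sum_{k}\|(\BFU_{k+1}\BFU_{k+1}^\top,\BFcalS^{k+1},\BFcalX^{k+1})-(\BFU_k\BFU_k^\top,\BFcalS^k,\BFcalX^k)\|_F<+\infty$ (conclusion (iii)), which forces the sequence to be Cauchy and hence to converge to a single point $(\BFU_*\BFU_*^\top,\BFcalS^*,\BFcalX^*)$ (conclusion (i)), together with criticality $\bm{0}\in\partial G(\BFU_*\BFU_*^\top,\BFcalS^*,\BFcalX^*)$ at that point (conclusion (ii)). The main obstacle I anticipate is not the K\L~telescoping argument --- that is outsourced to the cited framework --- but rather phrasing every hypothesis consistently in the right variable: one must confirm that the descent inequality, the subgradient bound, and the K\L~property are all stated for $\BFU\BFU^\top$ (the genuinely identifiable quantity, since the rotation indeterminacy prevents $\BFU_k$ itself from converging), and that the feasibility-and-closedness argument correctly neutralizes the indicator functions so that no loss of lower semicontinuity invalidates (H3).
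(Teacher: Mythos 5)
Your proposal follows essentially the same route as the paper, which proves Theorem~\ref{thm: convergence} simply by citing Lemmas~\ref{lemma: Sufficient decrease property}, \ref{lemma: subgradient lower bound}, and~\ref{lemma: G KL property} together with the abstract descent framework of \citet[Section 3.2]{bolte2014proximal}. In fact you are somewhat more careful than the paper: you explicitly verify the boundedness of the iterates (via Proposition~\ref{proposition: X,S bound} and compactness of the sets $S_n$) and the continuity condition (H3) on the indicator terms, both of which the paper leaves implicit.
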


\section{Numerical Studies} \label{sec: numerical study}
To evaluate the performance of the proposed SRTC,  its performance on open-sourced video data is presented in this section. In Section~\ref{subsec: convergence analysis}, the empirical convergence of the proposed algorithm is illustrated to verify our theoretical results. The performances of the proposed algorithm for background subtraction and foreground detection are presented in Sections~\ref{subsec: background subtraction} and~\ref{subsec: foreground detection}, respectively.  In Sections~\ref{subsec: background subtraction} and~\ref{subsec: foreground detection}, MCOS\footnote{\url{https://github.com/ZihengLi6321/MCOS}}~\citep{li2021matrix}, BFMNM\footnote{\href{https://1drv.ms/u/s!Aur7I-sQwed-gjE_15tuofYp2n7x?e=xN57mw}{One drive}}~\citep{shang2017bilinear}, HQ-ASD\footnote{\url{https://github.com/he1c/robust-matrix-completion}}~\citep{he2019robust}, RTRC\footnote{\url{https://github.com/HuyanHuang/Robust-Low-rank-Tensor-Ring-Completion}}~\citep{huang2021robust},  and HQ-TCASD\footnote{\url{https://github.com/he1c/robust-tensor-completion}}~\citep{he2020robust} are selected as benchmarks for comparison with the proposed SRTC, which are state-of-the-art methods in the related area.  The benchmarks have two categories:   
\begin{enumerate}
\item MCOS, BFMNM, and HQ-ASD are the most advanced Robust Matrix Completion algorithms in the literature;
\item RTRC and HQ-TCASD are state-of-the-art Robust Tensor Completion algorithms.
\end{enumerate} 
All results in this section are the average results of 20 repetitions for comparison.  The codes of SRTC are implemented in Matlab 2021a. The CPU of the computer to conduct experiments in this paper is an Intel\textsuperscript{\textregistered} Xeon\textsuperscript{\textregistered} Processor E-2286M (8-cores 2.40-GHz Turbo, 16 MB).

\textbf{Performance evaluation indices and parameter tuning:} For the task of background subtraction, the peak signal-to-noise ratio (PSNR) and the structural similarity index (SSIM) are used to measure the recovery accuracy. PSNR and SSIM commonly measure the similarity of two
images in intensity and structure, respectively. Specifically, PSNR is defined as: $\textnormal{PSNR} = 10\times \log_{10}{\frac{255^2}{\|\BFI - \hat{\BFI}\|_F^2}}$, where  $\BFI$ and $\hat{\BFI}$ are respectively the original and recovered  background. SSIM measures the structural similarity of two images; see~\citep{wang2004image} for details. Average PSNR and SSIM over all image frames in the video are used to evaluate recovery performance of video background. For the task of foreground detection, F-measure is applied to assess the foreground  detection performance. 
Average F-measure over all image frames in the video is applied  to  evaluate  the detection  performance  of video  foreground. Therefore, 20 repetitions are sufficient  to represent the performance of each method since each repetition is the average performance of multiple image frames. For these performance indices PSNR, SSIM, and F-measure, higher values   indicate the better performance.


\subsection{Convergence Analysis} \label{subsec: convergence analysis}
The video data set \textit{Caviar2} from SBI data set\footnote{\url{https://sbmi2015.na.icar.cnr.it/SBIdataset.html}}~\citep{maddalena2015towards} is used in this subsection. In total, this video data set has 460 image frames, where the size of each grayscale image is  $256\times 384$. For simplicity, the first 80 image frames in the sequence are used for experiments. Therefore, the tensor size is $256\times 384 \times 80$. One image frame from \textit{Caviar2}  in this experiment is shown in Figure~\ref{subfig: Caviar2}. In the video, there are people entering and leaving a store, standing only for few frames. For each image, a ratio of pixels are randomly selected as missing pixels, and the positions of the missing pixels are unknown (one example with $50\%$ missing pixels is shown in Figure~\ref{subfig: missing Caviar2}). To evaluate the convergence of the proposed algorithm, the relative change $\textbf{\textnormal{relChgA}}=\frac{\|\BFcalA^k-\BFcalA^{k-1}\|_F}{\max(1,\|\BFcalA^{k-1}\|_F)}$ and the relative error $\textbf{\textnormal{relErrA}}=\frac{\|\BFcalA^k-\BFcalA^*\|_F}{\max(1,\|\BFcalA^*\|_F)}$ are applied as the assessment indices of algorithm convergence, where $\BFcalA^k$ is the result in $k$-th iteration and $\BFcalA^*$ is the ground truth.  The ground truth of the static video background is provided in the first column of Figure~\ref{fig: background visualization}. For the case of orthogonal matrices $\BFU$, the relative change has the following representation $\textbf{\textnormal{relChgU}}=\frac{\|\BFU_k\BFU_k^{\top}-\BFU_{k-1}\BFU_{k-1}^{\top}\|_F}{\max(1,\|\BFU_{k-1}\BFU_{k-1}^{\top}\|_F)}$.

\begin{figure}[!htbp]
	\centering
	\subfloat[Objective value]{\includegraphics[width=0.25\textwidth]{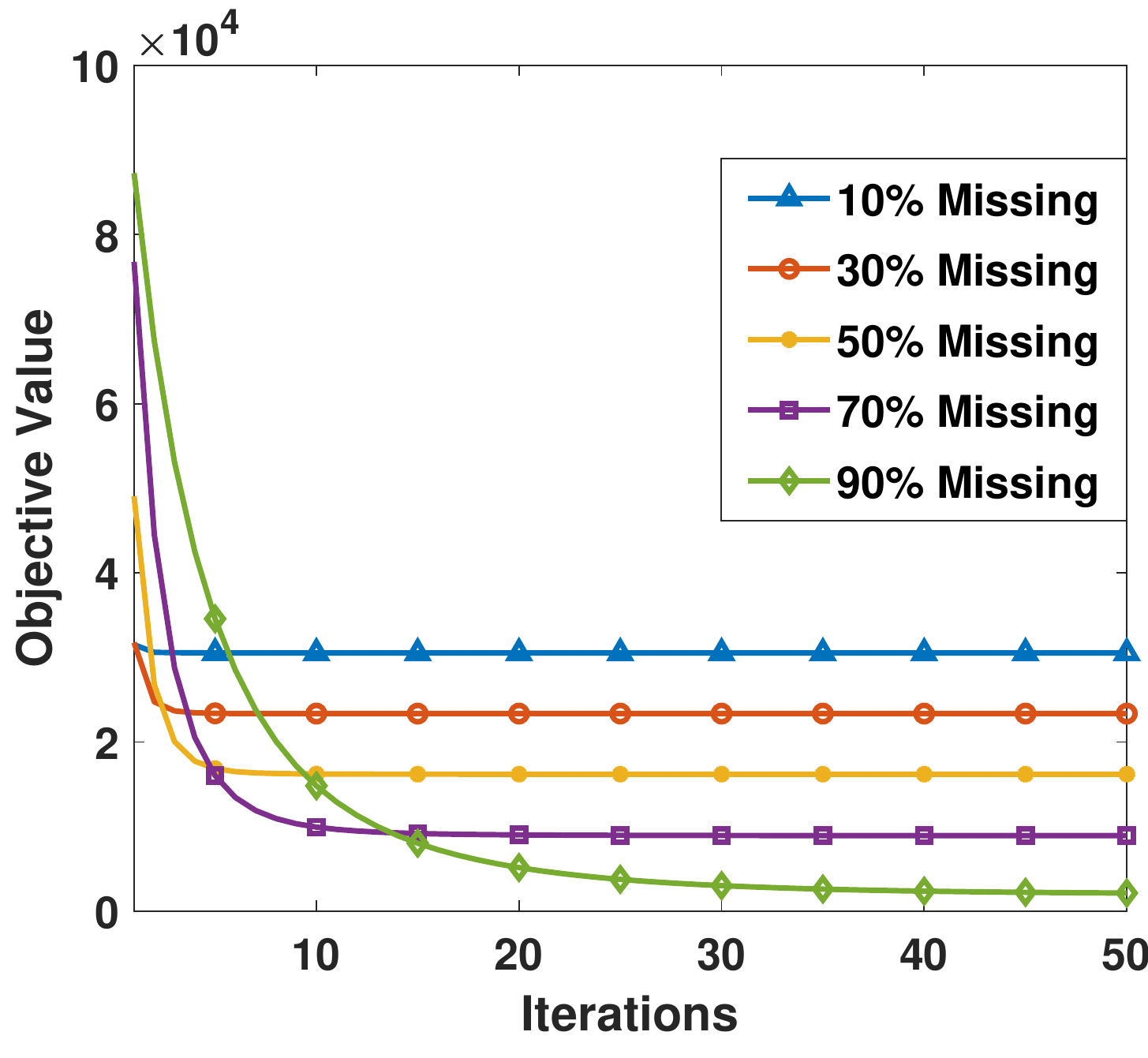}%
	\label{subfig: objective value}}
	\subfloat[Relative change of $\BFcalX$]{\includegraphics[width=0.24\textwidth]{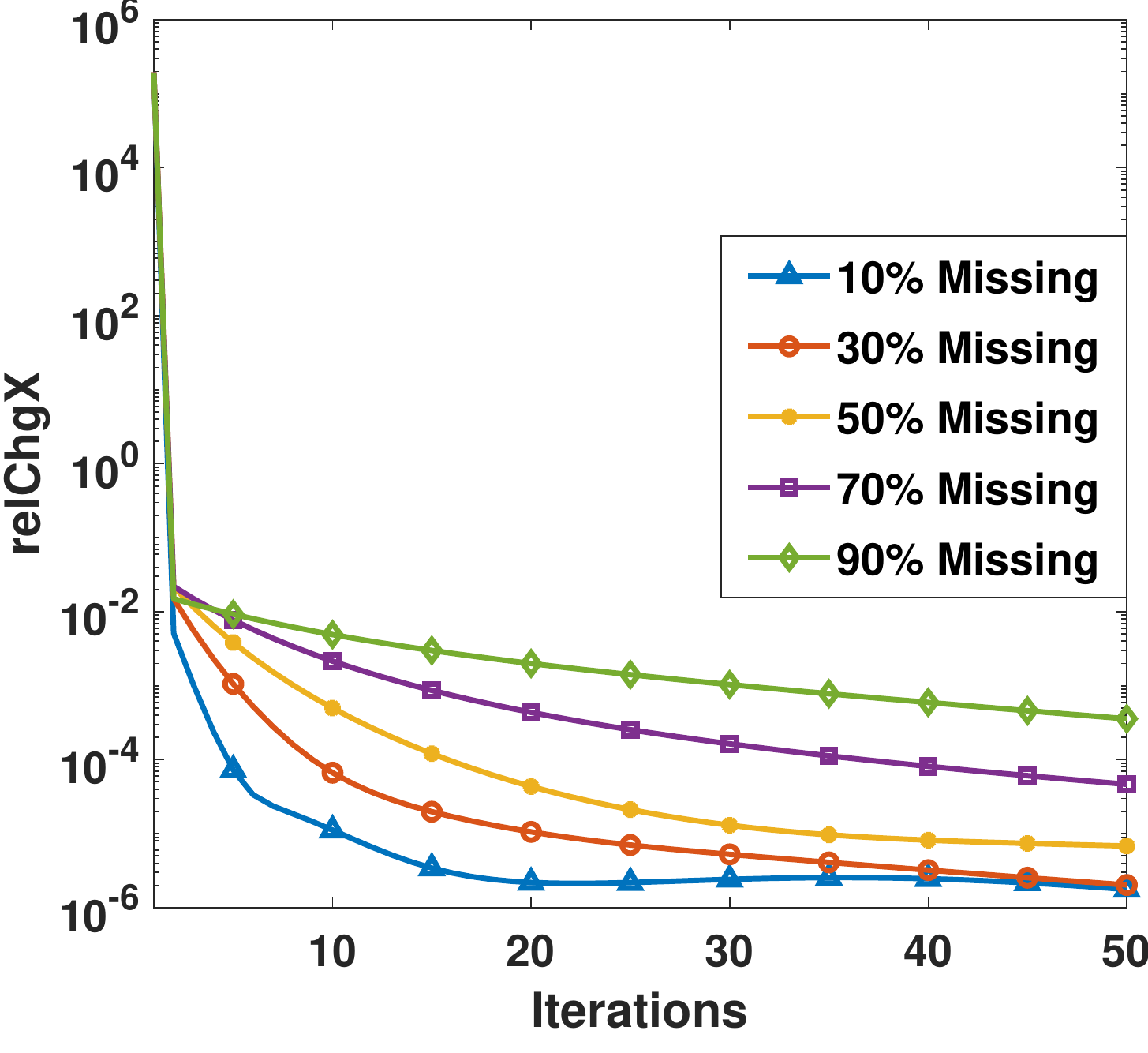}%
	  \label{subfig: relative change X}}
	  	\subfloat[Relative change of $\BFcalL$]{\includegraphics[width=0.24\textwidth]{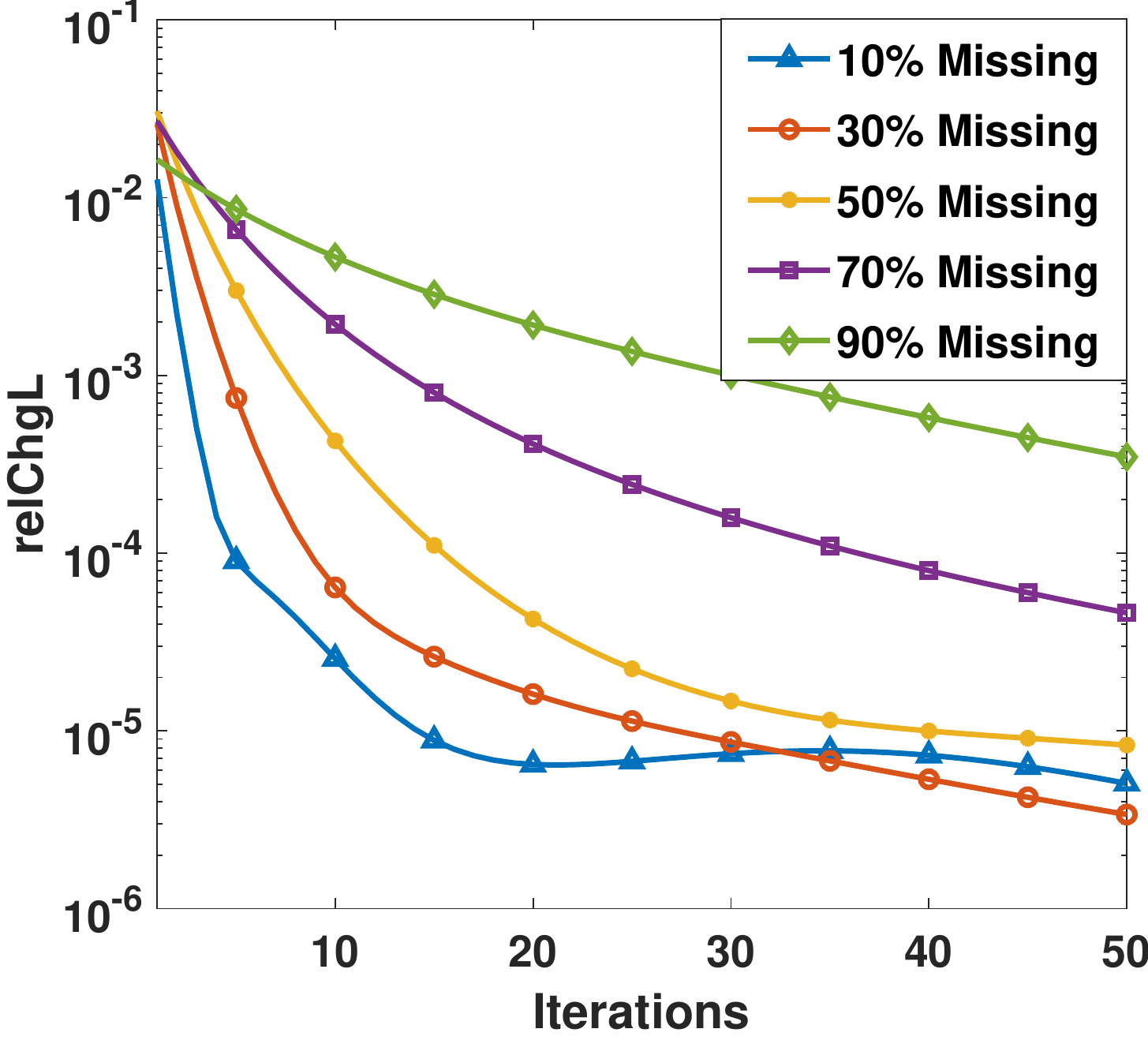}%
	  \label{subfig: relative change L}}
	  	  	\subfloat[Relative change of $\BFcalS$]{\includegraphics[width=0.25\textwidth]{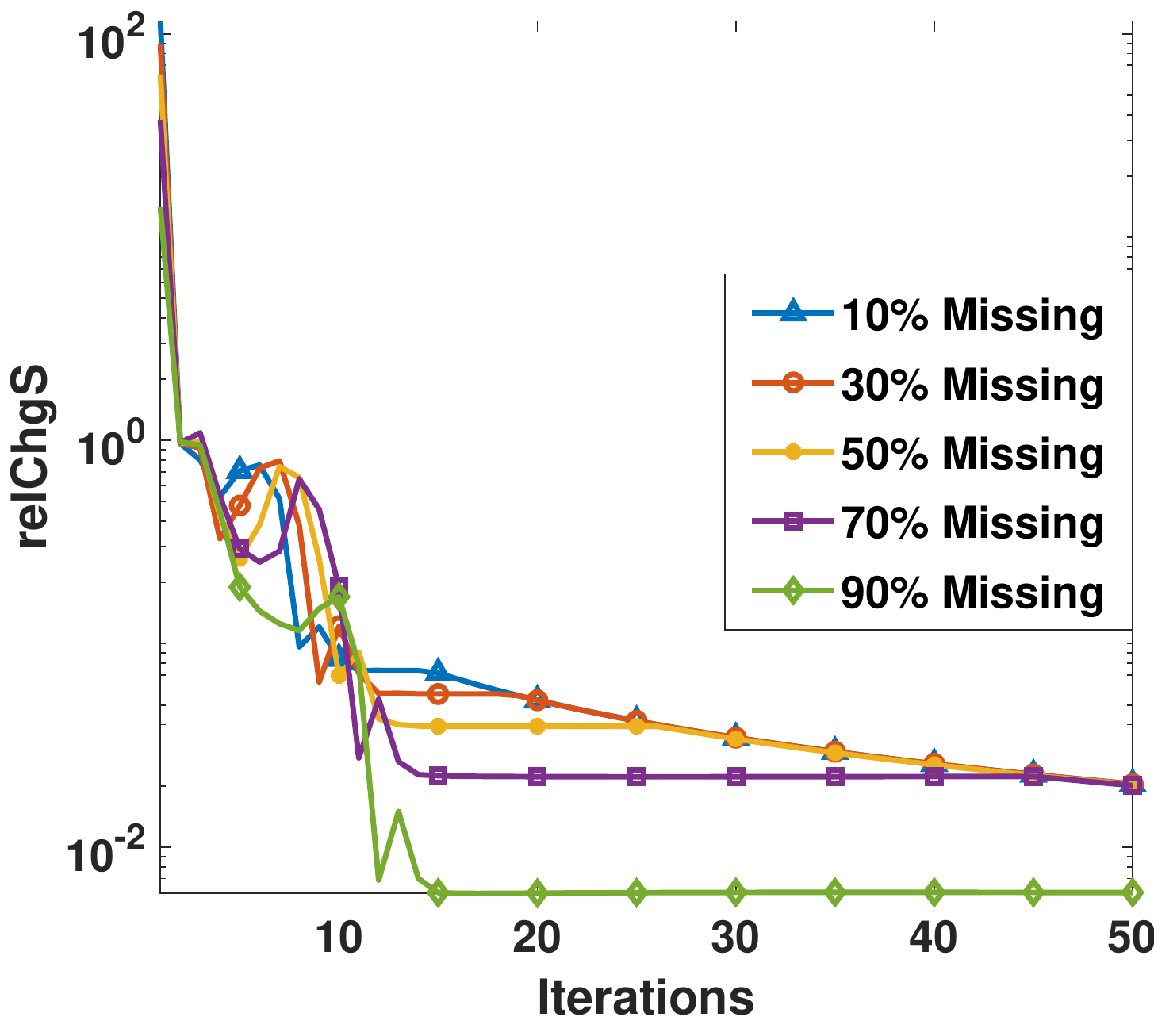}%
	  \label{subfig: relative change S}}
\caption{The empirical convergence analysis of tenPAM algorithm with different ratios of missing pixels (a) Objective value in \eqref{eq: proposed formulation}; (b) Relative change of $\BFcalX$; (c) Relative change of $\BFcalL$; (d) Relative change of $\BFcalS$.}
\label{fig: convergence 1}
\end{figure}
In this experiment, parameter $\lambda$ is set to the values of $0.5$ and the ratio of missing pixels can be selected from $\{10\%,30\%,50\%,70\%,90\%\}$. The curves of the objective value in \eqref{eq: proposed formulation}, the relative change of the full video  $\BFcalX$, the relative change of the video background $\BFcalL$, and the relative change of the video foreground $\BFcalS$ are shown in Figure~\ref{fig: convergence 1}. Figure~\ref{subfig: objective value} illustrates the monotone decreasing trends for the curve of the objective value in \eqref{eq: proposed formulation}, which verify the theoretical results in Theorem~\ref{thm: key1}.  \begin{figure}[!htb]
	\centering
	\subfloat[Relative change of $\BFU^{(1)}$]{\includegraphics[width=0.33\textwidth]{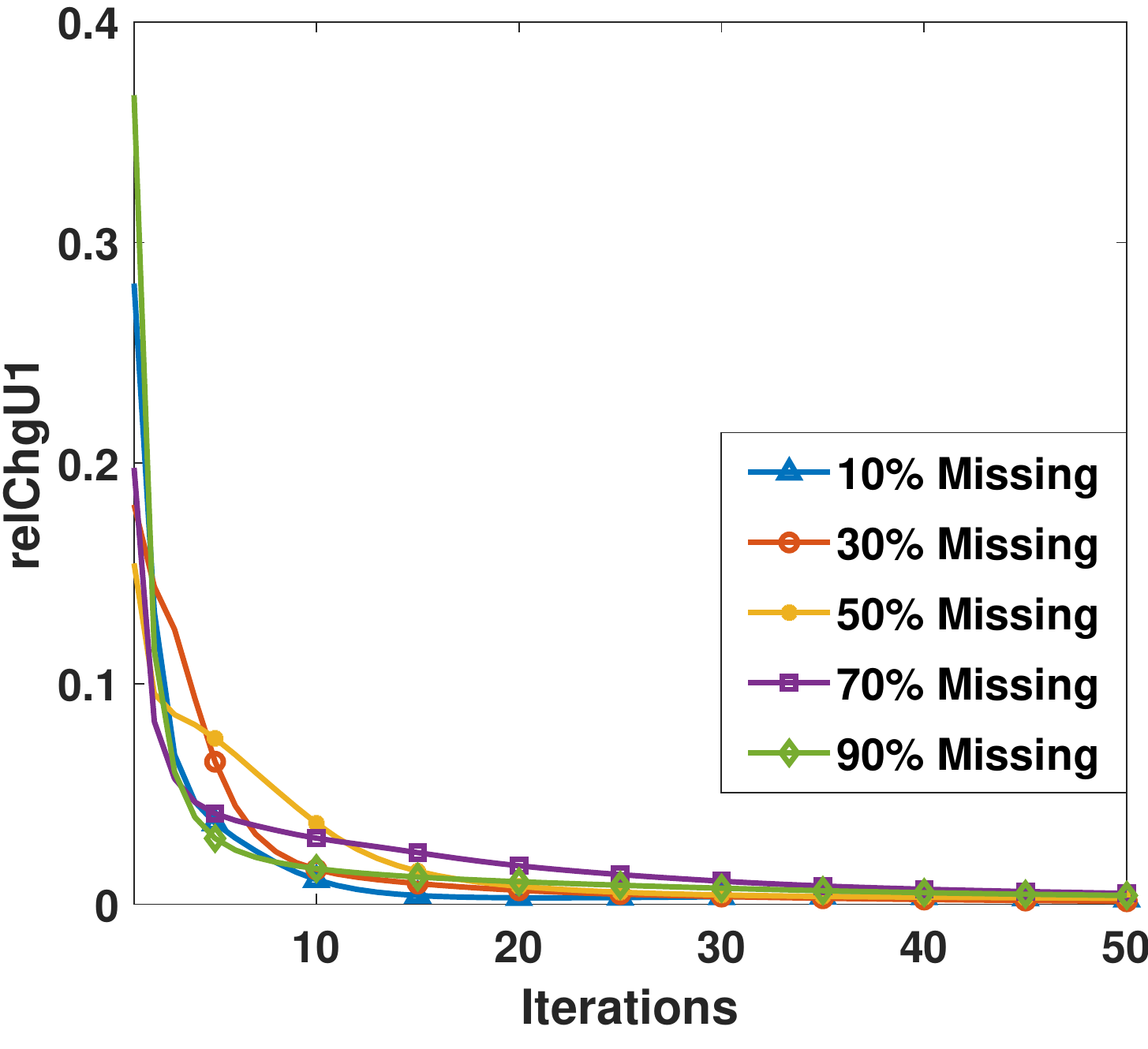}%
	  \label{subfig: relative change U1}}
	  	\subfloat[Relative change of $\BFU^{(2)}$]{\includegraphics[width=0.33\textwidth]{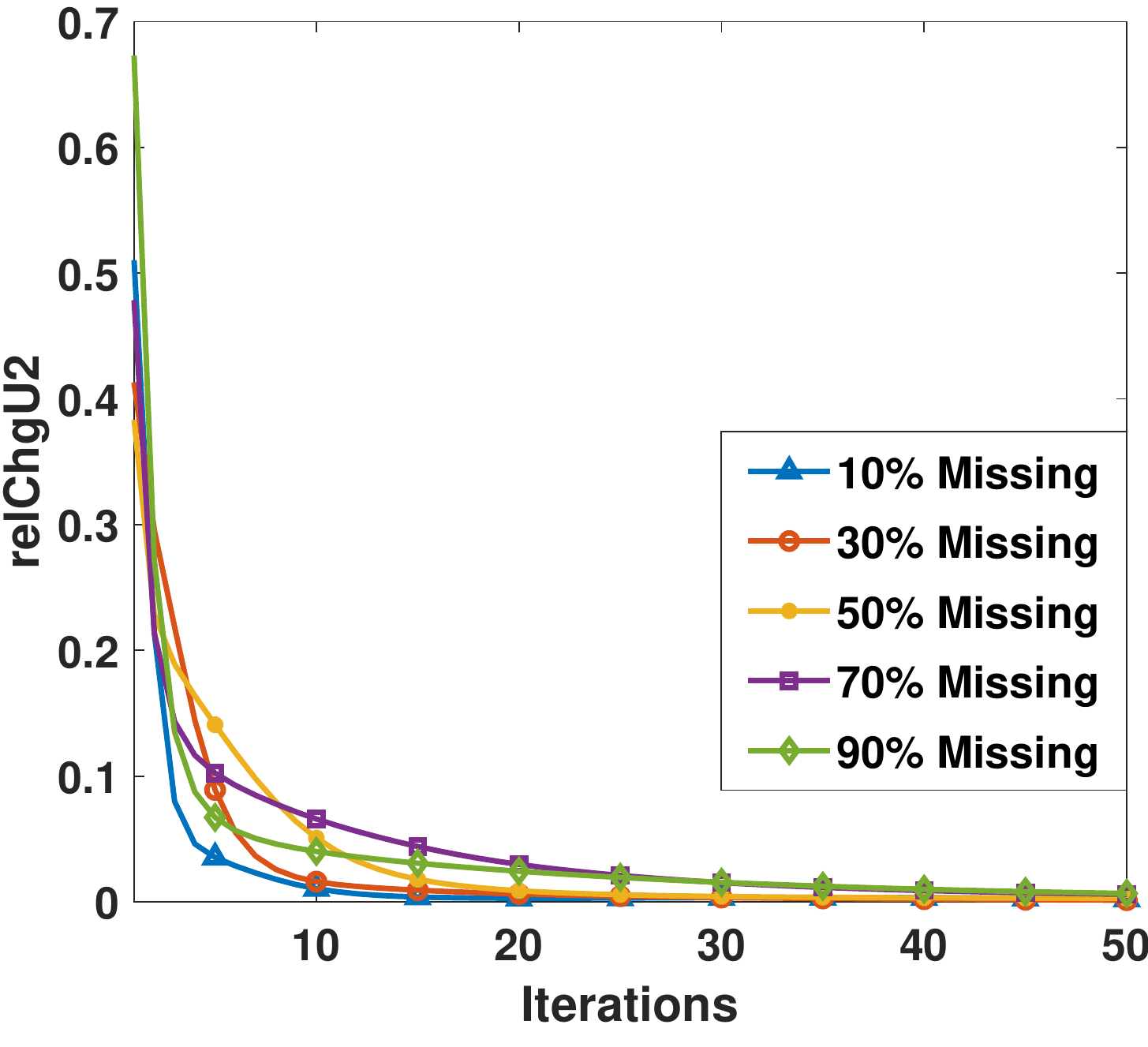}%
	  \label{subfig: relative change U2}}
	 	\subfloat[Relative change of $\BFU^{(3)}$]{\includegraphics[width=0.34\textwidth]{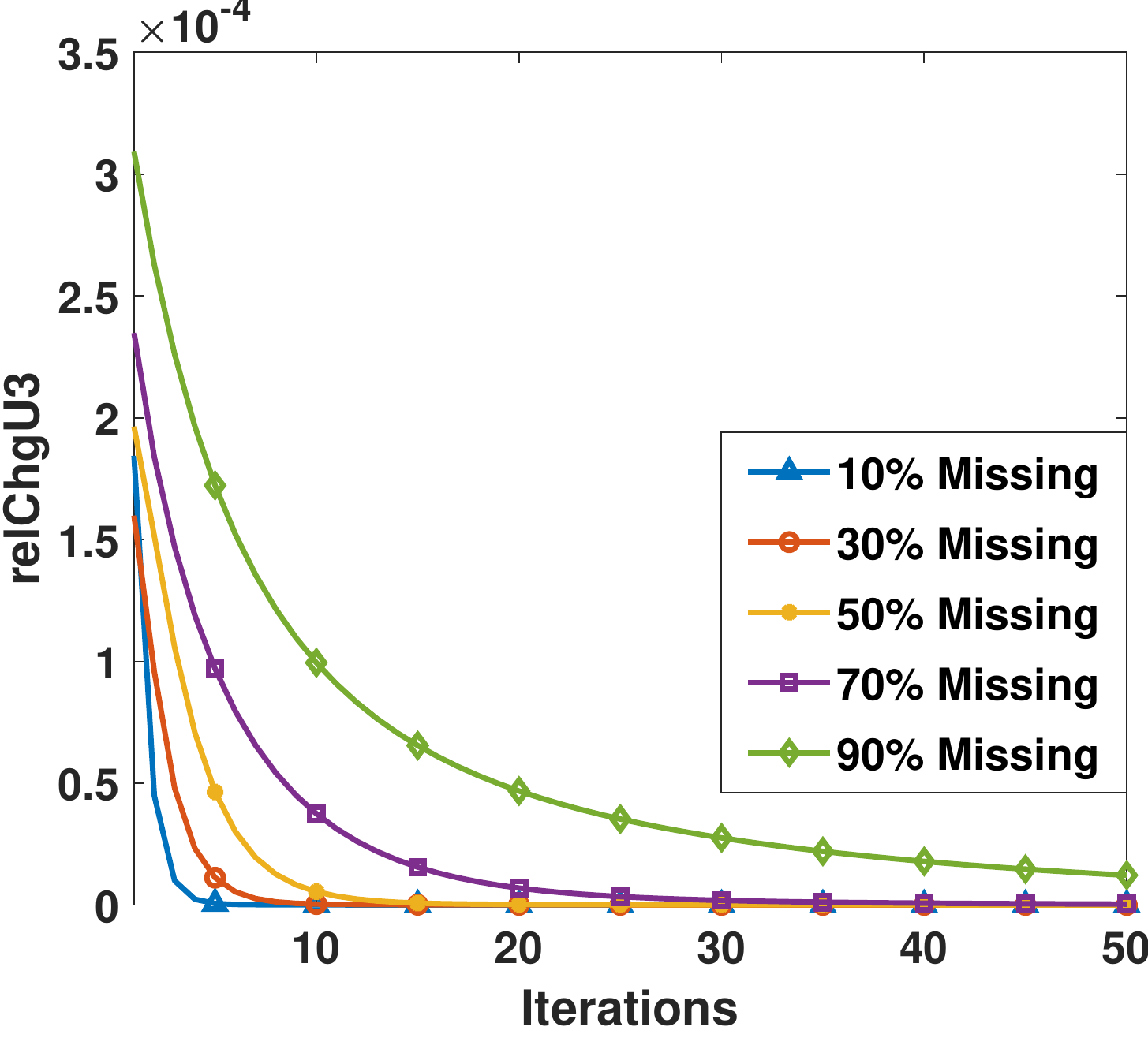}%
	  \label{subfig: relative change U3}}
\caption{The empirical convergence analysis of tenPAM algorithm with different ratios of missing pixels: (a) Relative change of $\BFU^{(1)}$; (b) Relative change of $\BFU^{(2)}$; (c) Relative change of $\BFU^{(3)}$.} 
\label{fig: convergence 2}
\end{figure} Meanwhile, it also shows that a bigger ratio of missing pixels implies a smaller objective value but slower convergence speed because it has less data to learn. From Figures~\ref{subfig: relative change X}, \ref{subfig: relative change L}, and~\ref{subfig: relative change S}, the relative changes of $\BFcalX$, $\BFcalL$, and $\BFcalS$ converge to zero when the number of iterations keeps increasing. Figure~\ref{fig: convergence 2} illustrates that the relative changes of $\BFU^{(1)}$, $\BFU^{(2)}$, and $\BFU^{(3)}$  converge to zero very fast for different ratios of missing pixels. Figures~\ref{fig: convergence 1} and~\ref{fig: convergence 2}  demonstrate that the convergence results in Theorems~\ref{thm: key1} and~\ref{thm: convergence}   are empirically verified. 
 \begin{figure}[!htb]
	\centering
	\subfloat[Relative error of $\BFcalX$]{\includegraphics[width=0.5\textwidth]{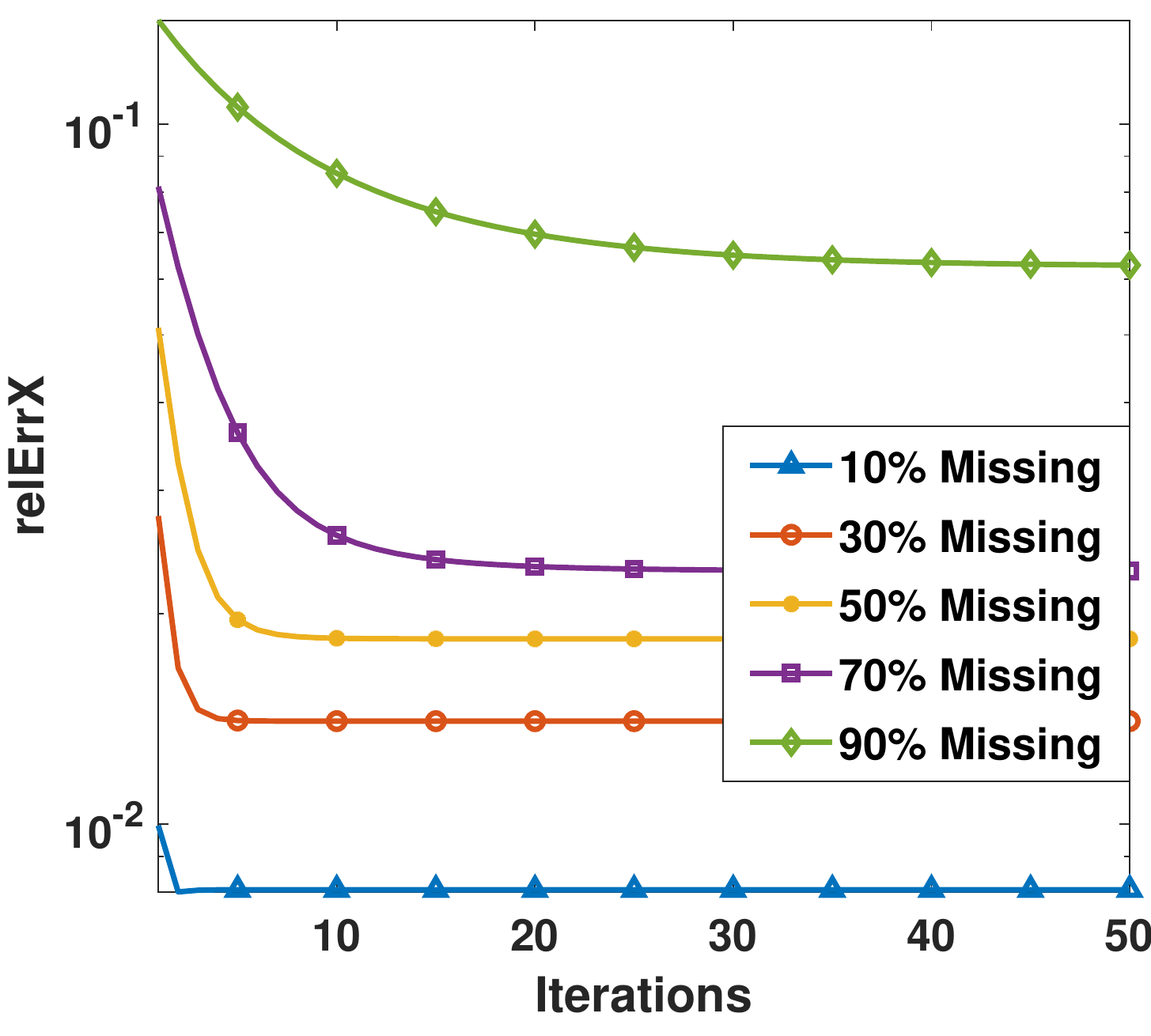}%
	  \label{subfig: relative error X}}
	  	\subfloat[Relative change of $\BFcalL$]{\includegraphics[width=0.5\textwidth]{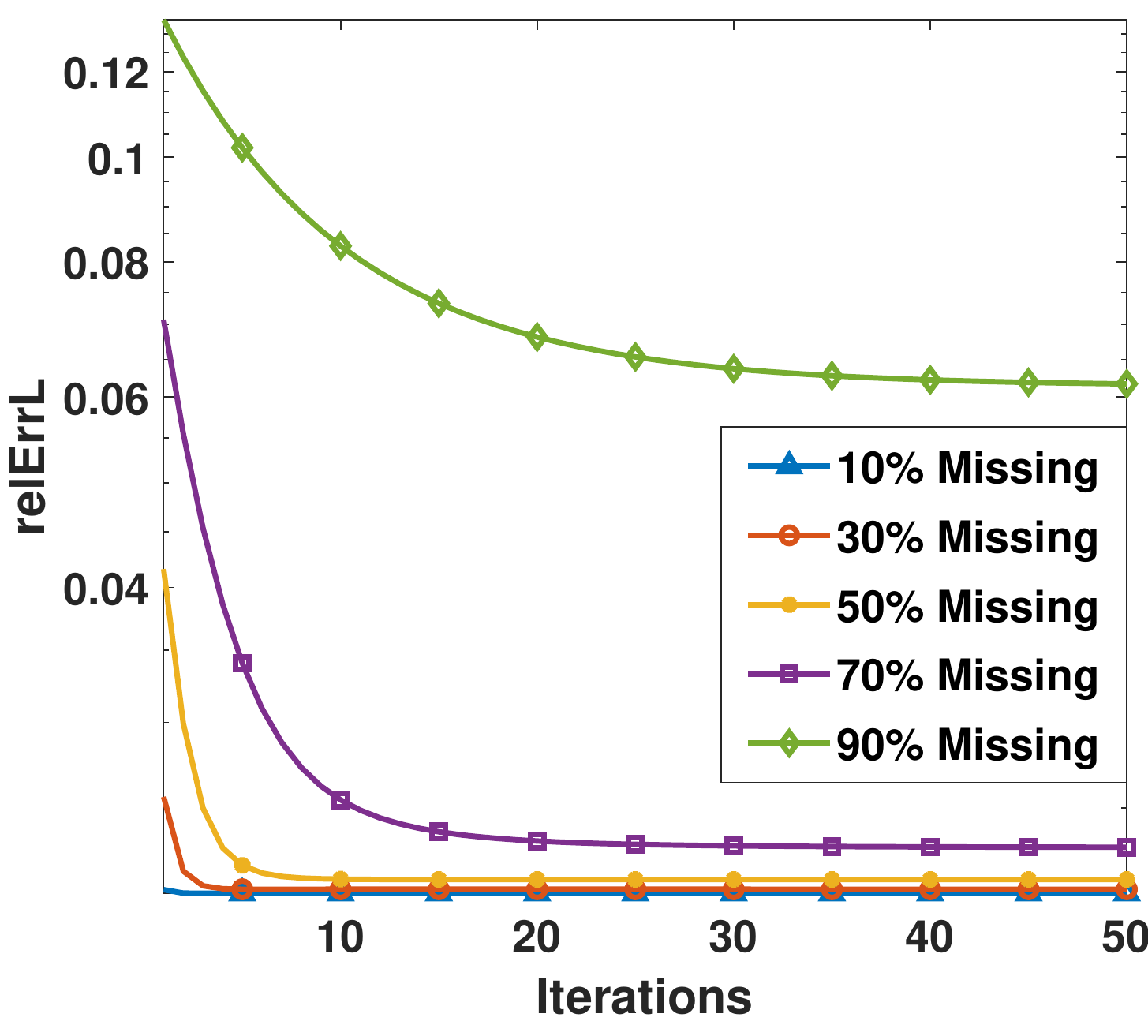}%
	  \label{subfig: relative error L}}
\caption{The empirical convergence analysis of tenPAM algorithm with different ratios of missing pixels: (a) Relative error of $\BFcalX$; (b) Relative error of $\BFcalL$.}
\label{fig: convergence 3}
\end{figure}

In addition, the ground truth of full video $\BFcalX$ and video background $\BFcalL$ is known to us, the curves of the relative error of $\BFcalX$ and $\BFcalA$ are shown in Figure~\ref{fig: convergence 3}. The curve of the relative error of the video foreground $\BFcalS$ is not provided since the ground truth video foreground for real data is unknown. From Figures~\ref{subfig: relative error X} and~\ref{subfig: relative error L}, the relative errors of the full video $\BFcalX$ and video background $\BFcalL$ gradually decrease to a stable value. In general, the results in this experiment show that  our proposed tenPAM algorithm~\ref{alg: PAM} can converge within 50 iterations.

\subsection{Background Subtraction} \label{subsec: background subtraction}
In this subsection, the proposed method is applied to background subtraction. The video data set \textit{Caviar2}  used in Section~\ref{subsec: convergence analysis}, \textit{Candela}, and \textit{Caviar1}  from SBI data set~\citep{maddalena2015towards} are used for the experiments.   \begin{figure}[!htbp] 
	\centering 
	\subfloat[\textit{Candela}]{\includegraphics[width=0.33\textwidth]{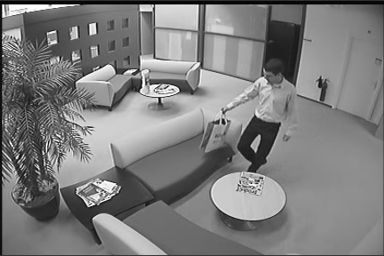}%
	\label{subfig: Candela}}
	\subfloat[\textit{Caviar1}]{ \includegraphics[width=0.33\textwidth]{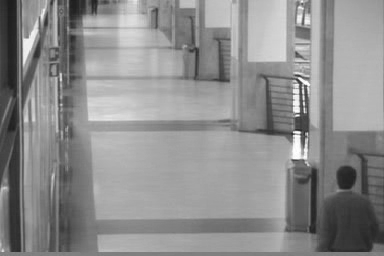}%
	  \label{subfig: Caviar1}}
	\subfloat[\textit{Caviar2}]{ \includegraphics[width=0.33\textwidth]{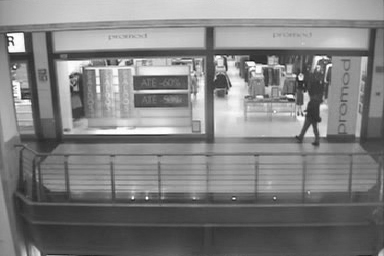}%
	 \label{subfig: Caviar2}} \\
	 	\subfloat[\textit{Candela} with $50\%$ missing pixels]{\includegraphics[width=0.33\textwidth]{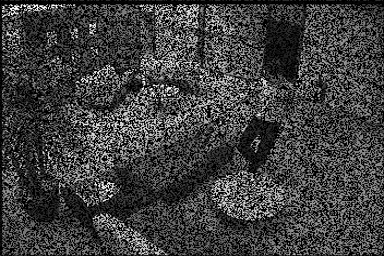}%
	\label{subfig: missing Candela}}
	\subfloat[\textit{Caviar1} with $50\%$ missing pixels]{ \includegraphics[width=0.33\textwidth]{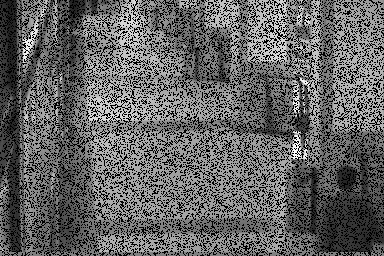}%
	  \label{subfig: missing Caviar1}}
	\subfloat[\textit{Caviar2} with $50\%$ missing pixels]{ \includegraphics[width=0.33\textwidth]{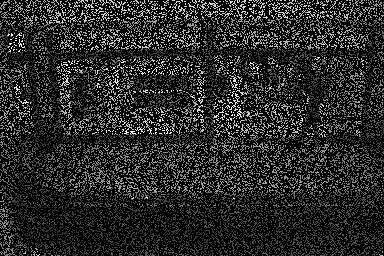}%
	 \label{subfig: missing Caviar2}}
\caption{Video data sets  for background subtraction: (a) \textit{Candela}; (b) \textit{Caviar1}; (c) \textit{Caviar2}; (d)  \textit{Candela} with  $50\%$ missing pixels;  (e) \textit{Caviar1} with $50\%$ missing pixels; (f)  \textit{Caviar2} with  $50\%$ missing pixels.} 
\label{fig: SBI images}
\end{figure}  Figures~\ref{subfig: Candela},~\ref{subfig: Caviar1}, and~\ref{subfig: Caviar2} show three image frames from the three video data sets. In the data set of \textit{Candela}, there is a man entering and leaving room, abandoning a bag. In the data set of \textit{Caviar1}, there are people slowly walking along a corridor, with mild shadows. For all videos, the first 80 image frames are used for experiments.  Thus, the tensor data size  is $256\times 384\times 80$. The background in each video data set is static, which is provided as the ground truth for comparison. There are people walking in the background, which are treated as the smooth foreground. In each video, a ratio of pixels in each image are set as missing pixels, and the positions of the missing pixels are unknown. The corresponding images with 50\% missing pixels from the three video data sets are shown in the second row of Figure~\ref{fig: SBI images}.  \begin{table}[!htb] 
\caption{Background subtraction results comparison on different video data sets with different missing ratios} 
\centering
 \resizebox{\textwidth}{!}{  
\begin{tabular}{lclrrrrrr}
\hline\hline
\multicolumn{1}{c}{Videos} & \begin{tabular}[c]{@{}c@{}}Missing\\Ratio\end{tabular}              & \multicolumn{1}{c}{Indices} & \multicolumn{1}{c}{MCOS} & \multicolumn{1}{c}{BFMNM} & \multicolumn{1}{c}{HQ-ASD} & \multicolumn{1}{c}{RTRC} & \multicolumn{1}{c}{HQ-TCASD} & \multicolumn{1}{c}{Proposed}  \\ 
\hline
\multirow{10}{*}{\textit{Candela}}  & \multirow{2}{*}{10\%} & PSNR                        & 33.10                    & 40.25                     & 38.55                      & 26.47                    & 24.90                        & \textbf{\textcolor{red}{43.23}}           \\
                           &                       & SSIM                        & 0.8342                   & 0.8784                    & 0.8824    & 0.7879                   & 0.5656                       & \textbf{\textcolor{red}{0.9011}}                        \\ 
\cline{2-9}
                           & \multirow{2}{*}{30\%} & PSNR                        & 33.06                    & 40.23                     & 37.38   & 26.25                    & 24.68                        & \textbf{\textcolor{red}{43.33}}                         \\
                           &                       & SSIM                        & 0.8241                   & 0.8761                    & 0.8573                     & 0.7459                   & 0.5258                       & \textbf{\textcolor{red}{0.9034}}        \\ 
\cline{2-9}
                           & \multirow{2}{*}{50\%} & PSNR                        & 32.90                    & 40.10                     & 35.27                      & 25.69                    & 23.95                        & \textbf{\textcolor{red}{43.34}}                         \\
                           &                       & SSIM                        & 0.8062                   & 0.8714                    & 0.8097                     & 0.6840                   & 0.4669                       & \textbf{\textcolor{red}{0.9048}}                        \\ 
\cline{2-9}
                           & \multirow{2}{*}{70\%} & PSNR                        & 17.65                    & 39.35                     & 31.82                      & 24.54                    & 22.46                        & \textbf{\textcolor{red}{43.29}}                         \\
                           &                       & SSIM                        & 0.3225                   & 0.8596                    & 0.7164                     & 0.5863                   & 0.3839                       & \textbf{\textcolor{red}{0.9044}}                        \\ 
\cline{2-9}
                           & \multirow{2}{*}{90\%} & PSNR                        & 8.74                     & 15.95                     & 23.07                      & 21.75                    & 19.18                        & \textbf{\textcolor{red}{41.14}}                         \\
                           &                       & SSIM                        & 0.0942                   & 0.3442                    & 0.4568                     & 0.3857                   & 0.2337                       & \textbf{\textcolor{red}{0.8555}}                        \\ 
\hline
\multirow{10}{*}{\textit{Caviar1}}  & \multirow{2}{*}{10\%} & PSNR                        & 29.07                    & 29.91                     & 31.83                      & 24.92                    & 24.63                        & \textbf{\textcolor{red}{36.68}}                         \\
                           &                       & SSIM                        & 0.7769                   & 0.7703                    & 0.7762                     & 0.7201                   & 0.6898                       & \textbf{\textcolor{red}{0.7992}}                        \\ 
\cline{2-9}
                           & \multirow{2}{*}{30\%} & PSNR                        & 23.04                    & 34.46                     & 31.61                      & 25.03                    & 24.65                        & \textbf{\textcolor{red}{36.74}}                         \\
                           &                       & SSIM                        & 0.3736                   & 0.7779                    & 0.7584                     & 0.7225                   & 0.6288                       & \textbf{\textcolor{red}{0.8028}}                        \\ 
\cline{2-9}
                           & \multirow{2}{*}{50\%} & PSNR                        & 17.88                    & 35.32                     & 30.64                      & 25.14                    & 24.60                        & \textbf{\textcolor{red}{36.75}}                         \\
                           &                       & SSIM                        & 0.2032                   & 0.7762                    & 0.7296                     & 0.7211                   & 0.5490                       & \textbf{\textcolor{red}{0.8051}}                        \\ 
\cline{2-9}
                           & \multirow{2}{*}{70\%} & PSNR                        & 15.54                    & 31.22                     & 28.98                      & 25.23                    & 24.24                        & \textbf{\textcolor{red}{36.78}}                         \\
                           &                       & SSIM                        & 0.1681                   & 0.7497                    & 0.6488                     & 0.7072                   & 0.4560                       & \textbf{\textcolor{red}{0.8068}}                        \\ 
\cline{2-9}
                           & \multirow{2}{*}{90\%} & PSNR                        & 8.27                     & 14.84                     & 22.20                      & 24.89                    & 22.21                        & \textbf{\textcolor{red}{36.62}}                         \\
                           &                       & SSIM                        & 0.0463                   & 0.1869                    & 0.3636                     & 0.6079                   & 0.3025                       & \textbf{\textcolor{red}{0.7901}}                        \\ 
\hline
\multirow{10}{*}{\textit{Caviar2}}  & \multirow{2}{*}{10\%} & PSNR                        & 36.28                    & 43.72                     & 43.34                      & 33.87                    & 31.49                        & \textbf{\textcolor{red}{44.65}}                         \\
                           &                       & SSIM                        & 0.9443                   & 0.9358                    & 0.9485                     & 0.8884                   & 0.8721                       & \textbf{\textcolor{red}{0.9493}}                        \\ 
\cline{2-9}
                           & \multirow{2}{*}{30\%} & PSNR                        & 27.04                    & 43.64                     & 41.03                      & 34.09                    & 31.11                        & \textbf{\textcolor{red}{44.79}}                         \\
                           &                       & SSIM                        & 0.6157                   & 0.9357                    & 0.9300                     & 0.8921                   & 0.8312                       & \textbf{\textcolor{red}{0.9509}}                        \\ 
\cline{2-9}
                           & \multirow{2}{*}{50\%} & PSNR                        & 21.59                    & 43.46                     & 36.66                      & 33.66                    & 29.52                        & \textbf{\textcolor{red}{44.88}}                         \\
                           &                       & SSIM                        & 0.4452                   & 0.9338                    & 0.8941                     & 0.8915                   & 0.7460                       & \textbf{\textcolor{red}{0.9518}}                        \\ 
\cline{2-9}
                           & \multirow{2}{*}{70\%} & PSNR                        & 18.55                    & 42.48                     & 31.62                      & 31.82                    & 26.96                        & \textbf{\textcolor{red}{44.85}}                         \\
                           &                       & SSIM                        & 0.3681                   & 0.9276                    & 0.8241                     & 0.8752                   & 0.6541                       & \textbf{\textcolor{red}{0.9508}}                        \\ 
\cline{2-9}
                           & \multirow{2}{*}{90\%} & PSNR                        & 9.24                     & 14.43                     & 22.44                      & 26.42                    & 22.49                        & \textbf{\textcolor{red}{43.31}}                         \\
                           &                       & SSIM                        & 0.0914                   & 0.3248                    & 0.5415                     & 0.7419                   & 0.4838                       & \textbf{\textcolor{red}{0.9362}}                        \\
\hline\hline
\multicolumn{9}{l}{Note: The bold red numbers are the best performance for each case.}
\end{tabular}
}\label{tab: background} 
\end{table}

In this experiment, the cases of $10\%$, $30\%$, $50\%$, $70\%$, and $90\%$ missing pixels are studied to show the performance of background subtraction under different missing ratios. The quantitative results of all benchmark methods with different missing ratios on simulated \textit{Candela}, \textit{Caviar1}, and \textit{Caviar2} are summarized in Table~\ref{tab: background} regarding PSNR and SSIM, respectively. For all cases, our method can achieve the best performance in terms of PSNR and SSIM. When the missing ratio increases, our proposed SRTC is the most consistent one among all the benchmark methods. Specifically,  our approach has a very small variation for different missing ratios, which is the only method that performs well for the case of 90\% missing pixels. Meanwhile, the performances of MCOS, BFMNM, and HQ-ASD degrade significantly when the missing ratio increases. RTRC and HQ-TCASD perform poorly for all cases.  These results demonstrate that the proposed method has the best performance in terms of accuracy due to the advantage of low-rank Tucker decomposition for the static background model over the nuclear norm.

To show the visualization  result, the background subtraction results from the case of $50\%$ missing ratio  on all three video data sets are demonstrated. The visualizations of the recovered video background from each video data set for different methods are shown in Figure~\ref{fig: background visualization}. \begin{figure}[!htbp]
\centering
\includegraphics[width=1\textwidth]{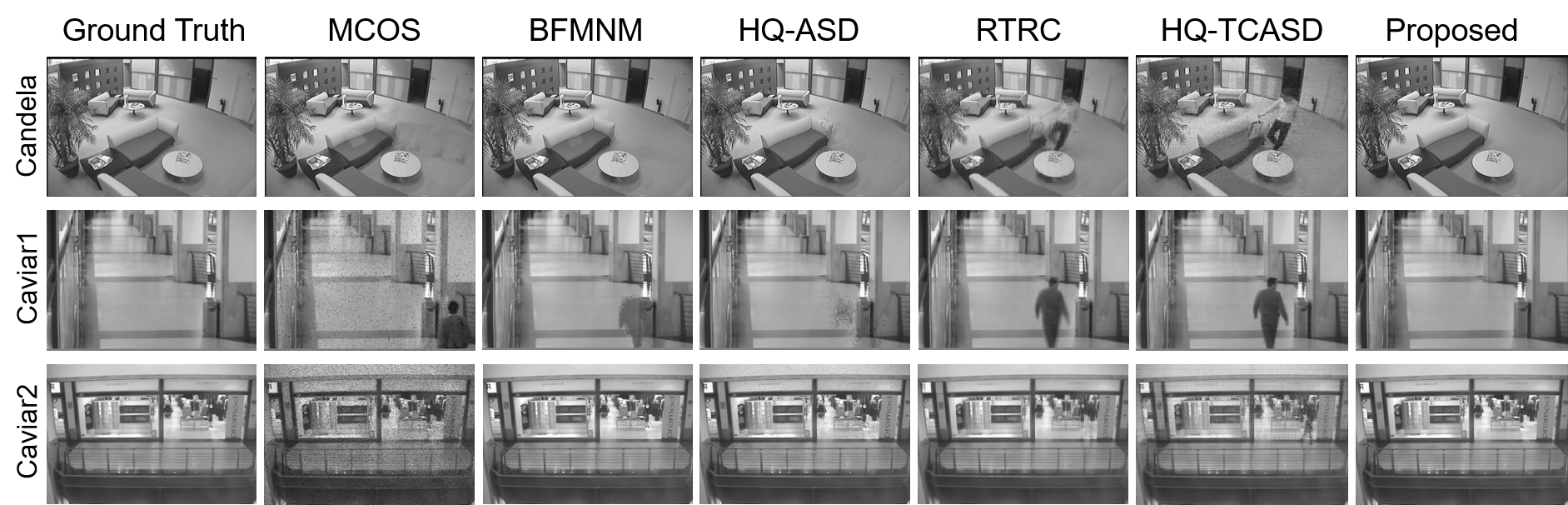}
\caption{Visualization results of different methods on a frame of \textit{Candela}, \textit{Caviar1}, and \textit{Caviar2} for background subtraction with 50\% missing pixels.}
 \label{fig: background visualization}
\end{figure}  Our proposed approach generally produces the cleanest background for all three video data sets. For the backgrounds generated using RTRC and HQ-TCASD, people still remain in the background even though the video even though the image is recovered.  For the results from BFMNM and HQ-ASD, their performance is very close to the proposed method. But they perform poorly for the \textit{Caviar1}, where there is shadow of people left in the background. For MOCS, it cannot recover the video background where there are still missing pixels 
in \textit{Caviar1} and \textit{Caviar2}.

\begin{figure}[!htbp]
	\centering
	\subfloat[\textit{Candela} with $50\%$ missing pixels]{\includegraphics[width=0.333\textwidth]{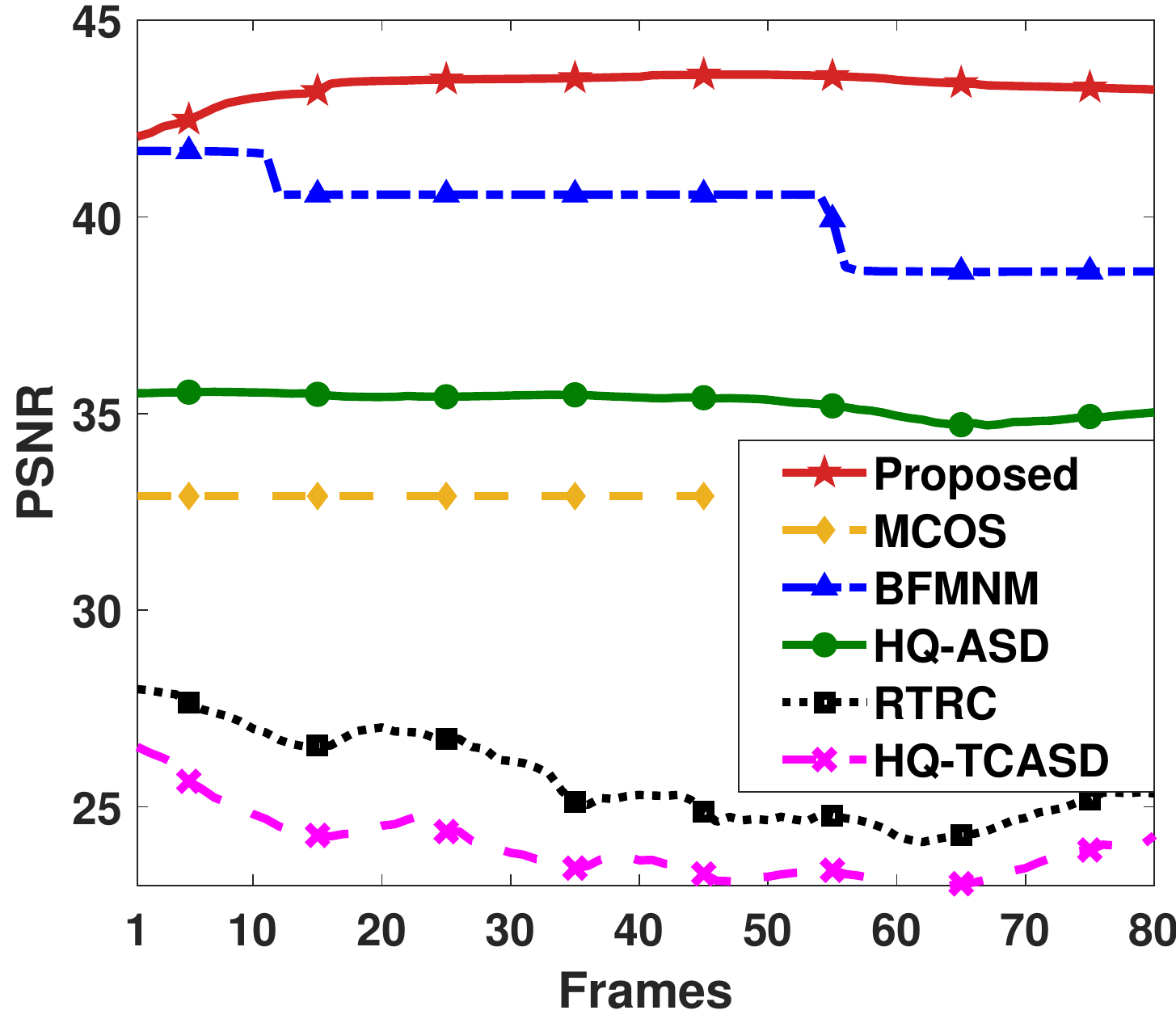}%
	\label{subfig: plot Candela}}
	\subfloat[\textit{Caviar1} with $50\%$  missing pixels]{ \includegraphics[width=0.333\textwidth]{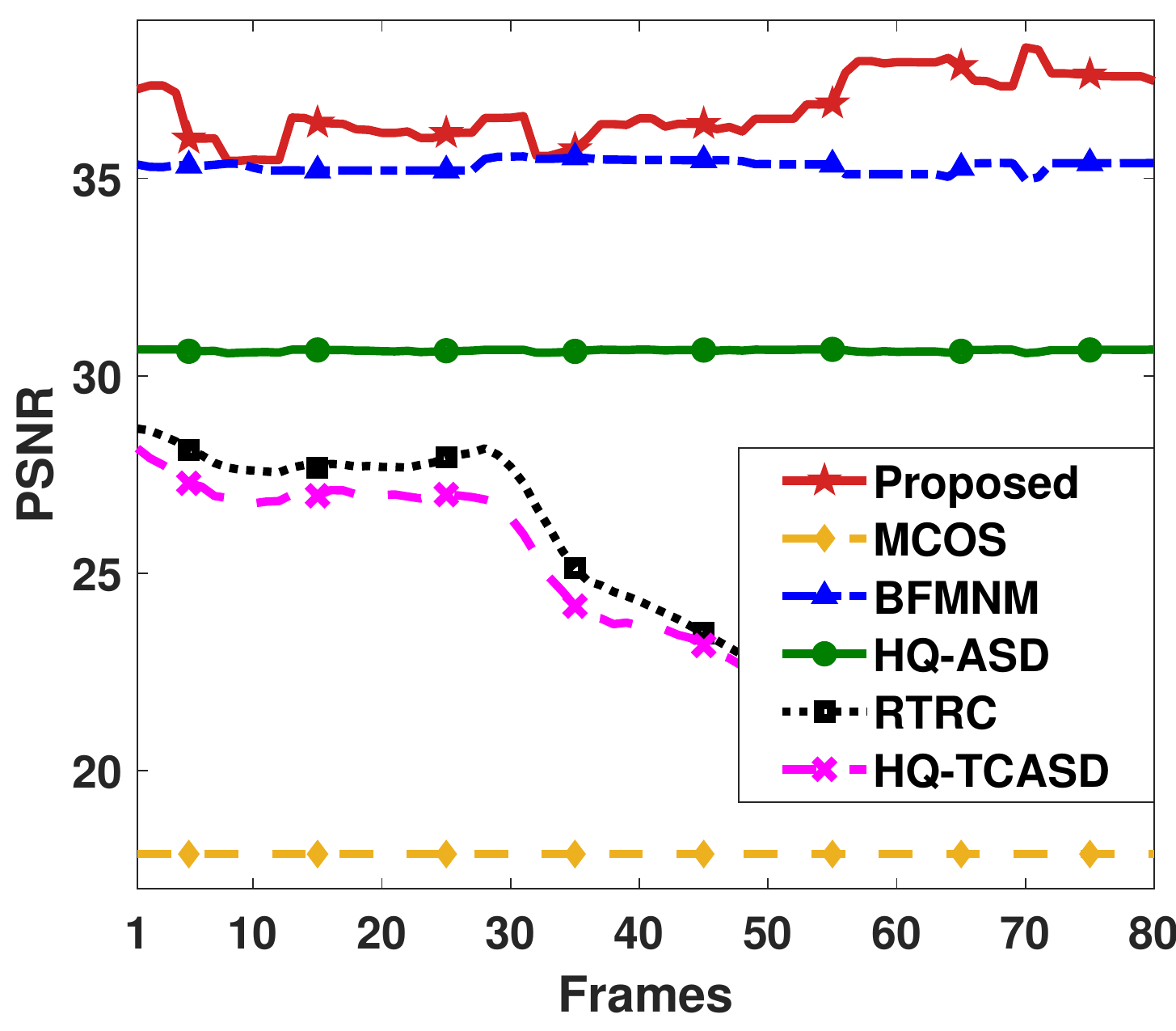}%
	  \label{subfig: plot CAVIAR1}}
	\subfloat[\textit{Caviar2} with $50\%$  missing pixels]{ \includegraphics[width=0.333\textwidth]{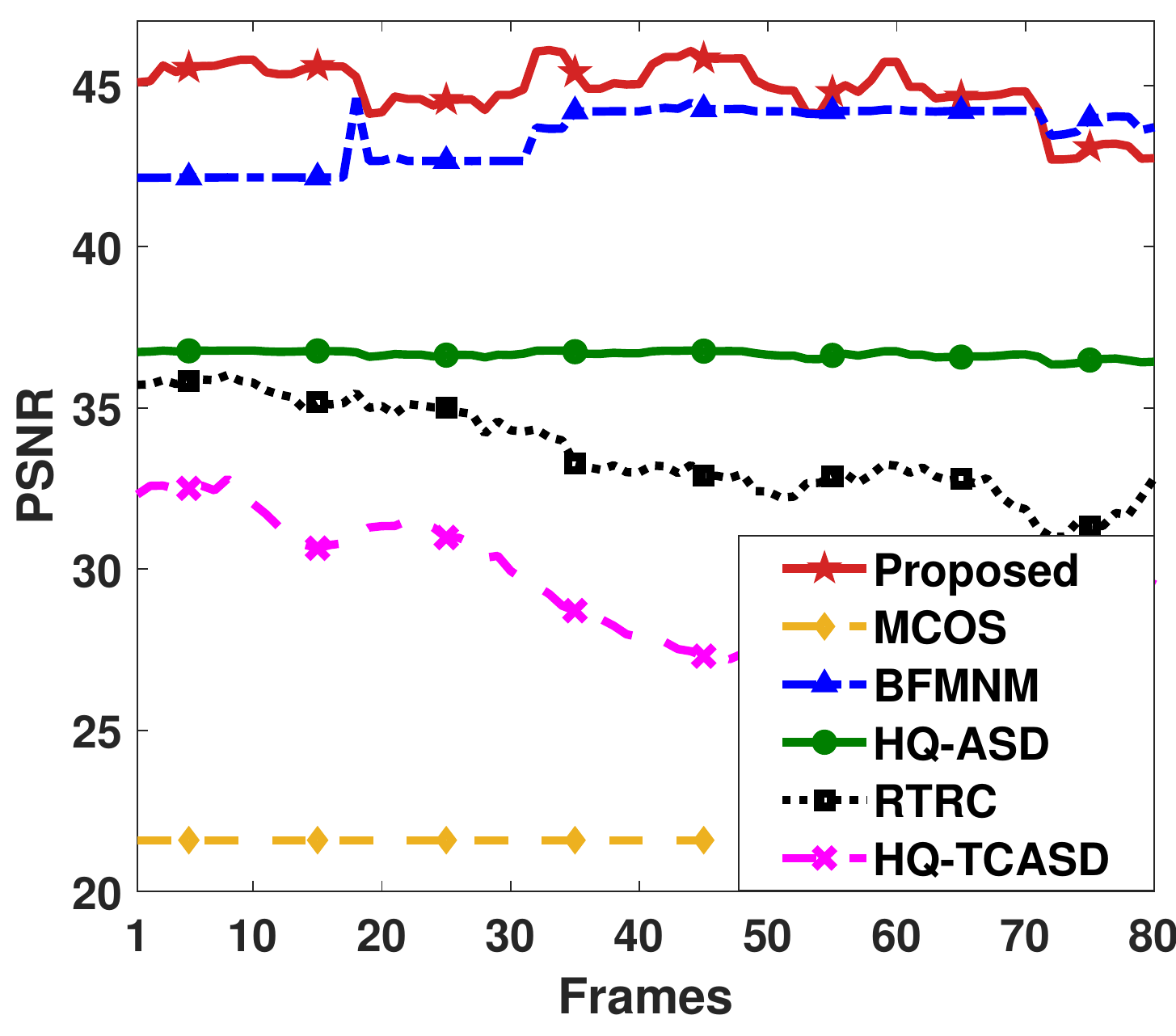}%
	  \label{subfig: plot CAVIAR2}}  
\caption{Background subtraction performance comparison by frames for different methods: (a) \textit{Candela} with $50\%$  missing pixels; (b) \textit{Caviar1} $50\%$  missing pixels; (c) \textit{Caviar2} $50\%$  missing pixels.} 
 \label{fig: Background vs Frame}
\end{figure}
Since the quantitative results in Table~\ref{tab: background} are the average performance of each algorithm for 80 frames in the video,   the performance variability for each frame is demonstrated.   For the case of  $50\%$ missing ratio, Figure~\ref{fig: Background vs Frame} shows PSNR of each frame in videos of \textit{Candela}, \textit{Caviar1}, and \textit{Caviar2}. This result shows that the proposed method is very stable since the variation among different frames is quite small. Besides, our proposed method can achieve the best performance for almost every frame, except for several frames in \textit{Caviar2}. BFMNM is the only comparable method, which has the second best performance, while other benchmark methods perform very poorly.

\subsection{Foreground Detection} \label{subsec: foreground detection}
In this subsection, our proposed method is applied to foreground detection. The video data sets, namely, \textit{Highway}, \textit{Office}, and \textit{Pedestrians} from CDnet data set\footnote{\url{http://jacarini.dinf.usherbrooke.ca/dataset2014/}.}~\citep{wang2014cdnet}, are used for experiments.\begin{figure}[!htb] 
	\centering 
	\subfloat[\textit{Highway}]{\includegraphics[width=0.33\textwidth]{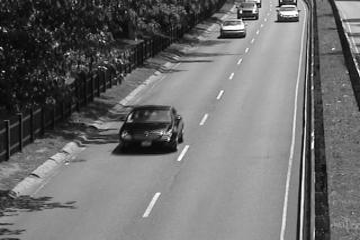}%
	\label{subfig: Highway}}
	\subfloat[\textit{Office}]{ \includegraphics[width=0.33\textwidth]{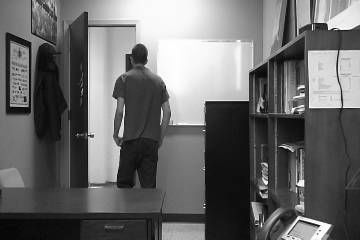}%
	  \label{subfig: Office}}
	\subfloat[\textit{Pedestrians}]{ \includegraphics[width=0.33\textwidth]{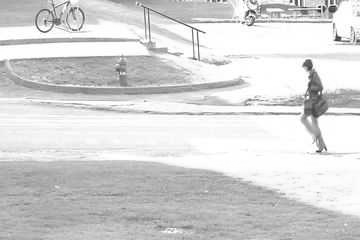}%
	 \label{subfig: Pedestrains}} \\
	 	\subfloat[\textit{Highway} with $70\%$ missing pixels]{\includegraphics[width=0.33\textwidth]{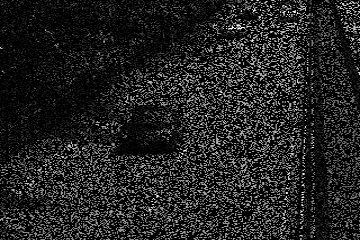}%
	\label{subfig: missing Highway}}
	\subfloat[\textit{Office} with $70\%$ missing pixels]{ \includegraphics[width=0.33\textwidth]{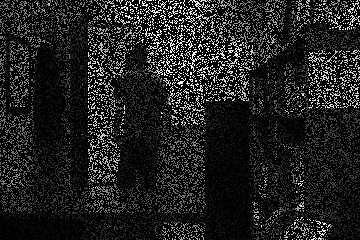}%
	  \label{subfig: missing Office}}
	\subfloat[\textit{Pedestrians} with $70\%$ missing pixels]{ \includegraphics[width=0.33\textwidth]{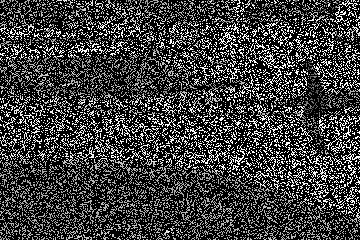}%
	 \label{subfig: missing Pedestrains}}
\caption{Video data sets  for foreground detection: (a) \textit{Highway}; (b) \textit{Office}; (c) \textit{Pedestrians}; (d)  \textit{Highway} with $70\%$ missing pixels;  (e) \textit{Office} with $70\%$ missing pixels; (f)  \textit{Pedestrians} with $70\%$ missing pixels.} 
\label{fig: CDnet images}
\end{figure}  Figures~\ref{subfig: Highway},~\ref{subfig: Office}, and~\ref{subfig: Pedestrains}  show three image frames from the three video data sets, respectively. In the data set of \textit{Highway}, there are vehicles moving along with the highway, where the size of each grayscale image  is $240\times320$. In the data set of \textit{Office}, there is a man entering, staying, and leaving the room, where the size of each grayscale image  is $240\times360$. In the data set of \textit{Pedestrians}, there are people walking on the road, where the size of each grayscale image  is $240\times360$. For all the videos, the first 80 image frames are used for experiments for the sake of computational time. The background in each video data set is static. The binary masks for the video foreground are provided as ground truth for comparison. In each video, a ratio of pixels in each image are set as missing pixels, and the positions of the missing pixels are unknown. In each video, a ratio of pixels in each image are set as missing pixels, and the positions of the missing pixels are unknown. The corresponding images with 70\% missing pixels from the three video data sets are shown in the second row of Figure~\ref{fig: CDnet images}. 
\begin{table}[!htbp] 
\caption{Foreground detection results comparison on different video data sets with different missing ratios} 
\centering
 \resizebox{\textwidth}{!}{  
\begin{tabular}{cclrrrrrr} 
\hline\hline
Videos                        & \begin{tabular}[c]{@{}c@{}}Missing\\Ratio\end{tabular}           & Indexs    & \multicolumn{1}{c}{MCOS} & \multicolumn{1}{c}{BFMNM} & \multicolumn{1}{c}{HQ-ASD} & \multicolumn{1}{c}{RTRC} & \multicolumn{1}{c}{HQ-TCASD} & \multicolumn{1}{c}{Proposed}  \\ 
\hline
\multirow{15}{*}{\textit{Highway}}     & \multirow{3}{*}{10\%} & Precision & 0.5966                   & \textbf{\textcolor{red}{0.7881}}                    & 0.6944                     & 0.5276                   & 0.1917                       & 0.7810                        \\
                              &                       & Recall    & 0.7968                   & 0.2350                    & 0.8089                     & 0.6418                   & 0.2237                       & \textbf{\textcolor{red}{0.9443}}                        \\
                              &                       & F-measure    & 0.6794                   & 0.3557                    & 0.7457                     & 0.5786                   & 0.2044                       & \textbf{\textcolor{red}{0.8546}}                        \\ 
\cline{2-9}
                              & \multirow{3}{*}{30\%} & Precision & 0.5901                   & 0.7733                    & 0.5507                     & 0.4219                   & 0.1476                       & \textbf{\textcolor{red}{0.7864}}                        \\
                              &                       & Recall    & 0.7965                   & 0.1029                    & 0.7928                     & 0.5968                   & 0.1657                       & \textbf{\textcolor{red}{0.9453}}                        \\
                              &                       & F-measure    & 0.6756                   & 0.1791                    & 0.6480                     & 0.4934                   & 0.1539                       & \textbf{\textcolor{red}{0.8583}}                        \\ 
\cline{2-9}
                              & \multirow{3}{*}{50\%} & Precision & 0.4871                   & 0.4646                    & 0.3986                     & 0.3099                   & 0.1085                       & \textbf{\textcolor{red}{0.7919}}                        \\
                              &                       & Recall    & 0.2953                   & 0.0891                    & 0.7371                     & 0.5319                   & 0.0988                       & \textbf{\textcolor{red}{0.9456}}                        \\
                              &                       & F-measure    & 0.3644                   & 0.1453                    & 0.5155                     & 0.3906                   & 0.0987                       & \textbf{\textcolor{red}{0.8616}}                        \\ 
\cline{2-9}
                              & \multirow{3}{*}{70\%} & Precision & 0.3042                   & 0.4922                    & 0.2416                     & 0.1903                   & 0.1313                       & \textbf{\textcolor{red}{0.8068}}                        \\
                              &                       & Recall    & 0.1481                   & 0.0838                    & 0.6430                     & 0.4375                   & 0.0567                       & \textbf{\textcolor{red}{0.9368}}                        \\
                              &                       & F-measure    & 0.1951                   & 0.1412                    & 0.3495                     & 0.2642                   & 0.0727                       & \textbf{\textcolor{red}{0.8666}}                        \\ 
\cline{2-9}
                              & \multirow{3}{*}{90\%} & Precision & 0.4234                   & 0.4768                    & 0.0797                     & 0.0653                   & 0.0902                       & \textbf{\textcolor{red}{0.7775}}                        \\
                              &                       & Recall    & 0.0695                   & 0.0559                    & 0.3935                     & 0.2663                   & 0.0379                       & \textbf{\textcolor{red}{0.9315}}                        \\
                              &                       & F-measure    & 0.1003                   & 0.0936                    & 0.1314                     & 0.1035                   & 0.0520                       & \textbf{\textcolor{red}{0.8466}}                        \\ 
\hline
\multirow{15}{*}{\textit{Office}}      & \multirow{3}{*}{10\%} & Precision & 0.6602                   & 0.6725                    & 0.6373                     & 0.3291                   & 0.2844                       & \textbf{\textcolor{red}{0.8539}}                        \\
                              &                       & Recall    & 0.6033                   & 0.2673                    & 0.5988                     & 0.2669                   & 0.1307                       & \textbf{\textcolor{red}{0.9443}}                        \\
                              &                       & F-measure    & 0.6158                   & 0.3787                    & 0.6017                     & 0.2926                   & 0.1761                       & \textbf{\textcolor{red}{0.8954}}                        \\ 
\cline{2-9}
                              & \multirow{3}{*}{30\%} & Precision & 0.5971                   & 0.7364                    & 0.5026                     & 0.2721                   & 0.2383                       & \textbf{\textcolor{red}{0.8580}}                        \\
                              &                       & Recall    & 0.4194                   & 0.1417                    & 0.5414                     & 0.2568                   & 0.1247                       & \textbf{\textcolor{red}{0.9415}}                        \\
                              &                       & F-measure    & 0.4779                   & 0.2348                    & 0.5170                     & 0.2616                   & 0.1601                       & \textbf{\textcolor{red}{0.8963}}                        \\ 
\cline{2-9}
                              & \multirow{3}{*}{50\%} & Precision & 0.5258                   & 0.8091                    & 0.3738                     & 0.2051                   & 0.2796                       & \textbf{\textcolor{red}{0.8584}}                        \\
                              &                       & Recall    & 0.2197                   & 0.1022                    & 0.5049                     & 0.2481                   & 0.0834                       & \textbf{\textcolor{red}{0.9450}}                        \\
                              &                       & F-measure    & 0.2844                   & 0.1787                    & 0.4250                     & 0.2211                   & 0.1229                       & \textbf{\textcolor{red}{0.8982}}                        \\ 
\cline{2-9}
                              & \multirow{3}{*}{70\%} & Precision & 0.6565                   & 0.7505                    & 0.2274                     & 0.1272                   & 0.2904                       & \textbf{\textcolor{red}{0.8601}}                        \\
                              &                       & Recall    & 0.0850                   & 0.0867                    & 0.4403                     & 0.2435                   & 0.0665                       & \textbf{\textcolor{red}{0.9483}}                        \\
                              &                       & F-measure    & 0.1496                   & 0.1536                    & 0.2965                     & 0.1643                   & 0.1063                       & \textbf{\textcolor{red}{0.9008}}                        \\ 
\cline{2-9}
                              & \multirow{3}{*}{90\%} & Precision & 0.8400                   & 0.8042                    & 0.0790                     & 0.0446                   & 0.0992                       & \textbf{\textcolor{red}{0.8490}}                        \\
                              &                       & Recall    & 0.0720                   & 0.0782                    & 0.2646                     & 0.2442                   & 0.0657                       & \textbf{\textcolor{red}{0.9397}}                        \\
                              &                       & F-measure    & 0.1303                   & 0.1404                    & 0.1211                     & 0.0746                   & 0.0772                       & \textbf{\textcolor{red}{0.8908}}                        \\ 
\hline
\multirow{15}{*}{\textit{Pedestrians}} & \multirow{3}{*}{10\%} & Precision & 0.6252                   & \textbf{\textcolor{red}{0.8364}}                    & 0.6931                     & 0.6042                   & 0.2750                       & 0.7737                        \\
                              &                       & Recall    & 0.8304                   & 0.1653                    & 0.8708                     & 0.7026                   & 0.4153                       & \textbf{\textcolor{red}{0.9626}}                        \\
                              &                       & F-measure    & 0.7115                   & 0.2758                    & 0.7713                     & 0.6490                   & 0.3291                       & \textbf{\textcolor{red}{0.8577}}                        \\ 
\cline{2-9}
                              & \multirow{3}{*}{30\%} & Precision & 0.6197                   & \textbf{\textcolor{red}{0.8421}}                    & 0.5429                     & 0.4823                   & 0.2073                       & 0.7849                        \\
                              &                       & Recall    & 0.8256                   & 0.0697                    & 0.7956                     & 0.6728                   & 0.3385                       & \textbf{\textcolor{red}{0.9617}}                        \\
                              &                       & F-measure    & 0.7062                   & 0.1238                    & 0.6446                     & 0.5612                   & 0.2553                       & \textbf{\textcolor{red}{0.8642}}                        \\ 
\cline{2-9}
                              & \multirow{3}{*}{50\%} & Precision & 0.3066                   & 0.2665                    & 0.3913                     & 0.3462                   & 0.1232                       & \textbf{\textcolor{red}{0.7972}}                        \\
                              &                       & Recall    & 0.1251                   & 0.0868                    & 0.6984                     & 0.6392                   & 0.2251                       & \textbf{\textcolor{red}{0.9626}}                        \\
                              &                       & F-measure    & 0.1667                   & 0.1209                    & 0.5007                     & 0.4484                   & 0.1570                       & \textbf{\textcolor{red}{0.8719}}                        \\ 
\cline{2-9}
                              & \multirow{3}{*}{70\%} & Precision & 0.5241                   & 0.3147                    & 0.2416                     & 0.2105                   & 0.0834                       & \textbf{\textcolor{red}{0.8159}}                        \\
                              &                       & Recall    & 0.0320                   & 0.0864                    & 0.5373                     & 0.5668                   & 0.1306                       & \textbf{\textcolor{red}{0.9583}}                        \\
                              &                       & F-measure    & 0.0586                   & 0.1299                    & 0.3329                     & 0.3063                   & 0.0990                       & \textbf{\textcolor{red}{0.8811}}                        \\ 
\cline{2-9}
                              & \multirow{3}{*}{90\%} & Precision & 0.2561                   & 0.8010                    & 0.0818                     & 0.0721                   & 0.0703                       & \textbf{\textcolor{red}{0.8448}}                        \\
                              &                       & Recall    & 0.0324                   & 0.0272                    & 0.2427                     & 0.3659                   & 0.0375                       & \textbf{\textcolor{red}{0.9283}}                        \\
                              &                       & F-measure    & 0.0539                   & 0.0486                    & 0.1220                     & 0.1201                   & 0.0428                       & \textbf{\textcolor{red}{0.8838}}                        \\
\hline\hline
\multicolumn{9}{l}{Note: The bold red numbers are the best performance for each case.}
\end{tabular}
}\label{tab: foreground} 
\end{table}

In this experiment, the cases of $10\%$, $30\%$, $50\%$, $70\%$, and $90\%$ missing pixels  are investigated to show the performance of foreground detection under different missing ratios. The quantitative results of all benchmark methods with different missing ratios of simulated \textit{Highway}, \textit{Office}, and \textit{Pedestrians} are summarized in Table~\ref{tab: foreground} in terms of Precision, Recall, and F-measure, respectively. In terms of precision, our method can achieve the best performance, except in one case, namely, \textit{Highway} with 10\% missing pixels. In terms of recall and F-measure, the proposed SRTC can achieve the best performance in all cases.  Our proposed SRTC is the most consistent one while  all the benchmark methods show significant performance degradation when the missing ratio increases. Specifically, the performances of our method with $10\%$, $30\%$, $50\%$, $70\%$, and $90\%$ missing ratios are almost the same.  MCOS and HQ-ASD have good performance when the missing ratio is small. BFMNM and HQ-TCASD perform very poorly for all cases. Overall, our proposed approach shows the best performance due to the advantage of smoothness modeling of the video foreground instead of sparsity. 

\begin{figure}[!ht]
\centering
\includegraphics[width=1\textwidth]{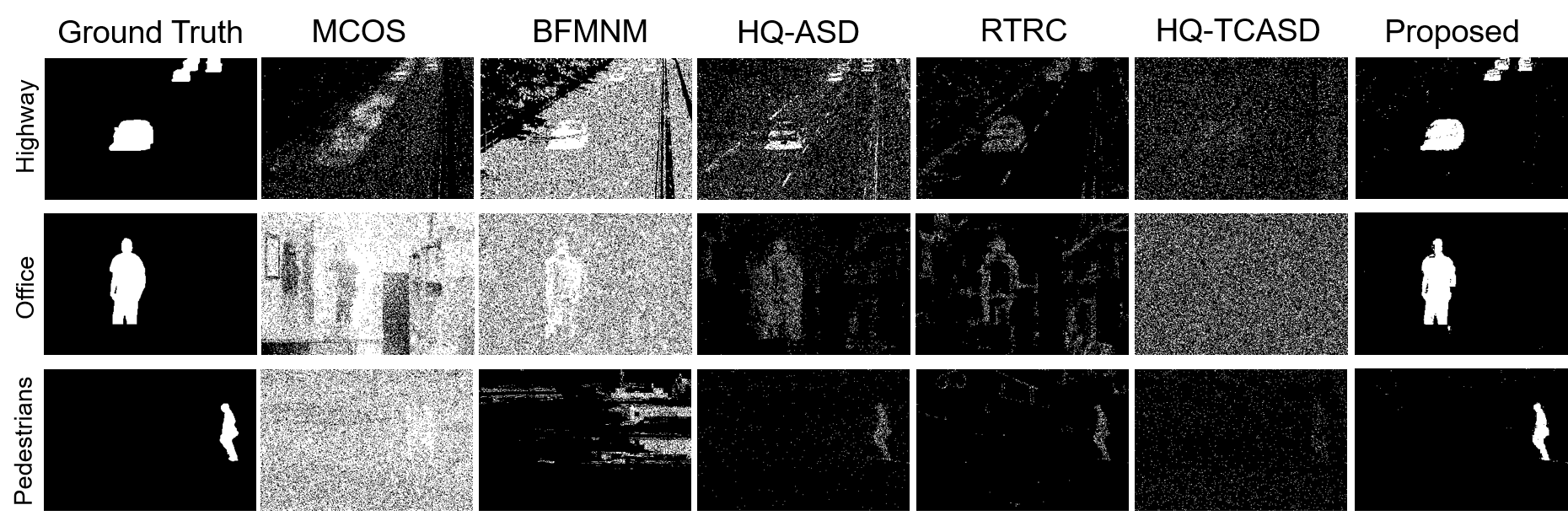}
\caption{Visualization results of different methods on a frame of \textit{Highway}, \textit{Office}, and \textit{Pedestrians} for the foreground with 70\% missing pixels.}
 \label{fig: foreground visualization}
\end{figure}
\begin{figure}[!ht]
	\centering
	\subfloat[\textit{Highway} with $70\%$ missing pixels]{\includegraphics[width=0.333\textwidth]{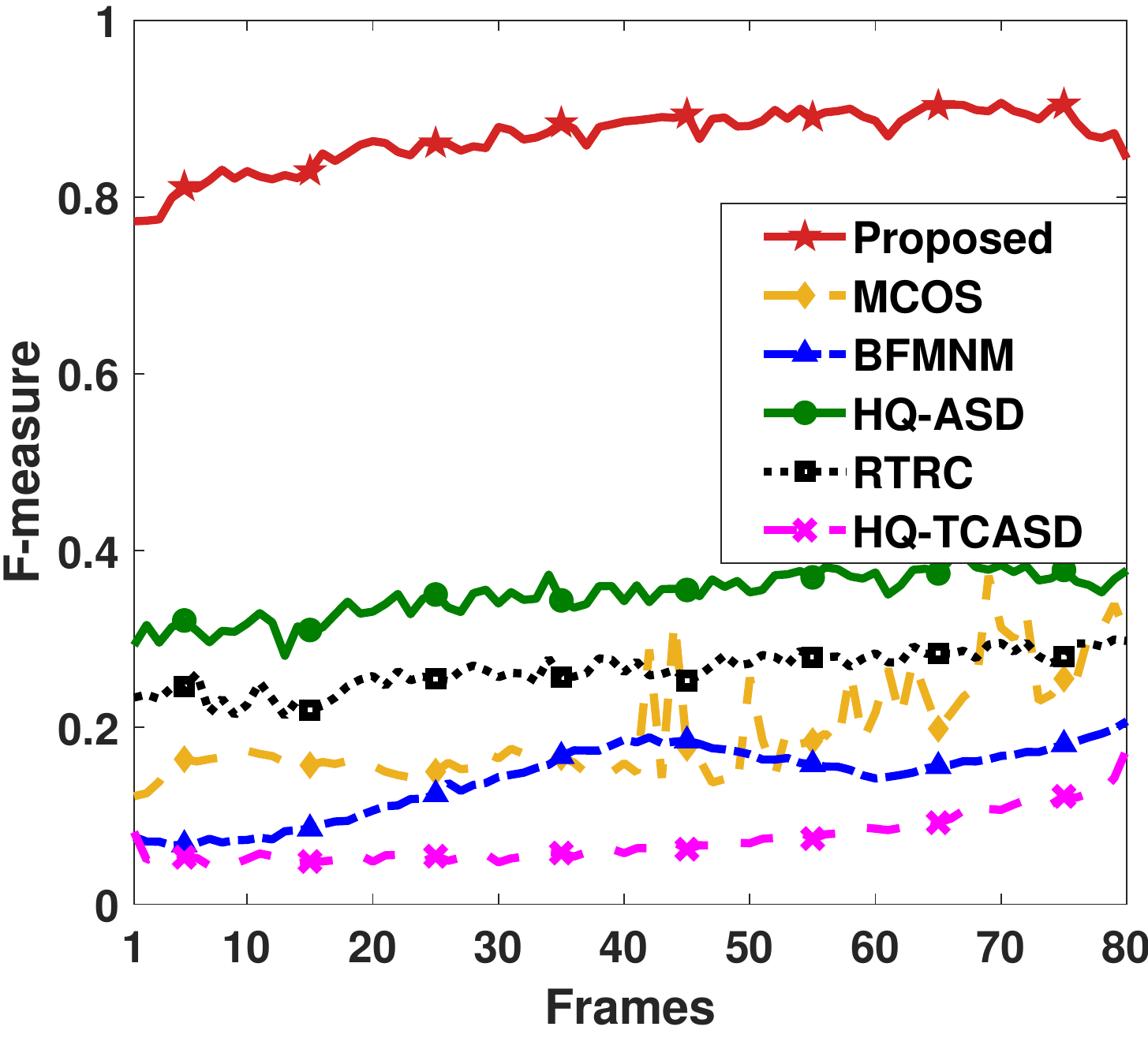}%
	\label{subfig: plot Highway by frame}}
	\subfloat[\textit{Office} with $70\%$  missing pixels]{ \includegraphics[width=0.333\textwidth]{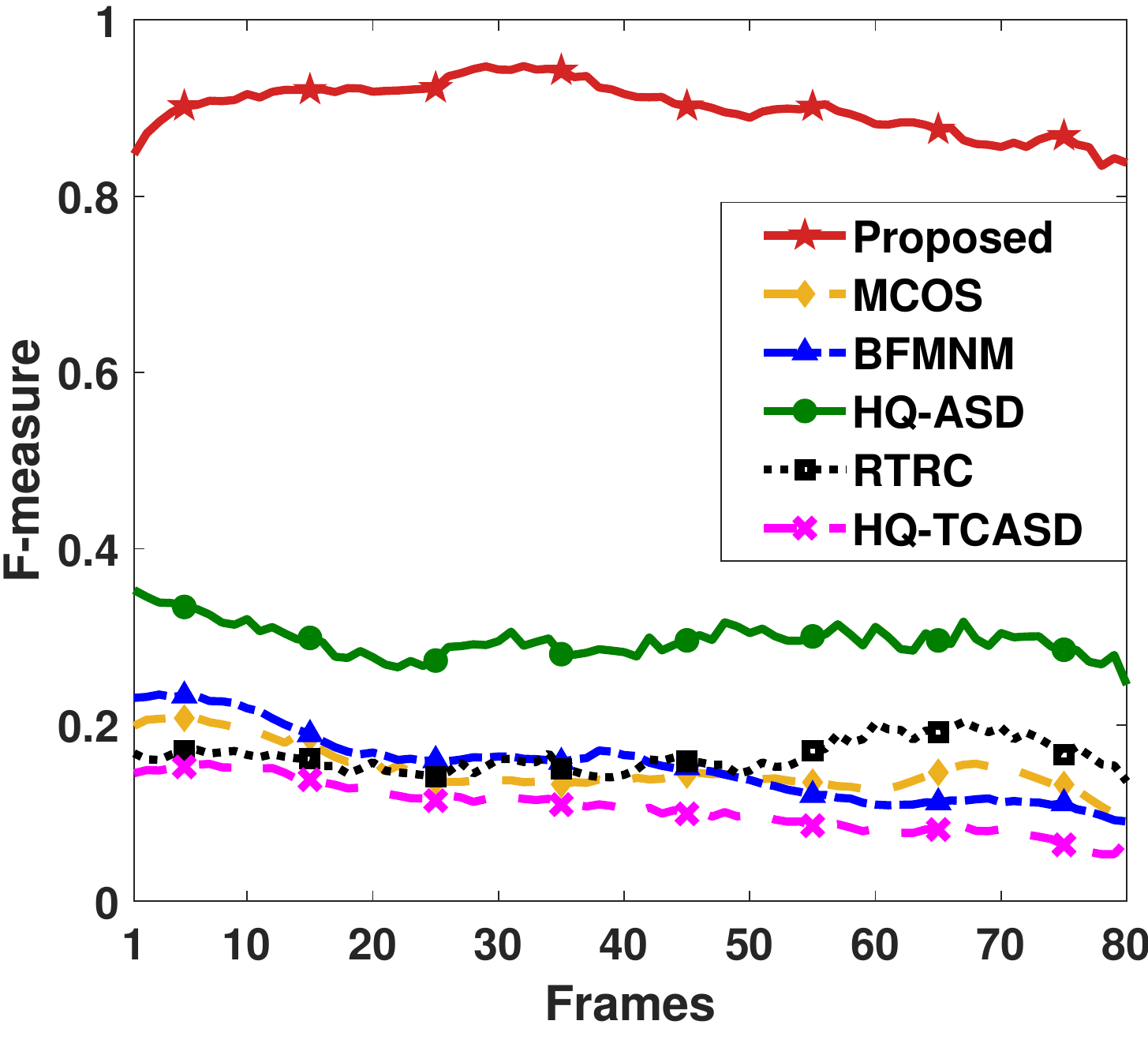}%
	  \label{subfig: plot Office by frame}}
	\subfloat[\textit{Pedestrians} with $70\%$  missing pixels]{ \includegraphics[width=0.333\textwidth]{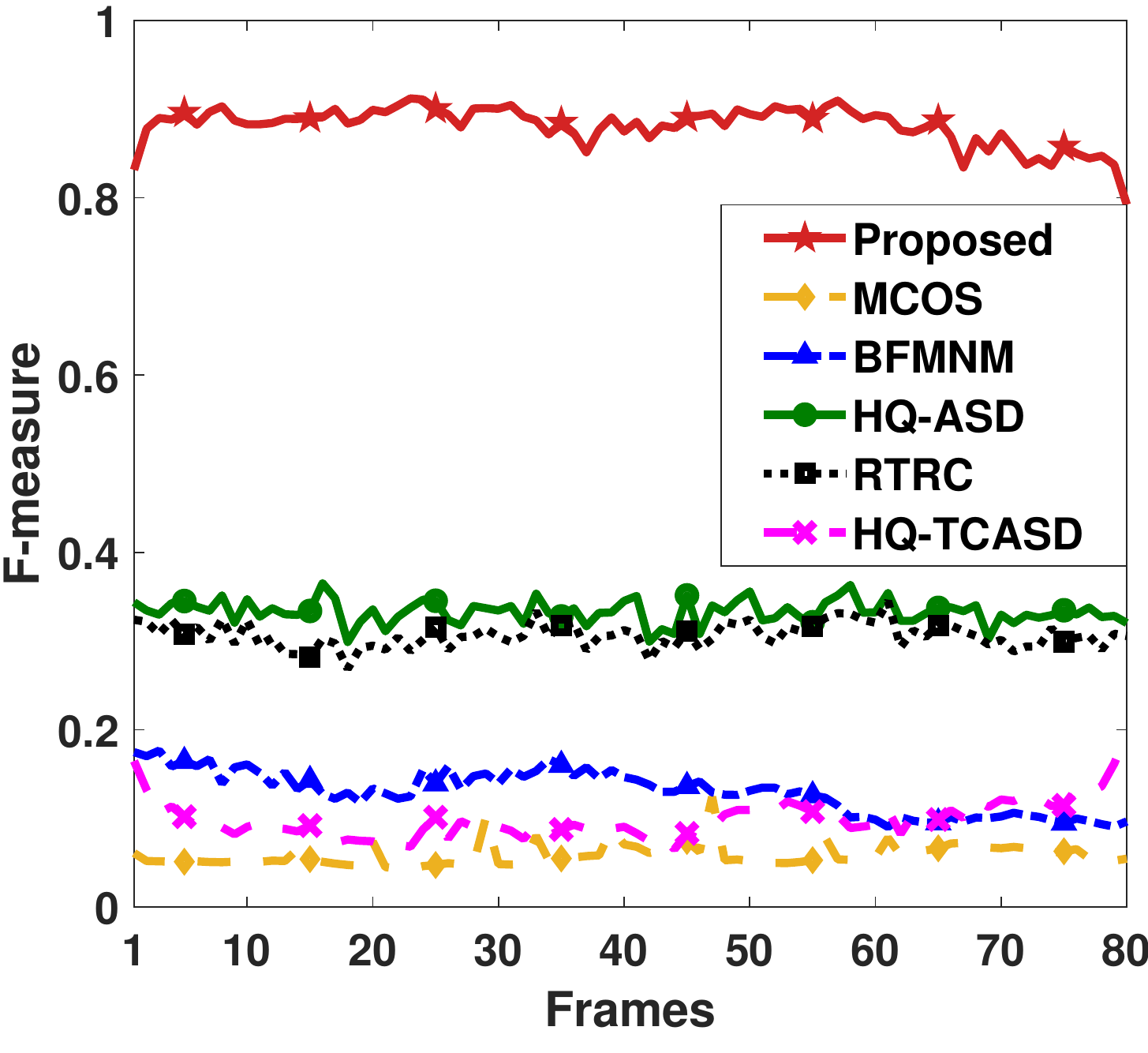}%
	  \label{subfig: plot Pedestrians by frame}}  
\caption{Foreground detection performance comparison by frames for different methods: (a) \textit{Highway} with $70\%$ missing pixels; (b) \textit{Office} with $70\%$  missing pixels; (c) \textit{Pedestrians} with $70\%$  missing pixels.} 
 \label{fig: foreground vs Frame}
\end{figure}
The foreground detection results from the case of $70\%$ noise ratio are demonstrated to show the visualization  result. The visualizations of one frame from each video data set for different methods are shown in Figure~\ref{fig: foreground visualization}. In general, our method can detect the most accurate foreground among all benchmarks even though the video is missing. For the foreground masks subtracted from MCOS, BFMNM, and HQ-TCASD, a lot of noise remains in the foreground causing by the missing pixels, where the moving objects are not well detected either. The results from HQ-ASD and RTRC can detect the shape of the moving objects. However, the detected foreground is not as complete as our result.

Since the quantitative results in Table~\ref{tab: foreground} are the average performance of each algorithm for 80 frames in the video,   the performance variability for each frame is demonstrated.   For the case of  $70\%$ missing ratio, Figure~\ref{fig: foreground vs Frame} shows F-measure of each frame in videos of \textit{Highway},  \textit{Office}, and  \textit{Pedestrians}. This result shows that the proposed method is very stable since the variation among different frames is quite small. In addition, our proposed method can achieve  much better performance than all other benchmark methods for all frames.

\section{Conclusion} \label{sec: conclusion}
 In this article, a new smooth robust tensor completion is developed for background/foreground separation with missing pixels. The proposed SRTC simultaneously recovers the video data and  decomposes it into low-rank and smooth components, respectively. To achieve the solutions efficiently, a tensor proximal alternating minimization (tenPAM) algorithm for SRTC is implemented. The global convergence of the tenPAM algorithm has also been established. The empirical convergence experiment shows that the proposed SRTC can converge  and run efficiently in practice. The background subtraction and foreground detection results on simulated video data demonstrate that our method outperforms MCOS, BFMNM, HQ-ASD, RTRC, and HQ-TCASD, which are state-of-the-art algorithms in the literature.  These results also  illustrate the effectiveness of  Tucker decomposition for the low-rank tensor and total variation regularization for the smooth tensor.   




\acks{Mr. Shen and Dr. Kong are supported by the Office of Naval Research under Award Number N00014-18-1-2794, and the Department of Defense under Award Number N00014-19-1-2728. Dr. Xie has been supported in part by the National Science Foundation grants 2046426 and 2153607.}



\appendix



 \section*{Appendix A. Additional Theoretical Results}
 \subsection*{Appendix A1. Convergence of $\BFU$}\label{app: theorem}
 In our main paper, the convergence of the sequence $\{\BFU_{k}\BFU_{k}^{\top}\}_{k>0}$ from our Algorithm~\ref{alg: PAM} can be derived. The following theorem shows how we can get the convergent sequence of $\{\BFU_{k,new}\}_{k>0}$ from $\{\BFU_{k}\BFU_{k}^{\top}\}_{k>0}$.
\begin{theorem} \label{thm: convergence of U}
Let $\BFU_*\BFU_*^{\top}=\{\BFU_*^{(1)}\BFU_*^{(1)\top},\BFU_*^{(2)}\BFU_*^{(2)\top},\dots,\BFU_*^{(N)}\BFU_*^{(N)\top}\}$ is the limit point of Algorithm~\ref{alg: PAM}. By SVD, $\BFU_k^{(n)\top}\BFU_*^{(n)}=\BFW\Sigma\BFV$. $\BFD_k^{(n)}=\BFW\BFV$. Now let $\BFU_{k,new}^{(n)}=\BFU_{k}^{(n)}\BFD_k^{(n)}$, then 
\[\underset{k\to\infty}{\lim} \BFU_{k,new}^{(n)}=\BFU_*^{(n)}\]
\end{theorem}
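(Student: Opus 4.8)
The plan is to exploit the convergence of the projection-matrix sequence already established in Theorem~\ref{thm: convergence}, namely $\BFU_k^{(n)}\BFU_k^{(n)\top}\to\BFU_*^{(n)}\BFU_*^{(n)\top}$ for each $n$, and to transfer it to the aligned factors $\BFU_{k,new}^{(n)}=\BFU_k^{(n)}\BFD_k^{(n)}$. The key observation is that $\BFD_k^{(n)}=\BFW\BFV^{\top}$, built from the (thin) SVD $\BFU_k^{(n)\top}\BFU_*^{(n)}=\BFW\Sigma\BFV^{\top}$, is precisely the orthogonal factor solving the Procrustes problem $\min_{\BFR^{\top}\BFR=\BFI}\|\BFU_k^{(n)}\BFR-\BFU_*^{(n)}\|_F$, i.e. the rotation that best aligns the basis $\BFU_k^{(n)}$ with $\BFU_*^{(n)}$. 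Intuitively, once the column spaces coincide in the limit, the Procrustes-optimal rotation forces the aligned bases to coincide too; this is exactly what is needed because, by Proposition~\ref{lemma: projection matrices}, only $\BFU_k^{(n)}\BFU_k^{(n)\top}$ (not $\BFU_k^{(n)}$ itself) is pinned down.

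First I would compute $\|\BFU_{k,new}^{(n)}-\BFU_*^{(n)}\|_F^2$ in closed form. Using $\BFU_k^{(n)\top}\BFU_k^{(n)}=\BFI$ and $\BFD_k^{(n)\top}\BFD_k^{(n)}=\BFI$ one gets $\|\BFU_{k,new}^{(n)}\|_F^2=\|\BFU_*^{(n)}\|_F^2=r_n$, so only the cross term survives:
\begin{equation*}
\|\BFU_{k,new}^{(n)}-\BFU_*^{(n)}\|_F^2 = 2r_n - 2\,\mathrm{Tr}\big(\BFD_k^{(n)\top}\BFU_k^{(n)\top}\BFU_*^{(n)}\big).
\end{equation*}
Substituting $\BFU_k^{(n)\top}\BFU_*^{(n)}=\BFW\Sigma\BFV^{\top}$ and $\BFD_k^{(n)\top}=\BFV\BFW^{\top}$ collapses the trace to $\mathrm{Tr}(\BFV\Sigma\BFV^{\top})=\mathrm{Tr}(\Sigma)=\sum_i\sigma_i$, where $\sigma_i$ are the singular values of $\BFU_k^{(n)\top}\BFU_*^{(n)}$; hence $\|\BFU_{k,new}^{(n)}-\BFU_*^{(n)}\|_F^2=2\sum_i(1-\sigma_i)$.

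Next I would connect this to the known convergence. Reusing the identity behind~\eqref{eq: f-norm to trace} gives $\|\BFU_k^{(n)}\BFU_k^{(n)\top}-\BFU_*^{(n)}\BFU_*^{(n)\top}\|_F^2 = 2\big(r_n - \mathrm{Tr}(\BFU_*^{(n)}\BFU_*^{(n)\top}\BFU_k^{(n)}\BFU_k^{(n)\top})\big)$, and the trace here equals $\|\BFU_k^{(n)\top}\BFU_*^{(n)}\|_F^2=\sum_i\sigma_i^2$, so the projection distance equals $2\sum_i(1-\sigma_i^2)$. Since both factors have orthonormal columns, each $\sigma_i\in[0,1]$ (they are cosines of principal angles, bounded by $\|\BFU_k^{(n)\top}\BFU_*^{(n)}\|\le\|\BFU_k^{(n)}\|\,\|\BFU_*^{(n)}\|=1$), whence $1-\sigma_i\le 1-\sigma_i^2$ termwise. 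Combining the two displays yields
\begin{equation*}
\|\BFU_{k,new}^{(n)}-\BFU_*^{(n)}\|_F^2 = 2\sum_i(1-\sigma_i) \le 2\sum_i(1-\sigma_i^2) = \|\BFU_k^{(n)}\BFU_k^{(n)\top}-\BFU_*^{(n)}\BFU_*^{(n)\top}\|_F^2,
\end{equation*}
whose right-hand side tends to $0$ by Theorem~\ref{thm: convergence}. Letting $k\to\infty$ gives $\BFU_{k,new}^{(n)}\to\BFU_*^{(n)}$.

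The main obstacle is conceptual rather than computational: one must recognize that the raw sequence $\{\BFU_k^{(n)}\}$ need not converge and therefore choose the correct per-iterate rotation $\BFD_k^{(n)}$ before any limit can be taken. After that, the technical crux is the elementary inequality $1-\sigma_i\le 1-\sigma_i^2$, which both requires and uses $\sigma_i\in[0,1]$; this is precisely the step that upgrades closeness of the projection matrices to closeness of the aligned bases. I would close by noting that the chain above in fact delivers the rate $\|\BFU_{k,new}^{(n)}-\BFU_*^{(n)}\|_F\le\|\BFU_k^{(n)}\BFU_k^{(n)\top}-\BFU_*^{(n)}\BFU_*^{(n)\top}\|_F$, so the aligned factors converge at least as fast as the projection matrices.
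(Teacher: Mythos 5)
Your proposal is correct and follows essentially the same route as the paper's own proof: identify $\BFD_k^{(n)}$ as the Orthogonal Procrustes solution, reduce $\|\BFU_{k,new}^{(n)}-\BFU_*^{(n)}\|_F^2$ to $2\sum_i(1-\sigma_i)$, and bound it by $2\sum_i(1-\sigma_i^2)=\|\BFU_k^{(n)}\BFU_k^{(n)\top}-\BFU_*^{(n)}\BFU_*^{(n)\top}\|_F^2$ using $\sigma_i\in[0,1]$ (you verify this via submultiplicativity of the operator norm, the paper via a Rayleigh-quotient argument — equivalent). Your closing remark that the chain also yields the quantitative rate $\|\BFU_{k,new}^{(n)}-\BFU_*^{(n)}\|_F\le\|\BFU_k^{(n)}\BFU_k^{(n)\top}-\BFU_*^{(n)}\BFU_*^{(n)\top}\|_F$ is a nice observation the paper leaves implicit.
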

\begin{proof}
For any $n$, we have that $\underset{k\to\infty}{\lim} \BFU_k^{(n)}\BFU_k^{(n)\top}=\BFU_*^{(n)}\BFU_*^{(n)\top}$, therefore, $\underset{k\to\infty}{\lim} \|\BFU_k^{(n)}\BFU_k^{(n)\top}- \BFU_*^{(n)}\BFU_*^{(n)\top} \|_F=0$.
\[\BFD_k^{(n)}\in \underset{\BFD \in \mathbb{R}^{r_n\times r_n}}{\arg\min} \|\BFU_*^{(n)} - \BFU_k^{(n)}\BFD\|_F,\]
where $\BFD$ is an orthogonal matrix. This problem is called Orthogonal Procrustes problem.
\begin{align*}
   \|\BFU_*^{(n)} - \BFU_k^{(n)}\BFD\|_F^2 & = \mathrm{Tr}\big((\BFU_*^{(n)} - \BFU_k^{(n)}\BFD)(\BFU_*^{(n)} - \BFU_k^{(n)}\BFD)^{\top}\big) \\
   & = \mathrm{Tr}(\BFU_*^{(n)}\BFU_*^{(n)\top})+\mathrm{Tr}(\BFU_k^{(n)}\BFU_k^{(n)\top}) - 2\mathrm{Tr}(\BFU_k^{(n)\top}\BFU_*^{(n)}\BFD^{\top}) \\
   & = 2 r_n - 2\mathrm{Tr}(\BFU_k^{(n)\top}\BFU_*^{(n)}\BFD^{\top})
\end{align*}
Equivalently,
\[\BFD_k^{(n)}\in \underset{\BFD \in \mathbb{R}^{r_n\times r_n}}{\arg\max} \mathrm{Tr}(\BFU_k^{(n)\top}\BFU_*^{(n)}\BFD^{\top}).\]
By SVD, $\BFU_k^{(n)\top}\BFU_*^{(n)}=\BFW\Sigma\BFV$. $\BFD_k^{(n)}=\BFW\BFV$ will be the optimal solution for the above problem, but it may not be a unique solution.
\begin{align*}
     \underset{\BFD }{\max}\ \mathrm{Tr}(\BFU_k^{(n)\top}\BFU_*^{(n)}\BFD^{\top}) & =\|\BFU_k^{(n)\top}\BFU_*^{(n)}\|_* \\
     & = \sum_{i=1}^{r_n}\sigma_i(\BFU_k^{(n)\top}\BFU_*^{(n)}),
\end{align*}
where $\sigma_i(\cdot)$ is $i$-th largest singular value of a matrix. Next, we claim that $\forall k,n$, $\sigma_i(\BFU_k^{(n)\top}\BFU_*^{(n)})\leq 1$.
\begin{align*}
    \sigma_i(\BFU_k^{(n)\top}\BFU_*^{(n)})  & \leq  \sqrt{\lambda_{\textnormal{max}}(\BFU_k^{(n)\top}\BFU_*^{(n)}\BFU_*^{(n)\top}\BFU_k^{(n)})} \\
    & = \sqrt{\underset{\|\bm{x}\|_2=1 }{\max}\mathrm{Tr} (\bm{x}^{\top}\BFU_k^{(n)\top}\BFU_*^{(n)}\BFU_*^{(n)\top}\BFU_k^{(n)}\bm{x})} \\
    & \leq \sqrt{\underset{\|\bm{y}\|_2=1 }{\max}\mathrm{Tr} (\bm{y}^{\top}\BFU_*^{(n)}\BFU_*^{(n)\top}\bm{y})} \\
    & =\sqrt{\lambda_{\textnormal{max}}(\BFU_*^{(n)}\BFU_*^{(n)\top})}=1
\end{align*} 
where $\lambda_{\textnormal{max}}(\cdot)$ is the maximum eigenvalue of a positive semidefinite matrix. The second inequality is because $\bm{y}=\BFU_k^{(n)}\bm{x}$ may not span the entire $\mathbb{R}^{r_n}$ space.
Finally,
\begin{align*}
 \|\BFU_*^{(n)} - \BFU_k^{(n)}\BFD_k^{(n)}\|_F^2 & = 2r_n -2 \sum_{i=1}^{r_n}\sigma_i(\BFU_k^{(n)\top}\BFU_*^{(n)}) \\
 & \leq 2r_n - 2\sum_{i=1}^{r_n}\sigma_i(\BFU_k^{(n)\top}\BFU_*^{(n)})^2 \\
 & = 2r_n - 2\sum_{i=1}^{r_n}\lambda_{i}(\BFU_k^{(n)\top}\BFU_*^{(n)}\BFU_*^{(n)\top}\BFU_k^{(n)}) \\
  & = 2r_n - 2\mathrm{Tr}(\BFU_k^{(n)\top}\BFU_*^{(n)}\BFU_*^{(n)\top}\BFU_k^{(n)}) \\
 & = \|\BFU_k^{(n)}\BFU_k^{(n)\top} - \BFU_*^{(n)}\BFU_*^{(n)\top}\|_F^2,
\end{align*}
where $\lambda_i(\cdot)$ is $i$-th largest eigenvalue of a matrix. The first inequality is due to $x\geq x^2, \forall x\in [0,1]$.
Now let $\BFU_{k,new}^{(n)}=\BFU_{k}^{(n)}\BFD_k^{(n)}$, then  $\underset{k\to\infty}{\lim} \BFU_{k,new}^{(n)}=\BFU_*^{(n)}.$
\end{proof}

 \subsection*{Appendix A2. Rate of Convergence of tenPAM Algorithm~\ref{alg: PAM}}\label{app: theorem 1}
The following result shows that if we can specify the $\theta$ value in Definition~\ref{df: kl} of function $G(\cdot)$, then we are able to establish the rate of convergence of tenPAM algorithm.
\begin{theorem}[\normalfont\textit{Rate of Convergence}~\citep{attouch2009convergence,bolte2014proximal}] \label{thm: rate of convergence}
Suppose that the desingularizing function of function $G(\cdot)$ is in the form of $\phi(s)=cs^{1-\theta}$ for some positive constant $c>0$. Then one of the following results must hold 
	\begin{enumerate}[label=(\roman*)]
\item If $\theta=0$, then the sequence $\{\BFU_k\BFU_k^{\top},\BFcalS^k,\BFcalX^k\}_{k\geq 0}$ converges after a finite number of iterations;
\item If $\theta \in (0,1/2]$, then there exist $\eta>0$ and $\tau \in [0,1)$ such that $\|(\BFU_k\BFU_k^{\top},\BFcalS^k,\BFcalX^k) - (\BFU_*\BFU_*^{\top},\BFcalS^*,\BFcalX^*)\|_F\leq \eta \tau^k$ ; and
\item If $\theta \in (1/2,1)$, then there exist $\eta>0$ such that 
\[\|(\BFU_k\BFU_k^{\top},\BFcalS^k,\BFcalX^k) - (\BFU_*\BFU_*^{\top},\BFcalS^*,\BFcalX^*)\|_F\leq \eta k^{-\frac{1-\theta}{2\theta-1}}, \]
	\end{enumerate}
	where $(\BFU_*\BFU_*^{\top},\BFcalS^*,\BFcalX^*)$ is the limit point of the sequence $\{\BFU_k\BFU_k^{\top},\BFcalS^k,\BFcalX^k\}_{k\geq 0}$.
\end{theorem}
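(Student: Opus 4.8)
The plan is to invoke the Kurdyka--\L{}ojasiewicz (K\L) rate framework of \citet{attouch2009convergence,bolte2014proximal}, feeding it the three structural facts already proven. Write $z_k \coloneqq (\BFU_k\BFU_k^{\top},\BFcalS^k,\BFcalX^k)$, let $z_*$ be the limit point supplied by Theorem~\ref{thm: convergence}, and set $r_k \coloneqq G(z_k)-G(z_*)\ge 0$ and $d_k \coloneqq \|z_{k+1}-z_k\|_F$. By Theorem~\ref{thm: key1} the gap $r_k$ is nonincreasing with $r_k\downarrow 0$, and by Theorem~\ref{thm: convergence}(iii) the total path length $\sum_k d_k$ is finite, so the tails $S_k\coloneqq\sum_{j\ge k}d_j$ are well defined and, by the triangle inequality, dominate the quantity to be bounded: $\|z_k-z_*\|_F\le S_k$. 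Hence it suffices to establish the claimed decay for $S_k$, and for this I would first obtain a clean scalar recursion for $r_k$.

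First I would combine the three lemmas. Sufficient decrease (Lemma~\ref{lemma: Sufficient decrease property}) gives $\rho\,d_{k-1}^2\le r_{k-1}-r_k$; the subgradient bound (Lemma~\ref{lemma: subgradient lower bound}) gives $\mathrm{d}(\mathbf{0},\partial G(z_k))\le \rho_1 d_{k-1}$; and, since $G$ has the K\L{} property (Lemma~\ref{lemma: G KL property}) with desingularizing function $\phi(s)=cs^{1-\theta}$, Definition~\ref{df: kl} yields $\phi'(r_k)\,\mathrm{d}(\mathbf{0},\partial G(z_k))\ge 1$ whenever $r_k>0$. Eliminating $d_{k-1}$ and $\mathrm{d}(\mathbf{0},\partial G(z_k))$ between these, and using $\phi'(s)=c(1-\theta)s^{-\theta}$, produces the single inequality
\[
r_k-r_{k+1}\ \ge\ M\,r_{k+1}^{\,2\theta},\qquad M\coloneqq \frac{\rho}{\rho_1^2 c^2(1-\theta)^2},
\]
valid for all large $k$. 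This is the engine of the whole proof. To transfer a rate on $r_k$ back to the iterates, I would use the finite-length estimate underlying Theorem~\ref{thm: convergence}, namely a tail inequality of the form $S_k\le \tfrac12 S_{k-1}+C\,\phi(r_k)$, obtained by telescoping the concavity inequality $\phi(r_k)-\phi(r_{k+1})\ge \phi'(r_k)(r_k-r_{k+1})$ together with the Young step $\sqrt{xy}\le\tfrac14 x+y$.

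The three regimes then separate according to whether $2\theta$ is $=0$, $\le 1$, or $>1$. If $\theta=0$ then $\phi'\equiv c$, so the K\L{} inequality forces $\mathrm{d}(\mathbf{0},\partial G(z_k))\ge 1/c$ whenever $r_k>0$; but $\mathrm{d}(\mathbf{0},\partial G(z_k))\to 0$ by Theorem~\ref{thm: key1}, so $r_k$ and hence $d_k$ vanish after finitely many steps, giving finite termination. If $\theta\in(0,1/2]$ then $2\theta\le 1$, so for large $k$ we have $r_{k+1}^{2\theta}\ge r_{k+1}$ and the recursion becomes $r_{k+1}\le (1+M)^{-1}r_k$: thus $r_k$, and therefore $\phi(r_k)$, decay geometrically, and unrolling $S_k\le\tfrac12 S_{k-1}+C\phi(r_k)$ gives $\|z_k-z_*\|_F\le S_k\le \eta\tau^k$. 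If $\theta\in(1/2,1)$ then $2\theta>1$, and a standard discrete Bihari/Gr\"onwall-type sequence lemma converts $r_k-r_{k+1}\ge M r_{k+1}^{2\theta}$ into $r_k=O(k^{-1/(2\theta-1)})$; then $\phi(r_k)=O(k^{-(1-\theta)/(2\theta-1)})$, and since a geometrically weighted sum of a polynomially decaying source remains of the same polynomial order, $S_k=O(\phi(r_k))$, which is exactly the stated exponent $k^{-(1-\theta)/(2\theta-1)}$.

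I expect the genuine obstacle to be the sublinear case $\theta\in(1/2,1)$: turning the nonlinear recursion $r_k-r_{k+1}\ge M r_{k+1}^{2\theta}$ into the sharp polynomial rate $r_k=O(k^{-1/(2\theta-1)})$. This requires the careful monotonicity argument on the sequence $r_k^{1-2\theta}$ (or a dichotomy comparing consecutive ratios), which is the delicate technical core; the geometric and finite-termination cases are routine once the engine inequality is in hand. Since the three hypotheses are verified in Lemmas~\ref{lemma: Sufficient decrease property}--\ref{lemma: G KL property} and the desingularizing function is assumed of the form $\phi(s)=cs^{1-\theta}$, an equally legitimate route is simply to quote the general rate theorem of \citet{attouch2009convergence} and \citet[Section~3.2]{bolte2014proximal} verbatim.
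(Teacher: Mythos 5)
Your proposal is correct, and its closing sentence is in fact exactly what the paper does: the paper offers no proof of this theorem at all, stating it as a direct citation of \citet{attouch2009convergence} and \citet[Section~3.2]{bolte2014proximal}, with applicability resting on the three hypotheses already verified in Lemmas~\ref{lemma: Sufficient decrease property}, \ref{lemma: subgradient lower bound}, and \ref{lemma: G KL property}. What you do differently is reconstruct the standard K\L{} rate argument in full: combining sufficient decrease, the subgradient bound, and the K\L{} inequality to get the scalar recursion $r_k-r_{k+1}\ge M\,r_{k+1}^{2\theta}$, transferring it to the iterates via the tail estimate $S_k\le\tfrac12 S_{k-1}+C\phi(r_k)$, and splitting into the three regimes. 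All of these steps are sound (indices shift consistently, the constant $M$ is right, and you correctly flag the $\theta\in(1/2,1)$ case as the only technically delicate one). The self-contained route buys transparency — it shows precisely which of the paper's lemmas feed which inequality, and why the exponent $(1-\theta)/(2\theta-1)$ appears — at the cost of reproving a result the cited works already state in exactly this generality; the paper's citation-only route buys brevity but leaves the reader to check that the hypotheses of the quoted theorem match Lemmas~\ref{lemma: Sufficient decrease property}--\ref{lemma: G KL property}. The only point worth making explicit in a fully rigorous write-up is that the K\L{} inequality must be invoked uniformly on a neighborhood of the limit point $(\BFU_*\BFU_*^{\top},\BFcalS^*,\BFcalX^*)$ with $G(z_k)\to G(z_*)$, which is guaranteed here by Theorem~\ref{thm: convergence}(i) and Theorem~\ref{thm: key1}(v); this is standard and does not affect your argument.
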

However, it is difficult to specify the $\theta$ of function $G(\cdot)$. Thus,  the rate of convergence of tenPAM algorithm is unknown in general. This could be one of our future research directions.
\vskip 0.2in
\bibliography{sample}

\begin{thebibliography}{47}
\providecommand{\natexlab}[1]{#1}
\providecommand{\url}[1]{\texttt{#1}}
\expandafter\ifx\csname urlstyle\endcsname\relax
  \providecommand{\doi}[1]{doi: #1}\else
  \providecommand{\doi}{doi: \begingroup \urlstyle{rm}\Url}\fi

\bibitem[Attouch and Bolte(2009)]{attouch2009convergence}
Hedy Attouch and J{\'e}r{\^o}me Bolte.
\newblock On the convergence of the proximal algorithm for nonsmooth functions
  involving analytic features.
\newblock \emph{Mathematical Programming}, 116\penalty0 (1-2):\penalty0 5--16,
  2009.

\bibitem[Attouch et~al.(2010)Attouch, Bolte, Redont, and
  Soubeyran]{attouch2010proximal}
H{\'e}dy Attouch, J{\'e}r{\^o}me Bolte, Patrick Redont, and Antoine Soubeyran.
\newblock Proximal alternating minimization and projection methods for
  nonconvex problems: An approach based on the kurdyka-{\l}ojasiewicz
  inequality.
\newblock \emph{Mathematics of Operations Research}, 35\penalty0 (2):\penalty0
  438--457, 2010.

\bibitem[Attouch et~al.(2013)Attouch, Bolte, and
  Svaiter]{attouch2013convergence}
Hedy Attouch, J{\'e}r{\^o}me Bolte, and Benar~Fux Svaiter.
\newblock Convergence of descent methods for semi-algebraic and tame problems:
  proximal algorithms, forward--backward splitting, and regularized
  gauss--seidel methods.
\newblock \emph{Mathematical Programming}, 137\penalty0 (1-2):\penalty0
  91--129, 2013.

\bibitem[Bolte et~al.(2007)Bolte, Daniilidis, and Lewis]{bolte2007lojasiewicz}
J{\'e}r{\^o}me Bolte, Aris Daniilidis, and Adrian Lewis.
\newblock The {\l}ojasiewicz inequality for nonsmooth subanalytic functions
  with applications to subgradient dynamical systems.
\newblock \emph{SIAM Journal on Optimization}, 17\penalty0 (4):\penalty0
  1205--1223, 2007.

\bibitem[Bolte et~al.(2014)Bolte, Sabach, and Teboulle]{bolte2014proximal}
J{\'e}r{\^o}me Bolte, Shoham Sabach, and Marc Teboulle.
\newblock Proximal alternating linearized minimization for nonconvex and
  nonsmooth problems.
\newblock \emph{Mathematical Programming}, 146\penalty0 (1-2):\penalty0
  459--494, 2014.

\bibitem[Bouwmans et~al.(2017)Bouwmans, Sobral, Javed, Jung, and
  Zahzah]{bouwmans2017decomposition}
Thierry Bouwmans, Andrews Sobral, Sajid Javed, Soon~Ki Jung, and El-Hadi
  Zahzah.
\newblock Decomposition into low-rank plus additive matrices for
  background/foreground separation: A review for a comparative evaluation with
  a large-scale dataset.
\newblock \emph{Computer Science Review}, 23:\penalty0 1--71, 2017.

\bibitem[Cand{\`e}s and Recht(2009)]{candes2009exact}
Emmanuel~J Cand{\`e}s and Benjamin Recht.
\newblock Exact matrix completion via convex optimization.
\newblock \emph{Foundations of Computational mathematics}, 9\penalty0
  (6):\penalty0 717--772, 2009.

\bibitem[Cand{\`e}s et~al.(2011)Cand{\`e}s, Li, Ma, and
  Wright]{candes2011robust}
Emmanuel~J Cand{\`e}s, Xiaodong Li, Yi~Ma, and John Wright.
\newblock Robust principal component analysis?
\newblock \emph{Journal of the ACM (JACM)}, 58\penalty0 (3):\penalty0 1--37,
  2011.

\bibitem[Cao et~al.(2016)Cao, Wang, Sun, Meng, Yang, Cichocki, and
  Xu]{cao2016total}
Wenfei Cao, Yao Wang, Jian Sun, Deyu Meng, Can Yang, Andrzej Cichocki, and
  Zongben Xu.
\newblock Total variation regularized tensor rpca for background subtraction
  from compressive measurements.
\newblock \emph{IEEE Transactions on Image Processing}, 25\penalty0
  (9):\penalty0 4075--4090, 2016.

\bibitem[Cao et~al.(2015)Cao, Yang, and Guo]{cao2015total}
Xiaochun Cao, Liang Yang, and Xiaojie Guo.
\newblock Total variation regularized rpca for irregularly moving object
  detection under dynamic background.
\newblock \emph{IEEE transactions on cybernetics}, 46\penalty0 (4):\penalty0
  1014--1027, 2015.

\bibitem[Chan et~al.(2011)Chan, Khoshabeh, Gibson, Gill, and
  Nguyen]{chan2011augmented}
Stanley~H Chan, Ramsin Khoshabeh, Kristofor~B Gibson, Philip~E Gill, and
  Truong~Q Nguyen.
\newblock An augmented lagrangian method for total variation video restoration.
\newblock \emph{IEEE Transactions on Image Processing}, 20\penalty0
  (11):\penalty0 3097--3111, 2011.

\bibitem[Chen et~al.(2020)Chen, Ma, Man-Cho~So, and Zhang]{chen2020proximal}
Shixiang Chen, Shiqian Ma, Anthony Man-Cho~So, and Tong Zhang.
\newblock Proximal gradient method for nonsmooth optimization over the stiefel
  manifold.
\newblock \emph{SIAM Journal on Optimization}, 30\penalty0 (1):\penalty0
  210--239, 2020.

\bibitem[Chen et~al.(2015)Chen, Xu, Caramanis, and Sanghavi]{chen2015matrix}
Yudong Chen, Huan Xu, Constantine Caramanis, and Sujay Sanghavi.
\newblock Matrix completion with column manipulation: Near-optimal
  sample-robustness-rank tradeoffs.
\newblock \emph{IEEE Transactions on Information Theory}, 62\penalty0
  (1):\penalty0 503--526, 2015.

\bibitem[Combettes and Pesquet(2011)]{combettes2011proximal}
Patrick~L Combettes and Jean-Christophe Pesquet.
\newblock Proximal splitting methods in signal processing.
\newblock In \emph{Fixed-point algorithms for inverse problems in science and
  engineering}, pages 185--212. Springer, 2011.

\bibitem[Donoho(1995)]{donoho1995noising}
David~L Donoho.
\newblock De-noising by soft-thresholding.
\newblock \emph{IEEE transactions on information theory}, 41\penalty0
  (3):\penalty0 613--627, 1995.

\bibitem[Fan et~al.(2017)Fan, Kuang, and Qiao]{fan2017fast}
Haiyan Fan, Gangyao Kuang, and Linbo Qiao.
\newblock Fast tensor principal component analysis via proximal alternating
  direction method with vectorized technique.
\newblock \emph{Applied Mathematics}, 8\penalty0 (01):\penalty0 77, 2017.

\bibitem[Firtha et~al.(2008)Firtha, Fekete, Kaszab, Gillay, Nogula-Nagy,
  Kov{\'a}cs, and Kantor]{firtha2008methods}
Ferenc Firtha, Andr{\'a}s Fekete, T{\'\i}mea Kaszab, B{\'\i}borka Gillay,
  M{\'e}dea Nogula-Nagy, Zolt{\'a}n Kov{\'a}cs, and David~B Kantor.
\newblock Methods for improving image quality and reducing data load of nir
  hyperspectral images.
\newblock \emph{Sensors}, 8\penalty0 (5):\penalty0 3287--3298, 2008.

\bibitem[Goldstein and Osher(2009)]{goldstein2009split}
Tom Goldstein and Stanley Osher.
\newblock The split bregman method for l1-regularized problems.
\newblock \emph{SIAM journal on imaging sciences}, 2\penalty0 (2):\penalty0
  323--343, 2009.

\bibitem[He and Atia(2020)]{he2020robust}
Yicong He and George~K Atia.
\newblock Robust low-tubal-rank tensor completion based on tensor factorization
  and maximum correntopy criterion.
\newblock \emph{arXiv preprint arXiv:2010.11740}, 2020.

\bibitem[He et~al.(2019)He, Wang, Li, Qin, and Chen]{he2019robust}
Yicong He, Fei Wang, Yingsong Li, Jing Qin, and Badong Chen.
\newblock Robust matrix completion via maximum correntropy criterion and
  half-quadratic optimization.
\newblock \emph{IEEE Transactions on Signal Processing}, 68:\penalty0 181--195,
  2019.

\bibitem[Hong and Luo(2017)]{hong2017linear}
Mingyi Hong and Zhi-Quan Luo.
\newblock On the linear convergence of the alternating direction method of
  multipliers.
\newblock \emph{Mathematical Programming}, 162\penalty0 (1):\penalty0 165--199,
  2017.

\bibitem[Huang et~al.(2021)Huang, Ma, and Lai]{huang2021robust}
Minhui Huang, Shiqian Ma, and Lifeng Lai.
\newblock Robust low-rank matrix completion via an alternating manifold
  proximal gradient continuation method.
\newblock \emph{IEEE Transactions on Signal Processing}, 69:\penalty0
  2639--2652, 2021.

\bibitem[Kilmer and Martin(2011)]{kilmer2011factorization}
Misha~E Kilmer and Carla~D Martin.
\newblock Factorization strategies for third-order tensors.
\newblock \emph{Linear Algebra and its Applications}, 435\penalty0
  (3):\penalty0 641--658, 2011.

\bibitem[Kolda and Bader(2009)]{kolda2009tensor}
Tamara~G Kolda and Brett~W Bader.
\newblock Tensor decompositions and applications.
\newblock \emph{SIAM review}, 51\penalty0 (3):\penalty0 455--500, 2009.

\bibitem[Lewis and Malick(2008)]{lewis2008alternating}
Adrian~S Lewis and J{\'e}r{\^o}me Malick.
\newblock Alternating projections on manifolds.
\newblock \emph{Mathematics of Operations Research}, 33\penalty0 (1):\penalty0
  216--234, 2008.

\bibitem[Li and So(2021)]{li2021robust}
Xiao~Peng Li and Hing~Cheung So.
\newblock Robust low-rank tensor completion based on tensor ring rank via
  -norm.
\newblock \emph{IEEE Transactions on Signal Processing}, 69:\penalty0
  3685--3698, 2021.

\bibitem[Li et~al.(2021)Li, Hu, Nie, Wang, and Li]{li2021matrix}
Ziheng Li, Zhanxuan Hu, Feiping Nie, Rong Wang, and Xuelong Li.
\newblock Matrix completion with column outliers and sparse noise.
\newblock \emph{Information Sciences}, 573:\penalty0 125--140, 2021.

\bibitem[Liu et~al.(2012)Liu, Musialski, Wonka, and Ye]{liu2012tensor}
Ji~Liu, Przemyslaw Musialski, Peter Wonka, and Jieping Ye.
\newblock Tensor completion for estimating missing values in visual data.
\newblock \emph{IEEE transactions on pattern analysis and machine
  intelligence}, 35\penalty0 (1):\penalty0 208--220, 2012.

\bibitem[Liu et~al.(2018)Liu, Chen, and Zhu]{liu2018improved}
Yipeng Liu, Longxi Chen, and Ce~Zhu.
\newblock Improved robust tensor principal component analysis via low-rank core
  matrix.
\newblock \emph{IEEE Journal of Selected Topics in Signal Processing},
  12\penalty0 (6):\penalty0 1378--1389, 2018.

\bibitem[Lu et~al.(2016)Lu, Feng, Chen, Liu, Lin, and Yan]{lu2016tensor}
Canyi Lu, Jiashi Feng, Yudong Chen, Wei Liu, Zhouchen Lin, and Shuicheng Yan.
\newblock Tensor robust principal component analysis: Exact recovery of
  corrupted low-rank tensors via convex optimization.
\newblock In \emph{Proceedings of the IEEE conference on computer vision and
  pattern recognition}, pages 5249--5257, 2016.

\bibitem[Maddalena and Petrosino(2015)]{maddalena2015towards}
Lucia Maddalena and Alfredo Petrosino.
\newblock Towards benchmarking scene background initialization.
\newblock In \emph{International conference on image analysis and processing},
  pages 469--476. Springer, 2015.

\bibitem[Petrovai(2017)]{petrovai2017global}
Diana~M{\u{a}}rginean Petrovai.
\newblock The global convergence of the algorithms described by multifunctions.
\newblock \emph{Procedia Engineering}, 181:\penalty0 924--927, 2017.

\bibitem[Ren et~al.(2021)Ren, Chen, Sun, and Sun]{ren2021incremental}
Pu~Ren, Xinyu Chen, Lijun Sun, and Hao Sun.
\newblock Incremental bayesian matrix/tensor learning for structural monitoring
  data imputation and response forecasting.
\newblock \emph{Mechanical Systems and Signal Processing}, 158:\penalty0
  107734, 2021.

\bibitem[Shang et~al.(2017)Shang, Cheng, Liu, Luo, and Lin]{shang2017bilinear}
Fanhua Shang, James Cheng, Yuanyuan Liu, Zhi-Quan Luo, and Zhouchen Lin.
\newblock Bilinear factor matrix norm minimization for robust pca: Algorithms
  and applications.
\newblock \emph{IEEE transactions on pattern analysis and machine
  intelligence}, 40\penalty0 (9):\penalty0 2066--2080, 2017.

\bibitem[Shen et~al.(2021)Shen, Kamath, and Hahn~Choo]{shen2021robust}
Bo~Shen, Rakesh~R Kamath, and Zhenyu James~Kong Hahn~Choo.
\newblock Robust tensor decomposition based background/foreground separation in
  noisy videos and its applications in additive manufacturing.
\newblock \emph{TechRXiv}, 2021.

\bibitem[Sobral et~al.(2016)Sobral, Bouwmans, and Zahzah]{sobral2016lrslibrary}
Andrews Sobral, Thierry Bouwmans, and El-hadi Zahzah.
\newblock Lrslibrary: Low-rank and sparse tools for background modeling and
  subtraction in videos.
\newblock \emph{Robust Low-Rank and Sparse Matrix Decomposition: Applications
  in Image and Video Processing}, 2016.

\bibitem[Strang(2016)]{strang2016introduction}
G.~Strang.
\newblock \emph{Introduction to Linear Algebra}.
\newblock Wellesley-Cambridge Press, 2016.
\newblock ISBN 9780980232776.
\newblock URL \url{https://books.google.com/books?id=efbxjwEACAAJ}.

\bibitem[Wang et~al.(2020)Wang, Ma, and Xue]{wang2020riemannian}
Bokun Wang, Shiqian Ma, and Lingzhou Xue.
\newblock Riemannian stochastic proximal gradient methods for nonsmooth
  optimization over the stiefel manifold.
\newblock \emph{arXiv preprint arXiv:2005.01209}, 2020.

\bibitem[Wang et~al.(2017)Wang, Peng, Zhao, Leung, Zhao, and
  Meng]{wang2017hyperspectral}
Yao Wang, Jiangjun Peng, Qian Zhao, Yee Leung, Xi-Le Zhao, and Deyu Meng.
\newblock Hyperspectral image restoration via total variation regularized
  low-rank tensor decomposition.
\newblock \emph{IEEE Journal of Selected Topics in Applied Earth Observations
  and Remote Sensing}, 11\penalty0 (4):\penalty0 1227--1243, 2017.

\bibitem[Wang et~al.(2014)Wang, Jodoin, Porikli, Konrad, Benezeth, and
  Ishwar]{wang2014cdnet}
Yi~Wang, Pierre-Marc Jodoin, Fatih Porikli, Janusz Konrad, Yannick Benezeth,
  and Prakash Ishwar.
\newblock Cdnet 2014: An expanded change detection benchmark dataset.
\newblock In \emph{Proceedings of the IEEE conference on computer vision and
  pattern recognition workshops}, pages 387--394, 2014.

\bibitem[Wang et~al.(2004)Wang, Bovik, Sheikh, and Simoncelli]{wang2004image}
Zhou Wang, Alan~C Bovik, Hamid~R Sheikh, and Eero~P Simoncelli.
\newblock Image quality assessment: from error visibility to structural
  similarity.
\newblock \emph{IEEE transactions on image processing}, 13\penalty0
  (4):\penalty0 600--612, 2004.

\bibitem[Xu(2018)]{xu2018convergence}
Yangyang Xu.
\newblock On the convergence of higher-order orthogonal iteration.
\newblock \emph{Linear and Multilinear Algebra}, 66\penalty0 (11):\penalty0
  2247--2265, 2018.

\bibitem[Yang et~al.(2020)Yang, Zhao, Ji, Ma, and Huang]{yang2020low}
Jing-Hua Yang, Xi-Le Zhao, Teng-Yu Ji, Tian-Hui Ma, and Ting-Zhu Huang.
\newblock Low-rank tensor train for tensor robust principal component analysis.
\newblock \emph{Applied Mathematics and Computation}, 367:\penalty0 124783,
  2020.

\bibitem[Zhang et~al.(2019)Zhang, Liu, He, and Zhang]{zhang2019hyperspectral}
Hongyan Zhang, Lu~Liu, Wei He, and Liangpei Zhang.
\newblock Hyperspectral image denoising with total variation regularization and
  nonlocal low-rank tensor decomposition.
\newblock \emph{IEEE Transactions on Geoscience and Remote Sensing},
  58\penalty0 (5):\penalty0 3071--3084, 2019.

\bibitem[Zhang et~al.(2014)Zhang, Ely, Aeron, Hao, and Kilmer]{zhang2014novel}
Zemin Zhang, Gregory Ely, Shuchin Aeron, Ning Hao, and Misha Kilmer.
\newblock Novel methods for multilinear data completion and de-noising based on
  tensor-svd.
\newblock In \emph{Proceedings of the IEEE conference on computer vision and
  pattern recognition}, pages 3842--3849, 2014.

\bibitem[Zhou et~al.(2017)Zhou, Lu, Lin, and Zhang]{zhou2017tensor}
Pan Zhou, Canyi Lu, Zhouchen Lin, and Chao Zhang.
\newblock Tensor factorization for low-rank tensor completion.
\newblock \emph{IEEE Transactions on Image Processing}, 27\penalty0
  (3):\penalty0 1152--1163, 2017.

\bibitem[Zhou et~al.(2012)Zhou, Yang, and Yu]{zhou2012moving}
Xiaowei Zhou, Can Yang, and Weichuan Yu.
\newblock Moving object detection by detecting contiguous outliers in the
  low-rank representation.
\newblock \emph{IEEE transactions on pattern analysis and machine
  intelligence}, 35\penalty0 (3):\penalty0 597--610, 2012.

\end{thebibliography}

\end{document}